\documentclass[journal, 10pt]{IEEEtran}
\pdfminorversion 4
\usepackage{amsfonts}%
\usepackage{amssymb}%
\usepackage{setspace}
\usepackage[dvips]{graphicx}%
\usepackage{amsmath}%
\usepackage[usenames]{color}
\usepackage{algorithmic}
\usepackage{cite}
\usepackage{array}
\newtheorem{theorem}{Theorem}
\newtheorem{corollary}{Corollary}
\newtheorem{lemma}{Lemma}
\newtheorem{assumption}{Assumption}
\newtheorem{definition}{Definition}
\newtheorem{example}{Example}

\newtheorem{remark}{Remark}
\allowdisplaybreaks 

\DeclareMathOperator*{\argmin}{arg\,min}
\DeclareMathOperator*{\argmax}{arg\,max}

\ifodd 1
\newcommand{\ntsp}[1]{{\color{black}#1}}
\else
\newcommand{\ntsp}[1]{#1}
\fi

\ifodd 0
\newcommand{\tsp}[1]{{\color{red}#1}}
\else
\newcommand{\tsp}[1]{#1}
\fi

\newcommand{\comment}[1]{}

\ifodd 0
\newcommand{\remove}[1]{}
\newcommand{\add}[1]{#1}
\else
\newcommand{\remove}[1]{#1}
\newcommand{\add}[1]{}
\fi

\ifodd 1
\newcommand{\rmv}[1]{}
\else
\newcommand{\rmv}[1]{{\color{red}#1}}
\fi

\ifodd 1
\newcommand{\rmvt}[1]{}
\else
\newcommand{\rmvt}[1]{{\color{red}#1}}
\fi

\ifodd 0
\newcommand{\newt}[1]{{\color{blue}#1}}
\else
\newcommand{\newt}[1]{#1}
\fi

\ifodd 0
\newcommand{\newton}[1]{{\color{blue}#1}}
\else
\newcommand{\newton}[1]{#1}
\fi

\ifodd 0
\newcommand{\marrew}[1]{{\color{blue}#1}}
\else
\newcommand{\marrew}[1]{#1}
\fi

\ifodd 0
\newcommand{\rev}[1]{{\color{blue}#1}} 
\newcommand{\cem}[1]{{\color{magenta}#1}} 
\newcommand{\com}[1]{\textbf{\color{red}(COMMENT: #1)}} 
\newcommand{\clar}[1]{\textbf{\color{green}(NEED CLARIFICATION: #1)}}


\else
\newcommand{\rev}[1]{#1}
\newcommand{\cem}[1]{#1}

\newcommand{\com}[1]{}
\newcommand{\clar}[1]{}
\fi


\begin{document}
\title{Distributed Online Learning via Cooperative Contextual Bandits}
\author{\IEEEauthorblockN{Cem Tekin*,~\IEEEmembership{Member,~IEEE}, Mihaela van der Schaar, ~\IEEEmembership{Fellow,~IEEE}\\}
\thanks{C. Tekin and M. van der Schaar are with the Department of Electrical Engineering, UCLA. Email:cmtkn@ucla.edu, mihaela@ee.ucla.edu.}
\thanks{A preliminary version of this work appeared in Allerton 2013. The work is partially supported by the grants NSF CNS 1016081 and  AFOSR DDDAS.}
}

\maketitle

\begin{abstract}
In this paper we propose a novel framework for decentralized, online learning by many learners.  
At each moment of time, an instance characterized by a certain context may arrive to each learner; 
based on the context, the learner can select one of its own actions (which gives a reward and provides information) or request assistance from another learner.  In the latter case, the requester pays a cost and receives the reward but the provider learns the information. 
In our framework, learners are modeled as cooperative contextual bandits.
Each learner seeks to maximize the expected reward from its arrivals, which involves trading off the reward received from its own actions, the information learned from its own actions, the reward received from the actions requested of others and the cost paid for these actions - taking into account what it has learned about the value of assistance from each other learner.
We develop distributed online learning algorithms and provide analytic bounds to compare the efficiency of these with algorithms with the complete knowledge (oracle) benchmark (in which the expected reward of every action in every context is known by every learner).  Our estimates show that regret - the loss incurred by the algorithm - is sublinear in time.  Our theoretical framework can be used in many practical applications including Big Data mining, event detection in surveillance sensor networks and distributed online recommendation systems.
\end{abstract}

\begin{IEEEkeywords}
Online learning, distributed learning, multi-user learning, cooperative learning, contextual bandits, multi-user bandits.
\end{IEEEkeywords}

\vspace{-0.1in}
\section{Introduction}\label{sec:intro}

In this paper we propose a novel framework for online learning by
multiple cooperative and decentralized learners. We assume that an
instance (a data unit), characterized by a context (side) information,
\newton{ arrives} at a learner (processor) which needs to process
it either by using one of its own processing functions or by requesting
another learner (processor) to process it. The learner's goal is to
learn online what is the best processing function which it should
use such that it maximizes its total expected reward for that instance.
\marrew{A data stream is an ordered sequence of instances that can
be read only once or a small number of times using limited computing
and storage capabilities. For example, in a stream mining application, an instance can be the data
unit extracted by a sensor or camera; in a wireless communication
application, an instance can be a packet that needs to be transmitted.}
The context can be anything that provides information about the rewards
to the learners. For example, in stream mining, the context
can be the type of the extracted instance; in wireless communications,
the context can be the channel Signal to Noise Ratio (SNR). The processing
functions in the stream mining application can be the various classification
functions, while in wireless communications they can be the transmission
strategies for sending the packet (Note that the selection of the
processing functions by the learners can be performed based on the
context and not necessarily the instance). The rewards in the stream
mining can be the accuracy associated with the selected classification
function, and in wireless communication they can be the resulting goodput
and expended energy associated with a selected transmission strategy.

\newton{To solve such distributed online learning problems, we define
a new class of multi-armed bandit solutions, which we refer to as
{\em cooperative contextual bandits.}} In the considered scenario,
there is a set of \newton{cooperative learners, each equipped with
a set of processing functions (arms\footnote{We use the terms action and arm interchangeably.}) which can be used
to process the instance}. \tsp{By definition, cooperative learners agree to follow the rules of a prescribed algorithm provided by a designer given that the prescriped algorithm meets the set of constraints imposed by the learners. For instance, these constraints can be privacy constraints, which limits the amount of information a learner knows about the arms of the other learners.} We assume a discrete time model $t=1,2,\ldots$,
where different \newton{instances and associated context information}
arrive to a learner.\footnote{Assuming synchronous agents/learners is common in the decentralized
multi-armed bandit literature \cite{liu2010distributed,tekin2012sequencing}.
Although our formulation is for synchronous learners, our results
directly apply to the asynchronous learners, where times of instance
and context arrivals can be different. A learner may not receive an
instance and context at every time slot $t$. Then, instead of the
final time $T$, our performance bounds for learner $i$ will depend
on the total number of arrivals to learner $i$ by time $T$.%
} Upon the arrival of an \newton{instance}, a learner needs to select
either one of its arms to process the instance or it can call another
learner which can select one of its own arms to process the instance and
incur a cost \newton{(e.g., delay cost, communication cost, processing
cost, money)}.
Based on the selected arm, the learner receives a random
reward, which is drawn from some unknown distribution that depends
on the context information characterizing the instance. The goal of
a learner is to maximize its total undiscounted reward up to any time
horizon $T$. A learner does not know the expected reward (as a function
of the context) of its own arms or of the other learners' arms. In
fact, we go one step further and assume that a learner does not know
anything about the set of arms available to other learners except
an upper bound on the number of their arms.
The learners are cooperative because they obtain mutual benefits from cooperation
- a learner's benefit from calling another learner may be an increased
reward as compared to the case when it uses solely its own arms; the
benefit of the learner asked to perform the processing by another
learner is that it can learn about the performance of its own arm
based on its reward for the calling learner.
This is especially beneficial when certain instances and associated contexts are less
frequent, or when gathering labels (observing the reward) is costly.

The problem defined in this paper is a generalization of the well-known
contextual bandit problem \cite{kleinberg2008multi,bubeck2011x,slivkins2009contextual,dudik2011efficient,langford2007epoch,chu2011contextual},
in which there is a single learner who has access to all the arms.
\newton{However, the considered distributed online learning problem
is significantly more challenging because a learner cannot observe
the arms of other learners and cannot directly estimate the expected
rewards of those arms. Moreover, the heterogeneous contexts arriving
at each learner lead to different learning rates for the various learners.}
We design distributed online learning algorithms whose long-term average
rewards converge to the best distributed solution which can \newton{be
obtained if we assumed complete knowledge of the expected arm rewards
of each learner for each context.} 

To rigorously quantify the learning performance, we define the regret
of an online learning algorithm for a learner as the difference between
the expected total reward of the best decentralized arm selection
scheme given complete knowledge about the expected arm rewards of
all learners and the expected total reward of the algorithm used by
the learner.
Simply, the regret of a learner is the loss incurred due to the unknown system dynamics compared to the complete knowledge benchmark. 
We prove a sublinear upper bound on the regret, 
which implies that the average reward converges to the optimal average
reward. The upper bound on regret gives a lower bound on the convergence
rate to the optimal average reward. 
We show that when the contexts arriving to a learner are uniformly
distributed over the context space, 
the regret depends on the dimension of the context space, while when
the contexts arriving to the same learner are concentrated in a small
region of the context space, 
the regret is independent of the dimension of the context space. 

The proposed framework can be used in numerous applications including
the ones given below. 


\begin{example}
\ntsp{
Consider a distributed recommender system in which there is a group of agents (learners) that are connected together via a fixed network, each of whom experiences inflows of users to its page. Each time a user arrives, an agent chooses from among a set of items (arms) to offer to that user, and the user will either reject or accept each item. When choosing among the items to offer, the agent is uncertain about the user's acceptance probability of each item, but the agent is able to observe specific background information about the user (context), such as the user's gender, location, age, etc. Users with different backgrounds will have different probabilities of accepting each item, and so the agent must learn this probability over time by making different offers. 
In order to promote cooperation within this network, we let each agent also recommend items of other agents to its users in addition to its own items. Hence, if the agent learns that a user with a particular context is unlikely to accept any of the agent's items, it can recommend to the user items of another agent that the user might be interested in. The agent can get a commission from the other agent if it sells the item of the other agent. This provides the necessary incentive to cooperate. 
However, since agents are decentralized, they do not directly share the information that they learn over time about user preferences for their own items. Hence the agents must learn about other agent's acceptance probabilities through their own trial and error.}
\end{example}

\begin{example}
Consider a network security scenario in which autonomous systems (ASs)
collaborate with each other to detect cyber-attacks \cite{cem2013contextdata}.
Each AS has a set of security solutions which it can use to detect
attacks. The contexts are the characteristics of the data traffic
in each AS. These contexts can provide valuable information about
the occurrence of cyber-attacks. Since the nature of the attacks are
dynamic, non-stochastic and context dependent, the efficiency of the
various security solutions are dynamically varying, context dependent
and unknown a-priori. Based on the extracted contexts (e.g. key properties
of its traffic, the originator of the traffic etc.), an AS $i$ may
route its incoming data stream (or only the context information) to
another AS $j$, and if AS $j$ detects a malicious activity based
on its own security solutions, it warns AS $i$. Due to the privacy
or security concerns, AS $i$ may not know what security applications
AS $j$ is running. This problem can be modeled as a cooperative contextual
bandit problem in which the various ASs cooperate with each other
to learn online which actions they should take or which other ASs
they should request to take actions in order to accurately detect
attacks (e.g. minimize the mis-detection probability of cyber-attacks). 
\end{example}
\comment{ 
\begin{example}
In cognitive radio channel access \cite{liu2010distributed,anandkumar2011distributed,tekin2012online},
decentralized secondary users learn the optimal allocation of channels
to maximize the sum of the utilities of the secondary users, where
coexistence in a channel is not possible. If secondary users are allowed
to transmit in the same channel with the primary users without providing
harmful interference, estimates of the channel qualities and primary
user interference temperature constraint can be used as context information
to maximize the utility of the secondary users. Therefore, this problem
can be modeled as a cooperative contextual bandit problem. 
\end{example}
}


The remainder of the paper is organized as follows. In Section \ref{sec:related}
we describe the related work and highlight the differences from our
work. \marrew{In Section \ref{sec:probform} we describe the choices
of learners, rewards, complete knowledge benchmark, and define the
regret of a learning algorithm.} A cooperative contextual learning
algorithm that uses a non-adaptive partition of the context space
is proposed and a sublinear bound on its regret is derived in Section
\ref{sec:iid}. Another learning algorithm that adaptively partitions
the context space of each learner is proposed in Section \ref{sec:zooming},
and its regret is bounded for different types of context arrivals.
In Section \ref{sec:discuss} we discuss the necessity of {\em training
phase} which is a property of both algorithms and compare them.
Finally, the concluding remarks are given in Section \ref{sec:conc}.
\vspace{-0.15in}
\section{Related Work} \label{sec:related}

Contextual bandits have been studied before in \cite{slivkins2009contextual, dudik2011efficient, langford2007epoch, chu2011contextual} in a single agent setting, where the agent sequentially chooses from a set of arms with unknown rewards, and the rewards depend on the context information provided to the agent at each time slot. 
The goal of the agent is to maximize its reward by balancing exploration of arms with uncertain rewards and exploitation of the arm with the highest estimated reward.
The algorithms proposed in these works \tsp{are} shown to achieve sublinear in time regret with respect to the complete knowledge benchmark, and the sublinear regret bounds are proved to match with lower bounds on the regret up to logarithmic factors. 
In all the prior work, the context space is assumed to be large and a known similarity metric over the contexts is exploited by the algorithms to estimate arm rewards together for groups of similar contexts. Groups of contexts are created by partitioning the context space. 
For example, \cite{langford2007epoch} proposed an epoch-based uniform partition of the context space, while \cite{slivkins2009contextual} proposed a non-uniform adaptive partition. 
In \cite{li2010contextual}, contextual bandit methods are developed for personalized news articles recommendation and a variant of the UCB algorithm \cite{auer} is designed for linear payoffs.
In \cite{crammer2011multiclass}, contextual bandit methods are developed for data mining and a perceptron based algorithm that achieves sublinear regret when the instances are chosen by an adversary is proposed. 
\newton{To the best of our knowledge,
our work is the first to provide rigorous solutions for online learning by multiple cooperative learners when context information is present and propose a novel framework for cooperative contextual bandits to solve this problem.}

Another line of work \cite{kleinberg2008multi, bubeck2011x} considers a single agent with a large set of arms (often uncountable).
Given a similarity structure on the arm space, they propose online learning algorithms that adaptively partition the arm space to get sublinear regret bounds.
The algorithms we design in this paper also exploits the similarity information, but in the context space rather than the action space, to create a partition and learn through the partition. However, distributed problem formulation, creation of the partitions and how learning is performed is very different from related prior work \cite{slivkins2009contextual, dudik2011efficient, langford2007epoch, chu2011contextual, kleinberg2008multi, bubeck2011x}.

Previously, distributed multi-user learning is only considered for multi-armed bandits with finite number of arms and no context. In \cite{anandkumar, liu2010distributed} distributed online learning algorithms that converge to the optimal allocation with logarithmic regret are proposed for the i.i.d. arm reward model, given that the optimal allocation is an orthogonal allocation in which each user selects a different arm. Considering a similar model but with Markov arm rewards, logarithmic regret algorithms are proposed in \cite{tekin2012online, 6362216}, where the regret is with respect to the best static policy which is not generally optimal for Markov rewards. 
This is generalized in \cite{tekin2012sequencing} to dynamic resource sharing problems and logarithmic regret results are also proved for this case. 

\tsp{
A multi-armed bandit approach is proposed in \cite{stranders2012dcops} to solve {\em decentralized constraint optimization problems} (DCOPs) with unknown and stochastic utility functions. The goal in this work is to maximize the total cumulative reward, where the cumulative reward is given as a sum of {\em local} utility functions whose values are controlled by variable assignments made (actions taken) by a subset of agents. The authors propose a message passing algorithm to efficiently compute a global upper confidence bound on the joint variable assignment, which leads to logarithmic in time regret. In contrast, in our formulation we consider a problem in which rewards are driven by contexts, and the agents do not know the set of actions of the other agents. 
In \cite{gai2012combinatorial} a combinatorial multi-armed bandit problem is proposed in which the reward is a linear combination of a set of coefficients of a multi-dimensional action vector and an instance vector generated by an unknown i.i.d. process. They propose an upper confidence bound algorithm that computes a global confidence bound for the action vector which is the sum of the upper confidence bounds computed separately for each dimension. Under the proposed i.i.d. model, this algorithm achieves regret that grows logarithmically in time and polynomially in the dimension of the vector.
}

We provide a detailed comparison between our work and related work in multi-armed bandit learning in Table \ref{tab:comparison2}.
Our cooperative contextual learning framework can be seen as an important extension of the centralized contextual bandit framework \cite{kleinberg2008multi, bubeck2011x, slivkins2009contextual, dudik2011efficient, langford2007epoch, chu2011contextual}.
The main differences are:
(i) {\em training} phase which is required due to the informational asymmetries between learners, 
(ii) separation of exploration and exploitation over time instead of using an index for each arm to balance them, resulting in three-phase learning algorithms with {\em training}, {\em exploration} and {\em exploitation} phases,
(iii) coordinated context space partitioning in order to balance the differences in reward estimation due to heterogeneous context arrivals to the learners.
\marrew{Although we consider a three-phase learning structure, our learning framework can work together with  index-based policies such as the ones proposed in \cite{slivkins2009contextual}, by restricting the index updates to time slots that are not in the training phase. 
Our three-phase learning structure separates exploration and exploitation into distinct time slots, while they take place concurrently for an index-based policy. 
We will discuss the differences between these methods in Section \ref{sec:discuss}.
We will also show in Section \ref{sec:discuss} that the training phase is necessary for the learners to form correct estimates about each other's rewards in cooperative contextual bandits.}

Different from our work, distributed learning is also considered in online convex optimization setting \cite{ram2010distributed, yan2013distributed, raginsky2011decentralized}.
In all of these works local learners choose their actions (parameter vectors) to minimize the global total loss by exchanging messages with their neighbors and performing subgradient descent. 
In contrast to these works in which learners share information about their actions, the learners in our model does not share any information about their own actions. The information shared in our model is the context information of the calling learner and the reward generated by the arm of the called learner. However, this information is not shared at every time slot, and the rate of information sharing between learners who cannot help each other to gain higher rewards goes to zero asymptotically.

\ntsp{
In addition to the aforementioned prior work, in our recent work \cite{tekin2014recommender} we consider online learning in a decentralized social recommender system. In this related work, we address the challenges of decentralization, cooperation, incentives and privacy that arises in a network of recommender systems. We model the item recommendation strategy of a learner as a combinatorial learning problem, and prove that learning is much faster when the purchase probabilities of the items are independent of each other. In contrast, in this work we propose the general theoretical model of cooperative contextual bandits which can be applied in a variety of decentralized online learning settings including wireless sensor surveillance networks, cognitive radio networks, network security applications, recommender systems, etc. We show how context space partition can be adapted based on the context arrival process and prove the necessity of the training phase. 
}

\begin{table}[t]
\centering
{\fontsize{9}{8}\selectfont
\setlength{\tabcolsep}{.25em}
\vspace{-0.2in}
\begin{tabular}{|l|c|c|c|c|}
\hline
&\cite{slivkins2009contextual, dudik2011efficient, langford2007epoch, chu2011contextual} &  \cite{hliu1, anandkumar, tekin2012sequencing}  & This work  \\
\hline
Multi-user & no & yes  & yes \\
\hline
Cooperative & N/A & yes  & yes  \\
\hline
Contextual & yes & no  & yes  \\
\hline
Context arrival  & arbitrary & N/A  & arbitrary  \\
process& & &  \\
\hline
synchronous (syn)/& N/A & syn & both  \\
asynchronous (asn)& & & \\
\hline
Regret & sublinear & logarithmic  & sublinear  \\
\hline
\end{tabular}
}
\caption{Comparison with related work in multi-armed bandits}
\vspace{-0.35in}
\label{tab:comparison2}
\end{table}

\add{\vspace{-0.2in}}
\vspace{-0.1in}
\section{Problem Formulation}\label{sec:probform}

The system model is shown in Fig. \ref{fig:system}. There are $M$ learners which are indexed by the set ${\cal M} = \{1,2,\ldots,M\}$.
Let ${\cal M}_{-i} := {\cal M} - \{i\}$ be the set of learners learner $i$ can choose from to receive a reward.
Let ${\cal F}_i$ denote the set of {\em arms} of learner $i$.
Let ${\cal F} := \cup_{j \in {\cal M}} {\cal F}_j$ denote the {\em set of all arms}.
Let ${\cal K}_i := {\cal F}_i \cup {\cal M}_{-i}$.
We call ${\cal K}_i$ the set of \marrew{{\em choices}} for learner $i$.
\newt{We use index $k$ to denote any choice in ${\cal K}_i$, $f$ to denote arms of the learners, $j$ to denote other learners in ${\cal M}_{-i}$.}
\tsp{Let $M_i :=|{\cal M}_{-i}|$, $F_i :=|{\cal F}_i|$ and $K_i := |{\cal K}_i|$, where $|\cdot|$ is the cardinality operator. A summary of notations is provided in Appendix \ref{app:notation}.}

\tsp{The learners operate under the following privacy constraint: A learner's set of arms is its private information. This is important when the learners want to cooperate to maximize their rewards, but do not want to reveal their technology/methods. For instance in stream mining, a learner may not want to reveal the types of classifiers it uses to make predictions, or in network security a learner may not want to reveal how many nodes it controls in the network and what types of security protocols it uses. However, each learner knows an upper bound on the number of arms the other learners have. Since the learners are cooperative, they can follow the rules of any learning algorithm as long as the proposed learning algorithm satisfies the privacy constraint. In this paper, we design such a learning algorithm and show that it is optimal in terms of average reward.}

These learners work in a discrete time setting $t=1,2,\ldots,T$, where the following events happen sequentially, in each time slot:
(i) an instance with context $x_i(t)$ arrives to each learner $i \in {\cal M}$;
(ii) based on $x_i(t)$, learner $i$ either chooses one of its arms $f \in {\cal F}_i$ or calls another learner and sends $x_i(t)$;\footnote{\newton{An alternative formulation is that learner $i$ selects multiple choices from ${\cal K}_i$ at each time slot, and receives sum of the rewards of the selected choices. All of the ideas/results in this paper can be extended to this case as well.}}
(iii) for each learner who called learner $i$ at time $t$, learner $i$ chooses one of its arms $f \in {\cal F}_i$;
(iv) learner $i$ observes the rewards of all the arms $f \in {\cal F}_i$ it had chosen both for its own contexts and for other learners;
(v) learner $i$ either obtains directly the reward of its own arm it had chosen, or a reward that is passed from the learner that it had called for its own context.\footnote{Although in our problem description the learners are synchronized, our model also works for the case where instance/context arrives asynchronously to each learner. We discuss more about this in \cite{cem2013contextdata}.}
%
%

%
%
The contexts $x_i(t)$ come from a bounded $D$ dimensional space ${\cal X}$, which is taken to be $[0,1]^D$ without loss of generality.
When selected, an arm $f \in {\cal F}$ generates a random reward sampled from an unknown, context dependent distribution $G_f(x)$ with support in $[0,1]$.\footnote{Our results can be generalized to rewards with bounded support $[b_1, b_2]$ for $-\infty < b_1 < b_2 < \infty$. This will only scale our performance bounds by a constant factor.}
The expected reward of arm $f \in {\cal F}$ for context $x \in {\cal X}$ is denoted by $\pi_{\tsp{f}}(x)$.
Learner $i$ incurs a known deterministic and fixed cost $d^i_k$ for selecting choice $k \in {\cal K}_i$.\footnote{Alternatively, we can assume that the costs are random variables with bounded support whose distribution is unknown.
In this case, the learners will not learn the reward but they will learn reward minus cost which is essentially the same thing. However, our performance bounds will be scaled by a constant factor.}
For example for $k \in {\cal F}_i$, $d^i_{k}$ can represent the cost of activating arm $k$, while for $k \in {\cal M}_{-i}$, $d^i_{k}$ can represent the cost of communicating with learner $k$ and/or the payment made to learner $k$.
Although in our system model we assume that each learner $i$ can directly call another learner $j$, our model can be generalized to learners over a network where calling learners that are away from learner $i$ has a higher cost for learner $i$.
%
Learner $i$ knows the set of other learners ${\cal M}_{-i}$ and costs of calling them, i.e., $d^i_{j}, j \in {\cal M}_{-i}$, but does not know the set of arms ${\cal F}_{j}$, $j \in {\cal M}_{-i}$, but only knows an upper bound on the number of arms that each learner has, i.e., $F_{\max}$ on \tsp{$F_j$}, $j \in {\cal M}_{-i}$.
%
%
%
Since the costs are bounded, without loss of generality we assume that costs are normalized, i.e., $d^i_k \in [0,1]$ for $k \in {\cal K}_i$, $i \in {\cal M}$.
\marrew{The {\em net reward} of learner $i$ from a choice is equal to the obtained reward minus cost of selecting the choice.
The net reward of a learner is always in $[-1,1]$.
}
%
%

The learners are cooperative which implies that when called by learner $i$, learner $j$ will choose one of its own arms which it believes to yield the highest expected reward given the context of learner $i$.
%
%

%


\begin{figure}
\begin{center}
\includegraphics[width=0.9\columnwidth]{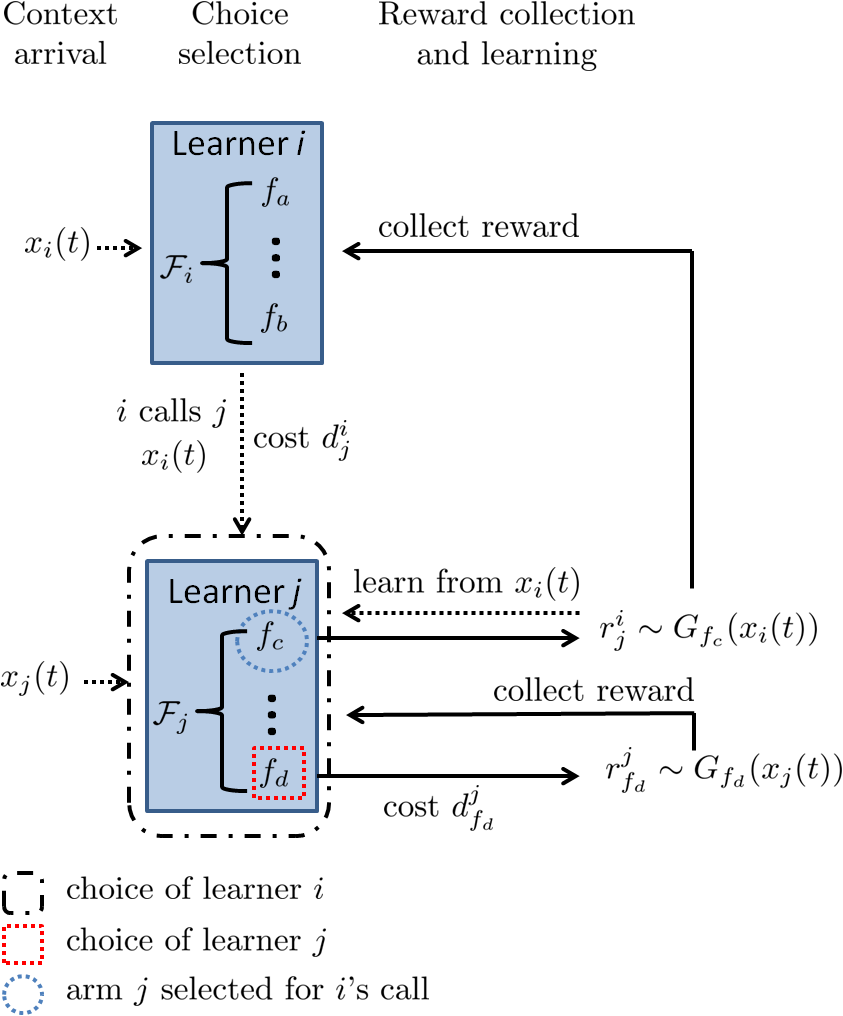}
\vspace{-0.1in}
\caption{System model from the viewpoint of learners $i$ and $j$. Here $i$ exploits $j$ to obtain a high reward while helping $j$ to learn about the reward of its own arm.} 
\vspace{-0.2in}
\label{fig:system}
\end{center}
\end{figure}

The expected reward of an arm is similar for similar contexts, which is formalized in terms of a H\"{o}lder condition given in the following assumption.
\begin{assumption} \label{ass:lipschitz2}
There exists $L>0$, $\alpha>0$ such that for all $f \in {\cal F}$ and for all $x,x' \in {\cal X}$, we have
$|\pi_{f}(x) - \pi_{f}(x')| \leq L ||x-x'||^\alpha$,
where $||\cdot||$ denotes the Euclidian norm in $\mathbb{R}^D$.
\end{assumption}
We assume that $\alpha$ is known by the learners. In the contextual bandit literature this is referred to as {\em similarity information} \cite{slivkins2009contextual}, \cite{ortner2010exploiting}.
Different from prior works on contextual bandit, we do not require $L$ to be known by the learners. However, $L$ will appear in our performance bounds.

The goal of learner $i$ is to maximize its total expected reward. In order to do this, it needs to learn the rewards from its choices.
Thus, learner $i$ should concurrently explore the choices in ${\cal K}_i$ to learn their expected rewards, and exploit the best believed choice for its contexts which maximizes the reward minus cost. 
%
%
%
In the next subsection we formally define the complete knowledge benchmark. Then, we define the regret which is the performance loss due to uncertainty about arm rewards.

\vspace{-0.1in}
\subsection{Optimal Arm Selection Policy with Complete Information} \label{sec:centralized}

We define learner $j$'s expected reward for context $x$ as $\pi_j(x) :=  \pi_{f^*_j(x)}(x)$,
where $f^*_j(x) := \argmax_{ f \in {\cal F}_j  } \pi_f(x)$.
 This is the maximum expected reward learner $j$ can provide when called by a learner with context $x$.
For learner $i$, $\mu^i_k(x) := \pi_k(x) - d^i_k$ denotes the net reward of choice $k \in {\cal K}_i$ for context $x$.
Our benchmark when evaluating the performance of the learning algorithms is the optimal solution which selects the choice with the highest expected net reward for learner $i$ for its context $x$. 
This is given by
\begin{align}
k_i^*(x) := \argmax_{k \in {\cal K}_i} \mu^i_k(x)  ~~ \forall x \in {\cal X}. \label{eqn:opt2}
\end{align}
Since knowing $\mu^i_j(x)$ requires knowing $\pi_f(x)$ for $f \in {\cal F}_j$, knowing the optimal solution means that learner $i$ knows the arm in ${\cal F}$ that yields the highest expected reward for each $x \in {\cal X}$.
\rmv{
Note that the problem remains hard even if we would put additional structure on the optimal classification scheme. For instance, we could assume that the 
%
optimal classification scheme for learner $i$ partitions ${\cal X}$ into  $|\cup_{i \in {\cal M}} {\cal F}_i|$ sets in each of which a single classification function is optimal.
%
Assume for an instance that the data arrival process to learner $i$ is i.i.d. with density $q_i$ and learner $i$ has access to all classification functions $\cup_{i \in {\cal M}} {\cal F}_i$. If learner $i$ knows $q_i$ and the classification accuracies of all classification functions in $\cup_{i \in {\cal M}} {\cal F}_i$, then 
under the above assumption, learner $i$ could compute the optimal classification regions
\begin{align*}
{\cal R}^* = \{R^*_k\}_{(k \in \cup_{i \in {\cal M}} {\cal F}_i)},
\end{align*}
of ${\cal X}$, by solving
\begin{align}
\textrm{{\bf (P1)  }} {\cal R}^* = \argmin_{{\cal R} \in \Theta} E \left[ \left| Y - \sum_{k \in \cup_{i \in {\cal M}} {\cal F}_i} k(X) I(X \in R_l) \right| + \sum_{k \in {\cal F}_i} d_k I(X \in R_k) + \sum_{k \in {\cal M}_{-i}}  d_k I \left(X \in \cup_{j \in {\cal F}_k} R_j \right)   \right], \label{eqn:opt}
\end{align}
where $\Theta$ is the set of $|\cup_{i \in {\cal M}} {\cal F}_i|$-set partitions of ${\cal X}$, the expectation is taken with respect to the distribution $q_i$ and $Y$ is the random variable denoting the true label. Here $I(X \in R_l)$ denotes the event that data received by the learner belongs to the $l$-th set of the partition ${\cal R}$ of ${\cal X}$.
The complexity of finding the optimal classification regions increases exponentially with $|\cup_{i \in {\cal M}} {\cal F}_i|$.
Importantly, note that the learning problem we are trying to solve is even harder than this because the learners are distributed and thus, each learner cannot directly access to all classification functions (learner $i$ only knows set ${\cal F}_i$ and ${\cal M}$, but it does not know any ${\cal F}_j, j \in {\cal M}_{-i}$), and the distributions $q_i$, $i \in {\cal M}$ are unknown (they need not to be i.i.d. or Markovian). Therefore, we use online learning techniques that do not rely on solving the optimization problem in (\ref{eqn:opt}) nor on an estimated version of it. 

\cem{Note that we discussed the above assumption only to illustrate that the optimal solution is still computationally
hard even when we put extra structure on the problem. Our results  
only require Assumption \ref{ass:lipschitz2} to hold.}
}
 
%
%
\vspace{-0.1in}
\subsection{The Regret of Learning}

Let $a_i(t)$ be the choice selected by learner $i$ at time $t$. 
Since learner $i$ has no a priori information, this choice is only based on the past history of selections and reward observations of learner $i$.
The rule that maps the history of learner $i$ to its choices is called the learning algorithm of learner $i$.
Let $\boldsymbol{a}(t) := (a_1(t), \ldots, a_M(t))$ be the choice vector at time $t$. 
We let $b_{i,j}(t)$ denote the arm selected by learner $i$ when it is called by learner $j$ at time $t$.
If $j$ does not call $i$ at time $t$, then $b_{i,j}(t) = \emptyset$.
Let $\boldsymbol{b}_i(t) = \{ b_{i,j}(t) \}_{j \in {\cal M}_{-i}}$ and $\boldsymbol{b}(t) = \{ \boldsymbol{b}_{i}(t) \}_{i \in {\cal M}}$.
The regret of learner $i$ with respect to the complete knowledge benchmark $k_i^*(x_i(t))$ given in (\ref{eqn:opt2}) is given by
\add{\vspace{-0.05in}}
\begin{align*}
R_i(T) &:= \sum_{t=1}^T \left(\pi_{k_i^*(x_i(t))}(x_i(t)) - d^i_{k_i^*(x_i(t))} \right)\\
&- \mathrm{E} \left[ \sum_{t=1}^T  r^i_{a_i(t) }(x_i(t),t)  - d^i_{a_i(t) } \right] 
\end{align*}
where $r^i_{a_i(t)}(x_i(t),t)$ denotes the random reward of choice $a_i(t) \in {\cal K}_i$ for context $x$ at time $t$ for learner $i$, and the expectation is taken with respect to the selections made by the distributed algorithm of the learners and the statistics of the rewards. For example, when $a_i(t) = j$ and $b_{j,i}(t) = f \in {\cal F}_j$, this random reward is sampled from the distribution of arm $f$.

Regret gives the convergence rate of the total expected reward of the learning algorithm to the value of the optimal solution given in (\ref{eqn:opt2}).
Any algorithm whose regret is sublinear, i.e., $R(T) = O(T^\gamma)$ such that $\gamma<1$, will converge to the optimal solution in terms of the average reward.
In the subsequent sections we will propose two different distributed learning algorithms with sublinear regret.

\vspace{-0.1in}
\section{A distributed uniform context partitioning algorithm} \label{sec:iid}

\marrew{The algorithm we consider in this section forms at the beginning a uniform partition of the context space for each learner. Each learner estimates its choice rewards based on the past history of arrivals to each set in the partition independently from the other sets in the partition.}
This distributed learning algorithm is called {\em Contextual Learning with Uniform Partition} (CLUP) and its pseudocode is given in Fig. \ref{fig:CLUP}, Fig. \ref{fig:CLUPmax} and Fig. \ref{fig:CLUPcoop}.
\marrew{For learner $i$, CLUP is composed of two parts. The first part is the {\em maximization part} (see Fig. \ref{fig:CLUPmax}), which is used by learner $i$ to maximize its reward from its own contexts. The second part is the {\em cooperation part} (see Fig. \ref{fig:CLUPcoop}), which is used by learner $i$ to help other learners maximize their rewards for their own contexts.}

Let $m_T$ be the {\em slicing parameter} of CLUP that determines the \marrew{number of sets in the} partition of the context space ${\cal X}$. 
When $m_T$ is small, the number of sets in the partition is small, hence the number of contexts from the past observations which can be used to form reward estimates in each set is large.
However, when $m_T$ is small, the size of each set is large, hence the variation of the expected \marrew{choice} rewards over each set is high.
First, we will analyze the regret of CLUP for a fixed $m_T$ and then optimize over it to balance the aforementioned tradeoff. 
CLUP forms a partition of $[0,1]^D$ consisting of $(m_T)^D$ sets where each set is a $D$-dimensional hypercube with dimensions $1/m_T \times 1/m_T \times \ldots \times 1/m_T$.
\marrew{
We use index $p$ to denote a set in ${\cal P}_T$.
For learner $i$ let $p_i(t)$ be the set in ${\cal P}_T$ which $x_i(t)$ belongs to.\footnote{If $x_i(t)$ is an element of the boundary of multiple sets, then it is randomly assigned to one of these sets.}
}

%
%


\begin{figure}[htb]
\add{\vspace{-0.1in}}
\fbox {
\begin{minipage}{0.95\columnwidth}
{\fontsize{9}{9}\selectfont
\flushleft{CLUP for learner $i$:}
\begin{algorithmic}[1]
\STATE{Input: $D_1(t)$, $D_2(t)$, $D_3(t)$, $T$, $m_T$}
\STATE{Initialize sets: Create partition ${\cal P}_T$ of $[0,1]^D$ into $(m_T)^D$ identical hypercubes}
\STATE{Initialize counters:
$N^i_p =0$, $\forall p \in {\cal P}_T$,
$N^i_{k,p}=0, \forall k \in {\cal K}_i, p \in {\cal P}_T$, $N^{\textrm{tr}, i}_{j, p} =0, \forall j \in {\cal M}_{-i}, p \in {\cal P}_T$
}
\STATE{Initialize estimates: $\bar{r}^i_{k,p} =0$, $\forall k \in {\cal K}_i$, $p \in {\cal P}_T$
}
\WHILE{$t \geq 1$}
\STATE{Run CLUPmax to get choice $a_i$, $p=p_i(t)$ and $train$}
\STATE{If $a_i \in {\cal M}_{-i}$ call learner $a_i$ and pass $x_i(t)$}
\STATE{Receive ${\cal C}_i(t)$, the set of learners who called $i$, and their contexts}
\IF{${\cal C}_i(t) \neq \emptyset$}
\STATE{Run CLUPcoop to get arms to be selected $\boldsymbol{b}_i := \{ b_{i,j}  \}_{j \in  {\cal C}_i(t)}$ and sets that the contexts lie in $\boldsymbol{p}_i := \{ p_{i,j}  \}_{j \in  {\cal C}_i(t)}$}
\ENDIF
\IF{$a_i \in {\cal F}_i$}
\STATE{Pay cost $d^i_{a_i}$, receive random reward $r$ drawn from $G_{a_i}(x_i(t))$}
\ELSE
\STATE{Pay cost $d^i_{a_i}$, receive random reward $r$ drawn from $G_{b_{a_i,i}}(x_i(t))$}
\ENDIF
\IF{$train = 1$}
\STATE{$N^{\textrm{tr}, i}_{a_i, p} ++$}
\ELSE
\STATE{$\bar{r}^i_{\tsp{a_i},p} = \frac{\bar{r}^i_{\tsp{a_i},p} N^i_{a_i,p}  + r}{N^i_{a_i,p}+1}$}
\STATE{$N^i_{p}++$, $N^i_{a_i,p} ++$}
\ENDIF
\IF{${\cal C}_i(t) \neq \emptyset$}
\FOR{$j \in {\cal C}_i(t)$}
\STATE{Observe random reward $r$ drawn from $G_{b_{i,j}}(x_j(t))$}
\STATE{$\bar{r}^i_{b_{i,j},p_{i,j}} = \frac{\bar{r}^i_{b_{i,j},p_{i,j}} N^i_{b_{i,j},p_{i,j}}  + r}{N^i_{b_{i,j},p_{i,j}}+1}$}
\STATE{$N^i_{p_{i,j}}++$, $N^i_{b_{i,j},p_{i,j}}++$}
\ENDFOR
\ENDIF
\STATE{$t=t+1$}
\ENDWHILE
\end{algorithmic}
}
\end{minipage}
} \caption{Pseudocode for CLUP algorithm.} \label{fig:CLUP}
\add{\vspace{-0.12in}}
\end{figure}
\comment{
\begin{figure}[htb]
\fbox {
\begin{minipage}{0.95\columnwidth}
{\fontsize{9}{7}\selectfont
{\bf Train}($i$, $\alpha$, $n$):
\begin{algorithmic}[1]
\STATE{Call learner $\alpha$, send context $x_i(t)$. Receive reward $r^i_\alpha(x_i(t),t)$. Net reward $\tilde{r}_\alpha(t) =r^i_\alpha(x_i(t),t) - d^i_\alpha$. $n++$.}
\end{algorithmic}
{\bf Explore}($i$, $\alpha$, $n$, $r$):
\begin{algorithmic}[1]
\STATE{Select choice $\alpha$. If $\alpha \in {\cal M}_{-i}$ call learner $\alpha$, send context $x_i(t)$.
Receive reward $r^i_\alpha(x_i(t),t)$. Net reward $\tilde{r}_\alpha(t) = r^i_\alpha(x_i(t),t) - d^i_\alpha$. $r = \frac{n r + r^i_\alpha(x_i(t),t)  }{n + 1}$.  $n++$. }
\end{algorithmic}
{\bf Exploit}($i$, $\boldsymbol{n}$, $\boldsymbol{r}$):
\begin{algorithmic}[1]
\STATE{Select choice $\alpha \in \argmax_{k \in {\cal K}_i} r_k$. If $\alpha \in {\cal M}_{-i}$ call learner $\alpha$, send context $x_i(t)$. Receive reward $r^i_\alpha(x_i(t),t)$. Net reward $\tilde{r}_\alpha(t) = r^i_\alpha(x_i(t),t) - d^i_\alpha$. $r_{k} = \frac{n_\alpha r_{\alpha} + r^i_\alpha(x_i(t),t) }{n_\alpha + 1}$. $n_\alpha++$.  }
\end{algorithmic}
}
\end{minipage}
} \caption{Pseudocode of the training, exploration and exploitation modules.} \label{fig:mtrain}
\vspace{-0.1in}
\end{figure}
}
\comment{
\begin{figure}[htb]
\fbox {
\begin{minipage}{0.95\columnwidth}
{\fontsize{9}{9}\selectfont
{\bf Explore}($k$, $n$, $r$):
\begin{algorithmic}[1]
\STATE{select arm $k$}
\STATE{Receive reward $r_k(t) = I(k(x_i(t)) = y_t) - d_{k(x_i(t))}$}
\STATE{$r = \frac{n r + r_k(t)}{n + 1}$}
\STATE{$n++$}
\end{algorithmic}
}
\end{minipage}
} \caption{Pseudocode of the exploration module} \label{fig:mexplore}
\end{figure}
}

\comment{
\begin{figure}[htb]
\fbox {
\begin{minipage}{0.9\columnwidth}
{\fontsize{9}{9}\selectfont
{\bf Exploit}($\boldsymbol{n}$, $\boldsymbol{r}$, ${\cal K}_i$):
\begin{algorithmic}[1]
\STATE{select arm $k\in \argmax_{j \in {\cal K}_i} r_j$}
\STATE{Receive reward $r_k(t) = I(k(x_i(t)) = y_t) - d_{k(x_i(t))}$}
\STATE{$\bar{r}_{k} = \frac{n_k \bar{r}_{k} + r_k(t)}{n_k + 1}$}
\STATE{$n_k++$}
\end{algorithmic}
}
\end{minipage}
} \caption{Pseudocode of the exploitation module} \label{fig:mexploit}
\end{figure}
}

\begin{figure}[htb]
\add{\vspace{-0.1in}}
\fbox {
\begin{minipage}{0.95\columnwidth}
{\fontsize{9}{7}\selectfont
\flushleft{CLUPmax (maximization part of CLUP) for learner $i$:}
\begin{algorithmic}[1]
\STATE{$train=0$}
\STATE{Find the set in ${\cal P}_T$ that $x_i(t)$ belongs to, i.e., $p_i(t)$}
\STATE{Let $p = p_i(t)$}
\STATE{Compute the set of under-explored arms ${\cal F}^{\textrm{ue}}_{i,p}(t)$ given in (\ref{eqn:underexploreda})}
\IF{${\cal F}^{\textrm{ue}}_{i,p}(t) \neq \emptyset$}
\STATE{Select $a_i$ randomly from ${\cal F}^{\textrm{ue}}_{i,p}(t)$}
\ELSE
\STATE{Compute the set of training candidates ${\cal M}^{\textrm{ct}}_{i,p}(t)$ given in (\ref{eqn:traincand})}
\STATE{//Update the counters of training candidates}
\FOR{$j \in {\cal M}^{\textrm{ut}}_{i,p}(t)$}
\STATE{Obtain $N^j_p$ from learner $j$, set $N^{\textrm{tr}, i}_{j,p} = N^j_p - N^i_{j,p}$}
\ENDFOR
\STATE{Compute the set of under-trained learners ${\cal M}^{\textrm{ut}}_{i,p}(t)$ given in (\ref{eqn:undertrained})}
\STATE{Compute the set of under-explored learners ${\cal M}^{\textrm{ue}}_{i,p}(t)$ given in (\ref{eqn:underexploredl})}
\IF{${\cal M}^{\textrm{ut}}_{i,p}(t) \neq \emptyset$}
\STATE{Select $a_i$ randomly from ${\cal M}^{\textrm{ut}}_{i,p}(t)$, $train=1$}
\ELSIF{${\cal M}^{\textrm{ue}}_{i,p}(t) \neq \emptyset$}
\STATE{Select $a_i$ randomly from ${\cal M}^{\textrm{ue}}_{i,p}(t)$}
\ 
\ELSE
\STATE{Select $a_i$ randomly from $\argmax_{k \in {\cal K}_i} \bar{r}^i_{k,p} - d^i_k$}
\ENDIF
\ENDIF
\end{algorithmic}
}
\end{minipage}
} \caption{Pseudocode for the maximization part of CLUP algorithm.} \label{fig:CLUPmax}
\add{\vspace{-0.12in}}
\end{figure}

\begin{figure}[htb]
\add{\vspace{-0.1in}}
\fbox {
\begin{minipage}{0.95\columnwidth}
{\fontsize{9}{7}\selectfont
\flushleft{CLUPcoop (cooperation part of CLUP) for learner $i$:}
\begin{algorithmic}[1]
\FOR{$j \in {\cal C}_i(t)$}
\STATE{Find the set in ${\cal P}_T$ that $x_j(t)$ belongs to, i.e., $p_{i,j}$}
\STATE{Compute the set of under-explored arms ${\cal F}^{\textrm{ue}}_{i,p_{i,j}}(t)$ given in (\ref{eqn:underexploreda})}
\IF{${\cal F}^{\textrm{ue}}_{i,p_{i,j}}(t) \neq \emptyset$}
\STATE{Select $b_{i,j}$ randomly from ${\cal F}^{\textrm{ue}}_{i,p_{i,j}}(t)$}
\ELSE
\STATE{$b_{i,j} = \argmax_{f \in {\cal F}_i} \bar{r}^i_{f, p_{i,j}}$}
\ENDIF
\ENDFOR
\end{algorithmic}
}
\end{minipage}
} \caption{Pseudocode for the cooperation part of CLUP algorithm.} \label{fig:CLUPcoop}
\add{\vspace{-0.12in}}
\end{figure}

\marrew{First, we will describe the maximization part of CLUP.}
At time slot $t$ learner $i$ can be in one of the three phases: {\em training} phase in which learner $i$ calls another learner with its context such that when the reward is received, the called learner can update the estimated reward of its selected arm (but learner $i$ does not update the estimated reward of the selected learner), 
{\em exploration} phase in which learner $i$ selects a choice in ${\cal K}_i$ and updates its estimated reward,
and {\em exploitation} phase in which learner $i$ selects the choice with the highest estimated net reward.

\marrew{Recall that the learners are cooperative. Hence, when called by another learner, learner $i$ will choose its arm with the highest estimated reward for the calling learner's context.}
\marrew{
To gain the highest possible reward in exploitations,
learner $i$ must have an accurate estimate of other learners' expected rewards without observing the arms selected by them. 
In order to do this, before forming estimates about the expected reward of learner $j$, learner $i$ needs to make sure that learner $j$ will almost always select its best arm when called by learner $i$.
Thus, the training phase of learner $i$ helps other learners build accurate estimates about rewards of their arms, before learner $i$ uses any rewards from these learners to form reward estimates about them.
In contrast, the exploration phase of learner $i$ helps it to build accurate estimates about rewards of its choices. 
These two phases indirectly help learner $i$ to maximize its total expected reward in the long run.
}

\marrew{
Next, we define the counters learner $i$ keeps for each set in ${\cal P}_T$ for each choice in ${\cal K}_i$, which are used to decide its current phase.}
Let $N^i_p(t)$ be the number of context arrivals to learner $i$ in $p \in {\cal P}_T$ by time $t$ (its own arrivals and arrivals to other learners who call learner $i$) except the training phases of learner $i$.
For $f \in {\cal F}_i$, let $N^i_{f,p}(t)$ be the number of times arm $f$ is selected in response to a context arriving to set $p$ by learner $i$ by time $t$ (including times other learners select learner $i$ for their contexts in set $p$).
Other than these, learner $i$ keeps two counters for each other learner in each set in the partition, which it uses to decide training, exploration or exploitation.
\marrew{
The first one, i.e., $N^{\textrm{tr}, i}_{j,p}(t)$, is an estimate on the number of context arrivals to learner $j$ from all learners except the training phases of learner $j$ and exploration, exploitation phases of learner $i$.
This is an estimate because learner $i$ updates this counter only when it needs to train learner $j$.
}
The second one, i.e., $N^i_{j,p}(t)$, counts the number of context arrivals to learner $j$ only from the contexts of learner $i$ in set $p$ at times learner $i$ selected learner $j$ in its exploration and exploitation phases by time $t$.
Based on the values of these counters at time $t$, learner $i$ either trains, explores or exploits a choice in ${\cal K}_i$. 
This three-phase learning structure is one of the major components of our learning algorithm which makes it different than the algorithms proposed for the contextual bandits in the literature which assigns an index to each choice and selects the choice with the highest index. 

At each time slot $t$, learner $i$ first identifies $p_i(t)$.
Then, it chooses its phase at time $t$ by giving highest priority to exploration of its own arms, second highest priority to training of other learners, third highest priority to exploration of other learners, and lowest priority to exploitation. 
The reason that exploration of own arms has a higher priority than training of other learners is that it can reduce the number of trainings required by other learners, which we will describe below.

First, learner $i$ identifies its set of under-explored arms:
\begin{align}
{\cal F}^{\textrm{ue}}_{i,p}(t) &:= \{ f \in {\cal F}_{i}: N^i_{f,p}(t) \leq D_1(t) \}
\label{eqn:underexploreda}
\end{align}
where $D_1(t)$ is a deterministic, increasing function of $t$ which is called {\em the control function}.
We will specify this function later, when analyzing the regret of CLUP.
The accuracy of reward estimates of learner $i$ for its own arms increases with $D_1(t)$, hence it should be selected to balance the tradeoff between accuracy and the number of explorations.
If this set is non-empty, learner $i$ enters the exploration phase and randomly selects an arm in this set to explore it.
Otherwise, learner $i$ identifies the set of training candidates:
\begin{align}
{\cal M}^{\textrm{ct}}_{i,p}(t) := \{ j \in {\cal M}_{-i}: N^{\textrm{tr}, i}_{j,p}(t) \leq D_2(t)\} \label{eqn:traincand}
\end{align}
where $D_2(t)$ is a control function similar to $D_1(t)$.
Accuracy of other learners' reward estimates of their own arms increase with $D_2(t)$, hence it should be selected to balance the possible reward gain of learner $i$ due to this increase with the reward loss of learner $i$ due to number of trainings.
If this set is non-empty, learner $i$ asks the learners $j \in {\cal M}^{\textrm{ct}}_{i,p}(t)$ to report $N^j_p(t)$.
Based in the reported values it recomputes $N^{\textrm{tr},i}_{j,p}(t)$ as $N^{\textrm{tr},i}_{j,p}(t) = N^j_p(t) - N^i_{j,p}(t)$.
Using the updated values, learner $i$ identifies the set of under-trained learners:
\begin{align}
{\cal M}^{\textrm{ut}}_{i,p}(t) := \{ j \in {\cal M}_{-i}: N^{\textrm{tr}, i}_{j,p}(t) \leq D_2(t)\}. \label{eqn:undertrained}
\end{align}
If this set is non-empty, learner $i$ enters the training phase and randomly selects a learner in this set to train it.\footnote{Most of the regret bounds proposed in this paper can also be achieved by setting $N^{\textrm{tr}, i}_{j,p}(t)$ to be the number of times learner $i$ trains learner $j$ by time $t$, without considering other context observations of learner $j$. 
However, by recomputing $N^{\textrm{tr}, i}_{j,p}(t)$, learner $i$ can avoid many unnecessary trainings especially when own context arrivals of learner $j$ is adequate for it to form accurate estimates about its arms for  set $p$ or when learners other than learner $i$ have already helped learner $j$ to build accurate estimates for its arms in set $p$.}
When ${\cal M}^{\textrm{ct}}_{i,p}(t)$ or ${\cal M}^{\textrm{ut}}_{i,p}(t)$ is empty, this implies that there is no under-trained learner, hence learner $i$ checks if there is an under-explored choice. 
The set of learners that are under-explored by learner $i$ is given by 
\begin{align}
{\cal M}^{\textrm{ue}}_{i,p}(t) &:=  \{ j \in {\cal M}_{-i}: N^i_{j,p}(t) \leq D_3(t) \}
\label{eqn:underexploredl}
\end{align}
where $D_3(t)$ is also a control function similar to $D_1(t)$.
If this set is non-empty, learner $i$ enters the exploration phase and randomly selects a choice in this set to explore it.
Otherwise, learner $i$ enters the exploitation phase in which it selects the choice with the highest estimated net reward, i.e.,
\begin{align}
a_i(t) \in \argmax_{k \in {\cal K}_i} \bar{r}^i_{k,p}(t) - d^i_k \label{eqn:maximizer}
\end{align}
where $\bar{r}^i_{k,p}(t)$ is the sample mean estimate of the rewards learner $i$ observed (not only collected) from choice $k$ by time $t$, which is computed as follows. 
For $j \in {\cal M}_{-i}$, let ${\cal E}^i_{j,p}(t)$ be the set of rewards collected by learner $i$ at times it selected learner $j$ while learner $i$'s context is in set $p$ in its exploration and exploitation phases by time $t$.
For estimating the rewards of its own arms, learner $i$ can also use the rewards obtained by other learner at times they called learner $i$.
\marrew{
In order to take this into account, for $f \in {\cal F}_i$, let ${\cal E}^i_{f,p}(t)$ be the set of rewards collected by learner $i$ at times it selected its arm $f$ for its own contexts in set $p$ union the set of rewards observed by learner $i$ when it selected its arm $f$ for other learners calling it with contexts in set $p$ by time $t$.
}
Therefore, sample mean reward of choice $k \in {\cal K}_i$ in set $p$ for learner $i$ is defined as
%
$\bar{r}^i_{k,p}(t) = (\sum_{r \in {\cal E}^i_{k,p}(t)} r)/|{\cal E}^i_{k,p}(t)|$.
%
\marrew{
An important observation is that computation of $\bar{r}^i_{k,p}(t)$ does not take into account the costs related to selecting choice $k$. 
Reward generated by an arm only depends on the context it is selected at but not on the identity of the learner for whom that arm is selected. 
However, the costs incurred depend on the identity of the learner.
}
Let $\hat{\mu}^i_{k,p}(t) := \bar{r}^i_{k,p}(t) - d^i_k$ be the estimated net reward of choice $k$ for set $p$.
Of note, when there is more than one maximizer of (\ref{eqn:maximizer}), one of them is randomly selected.
In order to run CLUP, learner $i$ does not need to keep the sets ${\cal E}^i_{k,p}(t)$ in its memory. 
$\bar{r}^i_{k,p}(t)$ can be computed by using only $\bar{r}^i_{k,p}(t-1)$ and the reward at time $t$.

The cooperation part of CLUP operates as follows.
Let ${\cal C}_i(t)$ be the learners who call learner $i$ at time $t$. 
For each $j \in {\cal C}_i(t)$, learner $i$ first checks if it has any under-explored arm $f$ for $p_j(t)$, i.e., $f$ such that $N^i_{f,p_j(t)}(t) \leq D_1(t)$. If so, it randomly selects one of its under-explored arms and provides its reward to learner $j$. Otherwise, it exploits its arm with the highest estimated reward for learner $j$'s context, i.e.,
\begin{align}
b_{i,j}(t) \in \argmax_{f \in {\cal F}_i} \bar{r}^i_{f,p_j(t)}(t). \label{eqn:maximizer2}
\end{align}

\subsection{Analysis of the Regret of CLUP}

Let $\beta_a := \sum_{t=1}^{\infty} 1/t^a$, and let $\log(.)$ denote logarithm in base $e$.
For each set (hypercube) $p \in {\cal P}_T$ let 
%
$\overline{\pi}_{f,p} := \sup_{x \in p} \pi_f(x)$,
$\underline{\pi}_{f,p} := \inf_{x \in p} \pi_f(x)$, for $f \in {\cal F}$, and
$\overline{\mu}^i_{k,p} := \sup_{x \in p} \mu^i_k(x)$,
$\underline{\mu}^i_{k,p} := \inf_{x \in p} \mu^i_k(x)$, for $k \in {\cal K}_i$.
%
Let $x^*_p$ be the context at the center (center of symmetry) of the hypercube $p$. We define the optimal choice of learner $i$ for set $p$ as
%
$k^*_i(p) := \argmax_{k \in {\cal K}_i} \mu^i_k(x^*_p)$.
%
When the set $p$ is clear from the context, we will simply denote the optimal choice for set $p$ with $k^*_i$.
Let
\begin{align*}
{\cal L}^i_p(t) := \left\{ k \in {\cal K}_i \textrm{ such that }  \underline{\mu}^i_{k^*_i(p),p} - \overline{\mu}^i_{k,p} > A t^{\theta} \right\}
\end{align*}
be the set of suboptimal choices for learner $i$ for hypercube $p$ at time $t$, where 
$\theta<0$, $A > 0$ are parameters that are only used in the analysis of the regret and do not need to be known by the learners.
First, we will give regret bounds that depend on values of $\theta$ and $A$ and then we will optimize over these values to find the best bound.
Also related to this let
\begin{align*}
{\cal F}^j_p(t) := \left\{ f \in {\cal F}_j \textrm{ such that }  \underline{\pi}_{f^*_j(p),p} - \overline{\pi}_{f,p} > A t^{\theta} \right\}
\end{align*}
be the set of suboptimal arms of learner $j$ for hypercube $p$ at time $t$, where $f^*_j(p) = \argmax_{f \in {\cal F}_j} \pi_f(x^*_p)$. Also when the set $p$ is clear from the context we will just use $f^*_j$.
The arms in ${\cal F}^j_p(t) $ are the ones that learner $j$ should not select when called by another learner.

The regret given in (\ref{eqn:opt2}) can be written as a sum of three components: $R_i(T) = \mathrm{E} [R^e_i(T)] + \mathrm{E} [R^s_i(T)] + \mathrm{E} [R^n_i(T)]$,
where $R^e_i(T)$ is the regret due to trainings and explorations by time $T$, $R^s_i(T)$ is the regret due to suboptimal choice selections in exploitations by time $T$ and $R^n_i(T)$ is the regret due to near optimal choice selections in exploitations by time $T$, which are all random variables. In the following lemmas we will bound each of these terms separately. The following lemma bounds $\mathrm{E} [R^e_i(T)]$. 
%
%
\begin{lemma} \label{lemma:explorations}
When CLUP is run by all learners with parameters $D_1(t) = t^{z} \log t$, $D_2(t) = F_{\max} t^{z} \log t$, $D_3(t) = t^{z} \log t$ and $m_T = \left\lceil T^{\gamma} \right\rceil$,\footnote{For a number $r \in \mathbb{R}$, let $\lceil r  \rceil$ be the smallest integer that is greater than or equal to $r$.} where $0<z<1$ and $0<\gamma<1/D$, we have
\add{\vspace{-0.1in}}
\begin{align*}
\mathrm{E} [R^e_i(T)] &\leq  \sum_{p=1}^{(m_T)^D} \tsp{2} (F_i + M_i (F_{\max} + 1)) T^{z} \log T \\
&+ \tsp{2}  (F_i + 2 M_i ) (m_T)^D \\
&\leq \tsp{2^{D+1}} \tsp{Z_i} T^{z+\gamma D} \log T 
+ \tsp{2^{D+1}} ( F_i+ 2 M_i ) T^{\gamma D} 
\end{align*}
where
\begin{align}
\tsp{Z_i :=  (F_i+ M_i (F_{\max} + 1))} .   \label{eqn:Zi}
\end{align}
\end{lemma}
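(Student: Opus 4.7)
The plan is to decompose $R^e_i(T)$ hypercube-by-hypercube and phase-by-phase, bound the per-step regret by $2$ (the diameter of $[-1,1]$), and then count how many training and exploration steps can happen in each hypercube $p \in {\cal P}_T$ under the chosen control functions. Since own-arm exploration of $f \in {\cal F}_i$ in hypercube $p$ is only triggered when $N^i_{f,p}(t) \le D_1(t)$, and $D_1$ is nondecreasing, the number of such explorations by time $T$ is at most $\lceil D_1(T) \rceil + 1 \le T^{z}\log T + 1$. Summing over $f \in {\cal F}_i$ yields at most $F_i(T^{z}\log T+1)$ own-arm exploration steps per hypercube, each contributing at most $2$ in regret.

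Next I would handle trainings. A training of learner $j$ is only initiated when $N^{\textrm{tr},i}_{j,p}(t) \le D_2(t)$; since each training increments this counter by at least $1$ (the recomputation $N^{\textrm{tr},i}_{j,p} = N^j_p - N^i_{j,p}$ can only make it larger, never smaller), the number of times $i$ trains $j$ in $p$ is at most $\lceil D_2(T) \rceil + 1 \le F_{\max}T^{z}\log T + 1$. Analogously, the number of exploration steps in which $i$ calls $j$ (triggered by $N^i_{j,p}(t)\le D_3(t)$) is at most $T^{z}\log T+1$. Summing over the $M_i$ other learners gives at most $M_i\bigl(F_{\max}T^{z}\log T+1\bigr) + M_i\bigl(T^{z}\log T+1\bigr) = M_i(F_{\max}+1)\,T^{z}\log T + 2M_i$ training-plus-explore-other steps per hypercube.

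Adding the two counts, the total number of training and exploration steps in a single hypercube is bounded by $\bigl(F_i + M_i(F_{\max}+1)\bigr)T^{z}\log T + (F_i + 2M_i)$, and each such step costs at most $2$ in regret. Multiplying by $2$ gives the first displayed bound of the lemma. For the second, I would sum over the $(m_T)^D$ hypercubes and use $m_T = \lceil T^{\gamma}\rceil \le 2T^{\gamma}$ (valid for $T \ge 1$), so $(m_T)^D \le 2^{D} T^{\gamma D}$; plugging in $Z_i = F_i + M_i(F_{\max}+1)$ produces
\begin{equation*}
\mathrm{E}[R^e_i(T)] \le 2^{D+1} Z_i\, T^{z+\gamma D}\log T + 2^{D+1}(F_i+2M_i)\,T^{\gamma D}.
\end{equation*}

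The main subtlety, rather than a true obstacle, is justifying that the training counter $N^{\textrm{tr},i}_{j,p}$ can only increase between consecutive trainings; this is immediate from the definition $N^{\textrm{tr},i}_{j,p} = N^j_p - N^i_{j,p}$ together with the fact that $N^j_p$ is nondecreasing while $N^i_{j,p}$ increases only inside $i$'s exploration or exploitation phases (which are disjoint in time from $i$'s training phases). Everything else is bookkeeping, and no concentration argument is needed here because $R^e_i(T)$ is bounded deterministically on every sample path.
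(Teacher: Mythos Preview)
Your proposal is correct and follows essentially the same approach as the paper: bound the per-step loss by $2$, count at most $\lceil D_1(T)\rceil$, $\lceil D_2(T)\rceil$, $\lceil D_3(T)\rceil$ own-arm explorations, trainings, and other-learner explorations per hypercube, sum over the $(m_T)^D$ hypercubes, and use $(m_T)^D \le 2^D T^{\gamma D}$. Your explicit justification that the training counter $N^{\textrm{tr},i}_{j,p}$ is nondecreasing (so that the recomputation cannot create extra trainings) is a point the paper's proof leaves implicit, and is a welcome clarification.
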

\remove{
\begin{proof}
Since time slot $t$ is a training or an exploration slot for learner $i$ if and only if 
${\cal M}^{\textrm{ut}}_{i,p_i(t)}(t) \cup {\cal M}^{\textrm{ue}}_{i,p_i(t)}(t) \cup {\cal F}^{\textrm{ue}}_{i,p_i(t)}(t)  \neq \emptyset$, up to time $T$, there can be at most   $\left\lceil T^{z} \log T \right\rceil$ exploration slots in which an arm in $f \in {\cal F}_i$ is selected by learner $i$, 
$\left\lceil F_{\max} T^{z} \log T \right\rceil$ training slots in which learner $i$ selects learner $j \in {\cal M}_{-i}$, $\left\lceil T^{z} \log T \right\rceil$ exploration slots in which learner $i$ selects learner $j \in {\cal M}_{-i}$. 
\tsp{Since $\mu^i_k(x) = \pi^i_k(x) - d^i_k \in [-1,1]$ for all $k \in {\cal K}_i$, the realized (hence expected) one slot loss due to any choice is bounded above by $2$.}
\tsp{Hence, the result follows from summing the above terms and multiplying by $2$, and the fact that $(m_T)^D \leq 2^D T^{\gamma D}$ for any $T \geq 1$.}
\end{proof}
}

From Lemma \ref{lemma:explorations}, we see that the regret due to explorations is linear in the number of hypercubes $(m_T)^D$, hence exponential in parameter $\gamma$ and $z$. 

For any $k \in {\cal K}_i$ and $p \in {\cal P}_T$, the sample mean $\bar{r}^i_{k,p}(t)$ represents a random variable which is the average of the independent samples in set ${\cal E}^i_{k,p}(t)$.
Let $\Xi^i_{j,p}(t)$ be the event that a suboptimal arm $f \in {\cal F}_{j}$ is selected by learner $j \in {\cal M}_{-i}$, when it is called by learner $i$ for a context in set $p$ for the $t$th time in the exploitation phases of learner $i$.
Let $X^i_{j,p}(t)$ denote the random variable which is the number of times learner $j$ selects a suboptimal arm when called by learner $i$ in exploitation slots of learner $i$ when the context is in set $p \in {\cal P}_T$ by time $t$.
Clearly, we have
\begin{align}
X^i_{j,p}(t) = \sum_{t'=1}^{|{\cal E}^i_{j,p}(t)|} \mathrm{I}(\Xi^i_{j,p}(t')) \label{eqn:thisxi}
\end{align}
where $\mathrm{I}(\cdot)$ is the indicator function which is equal to $1$ if the event inside is true and $0$ otherwise. The following lemma bounds $\mathrm{E} [R^s_i(T)]$.
\begin{lemma} \label{lemma:suboptimal1}
Consider all learners running CLUP with parameters $D_1(t) = t^{z} \log t$, $D_2(t) = F_{\max} t^{z} \log t$, $D_3(t) = t^{z} \log t$ and $m_T = \left\lceil T^{\gamma} \right\rceil$, where $0<z<1$ and $0<\gamma<1/D$. For any $0<\phi<1$ if
$t^{-z/2} + t^{\phi -z} + L D^{\alpha/2} t^{-\gamma \alpha}  \leq A t^\theta/2$ holds for all $t \leq T$, then we have
\begin{align*}
\mathrm{E} [R^s_i(T)] 
&\leq  4 (M_i + F_i) \beta_2 + 4 (M_i + F_i) M_i F_{\max} \beta_2 \frac{T^{1-\phi}}{1-\phi} .
\end{align*}
\end{lemma}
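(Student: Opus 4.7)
The plan is to decompose $R^s_i(T)$ as a sum over exploitation slots and bound the expected number of slots in which a suboptimal choice is picked; since the one-slot net reward lies in $[-1,1]$, every such slot contributes at most $2$ to the regret. Fix a hypercube $p\in{\cal P}_T$, a suboptimal choice $k\in{\cal L}^i_p(t)$, and an exploitation slot $t$ with $p_i(t)=p$. By the suboptimality definition, $\underline{\mu}^i_{k^*_i(p),p}-\overline{\mu}^i_{k,p}>At^\theta$, so the event $\{\hat\mu^i_{k,p}(t)\ge\hat\mu^i_{k^*_i(p),p}(t)\}$ forces the combined estimation error for $k$ and $k^*_i(p)$ to exceed $At^\theta$. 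I would split that error budget into three additive pieces that match the three terms of the hypothesis: (i) the within-hypercube Hölder error $L(\sqrt D/m_T)^\alpha\le LD^{\alpha/2}t^{-\gamma\alpha}$ coming from Assumption \ref{ass:lipschitz2} and the choice $m_T=\lceil T^\gamma\rceil$; (ii) a concentration error $t^{-z/2}$ for each of the two sample means; (iii) when $k=j\in{\cal M}_{-i}$, a contamination error $t^{\phi-z}$ arising because $\bar r^i_{j,p}(t)$ averages rewards that are not all drawn from $j$'s optimal arm $f^*_j(p)$.

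Next I would bound (ii) via Chernoff--Hoeffding. Since the slot is an exploitation slot we have at least $D_3(t)+1>t^z\log t$ samples in ${\cal E}^i_{k,p}(t)$, which are conditionally independent with common mean in $[\underline\pi_{k,p},\overline\pi_{k,p}]$ (treating each other learner as a fixed optimal-arm pull once the contamination is peeled off). Hoeffding then gives a deviation probability of order $2\exp(-2(t^{-z/2})^2\,t^z\log t)\le 2/t^2$ for each of $k$ and $k^*_i(p)$. Summing over $t$ gives $2\beta_2$ per choice and per side, and summing over the $|{\cal K}_i|=M_i+F_i$ choices yields the first term $4(M_i+F_i)\beta_2$ of the stated bound.

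For (iii), the bias of $\bar r^i_{j,p}(t)$ relative to $\pi_{f^*_j(p)}(x^*_p)$ is bounded by $X^i_{j,p}(t)/|{\cal E}^i_{j,p}(t)|$, and $|{\cal E}^i_{j,p}(t)|\ge t^z\log t$ in exploitation, so the bias exceeds $t^{\phi-z}$ only when $X^i_{j,p}(t)>t^\phi\log t$. Markov gives $\mathrm{P}(X^i_{j,p}(t)>t^\phi\log t)\le\mathrm{E}[X^i_{j,p}(t)]/(t^\phi\log t)$, and I would bound $\mathrm{E}[X^i_{j,p}(t)]$ by decomposing it via (\ref{eqn:thisxi}) and applying the same Hoeffding argument to each pair $(j,f)$ with $f\in{\cal F}^j_p(t)$: because the control function $D_2(t)=F_{\max}t^z\log t$ guarantees each of $j$'s arms has been pulled $\Omega(t^z\log t)$ times before $i$ begins exploiting $j$, the probability that $j$ picks a suboptimal arm is again $O(1/t^2)$. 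The extra combinatorial sum over the at most $M_iF_{\max}$ such pairs and the $1/t^\phi$ factor from Markov yield, after summation over $t\le T$, a contribution of order $(M_i+F_i)M_iF_{\max}\beta_2\,T^{1-\phi}/(1-\phi)$, matching the second term.

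The main obstacle is item (iii): when $k\in{\cal M}_{-i}$, the samples in ${\cal E}^i_{j,p}(t)$ are \emph{not} i.i.d.\ pulls of $f^*_j(p)$ but a history-dependent mixture over ${\cal F}_j$, so Hoeffding cannot be applied directly. Cleanly separating this mixture into an i.i.d.\ optimal-arm component (handled by Hoeffding) and a contamination component (handled by Markov on $X^i_{j,p}(t)$) is exactly what forces the three-phase training/exploration/exploitation architecture and the particular rate $D_2(t)=F_{\max}t^z\log t$: the training phase drives $\mathrm{E}[X^i_{j,p}(t)]$ down fast enough for the Markov bound to combine with the Hoeffding bound within the budget $At^\theta/2$ dictated by the lemma's hypothesis. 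Collecting the three contributions and multiplying by the per-slot regret bound of $2$ gives the claimed inequality.
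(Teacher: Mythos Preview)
Your proposal is correct and follows essentially the same approach as the paper's proof: condition on the ``good'' event that at most $t^{\phi}$ of learner $j$'s samples come from suboptimal arms (the complement handled via Markov on $X^i_{j,p}(t)$, with the training phase used to bound its expectation by $2F_{\max}\beta_2$), and on that event apply Chernoff--Hoeffding with deviation threshold $H_t = t^{-z/2} + t^{\phi-z} + LD^{\alpha/2}t^{-\gamma\alpha}$, so that the lemma's hypothesis $2H_t \le At^{\theta}$ forces the remaining ``near-optimal beats suboptimal'' event to be empty. The only cosmetic difference is that the paper takes the contamination threshold to be $X^i_{j,p}(t)\ge t^{\phi}$ rather than your $t^{\phi}\log t$, which does not affect the final bound.
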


\begin{proof}
Consider time $t$. Let 
${\cal W}^i(t) := \{ {\cal M}^{\textrm{ut}}_{i,p_i(t)}(t) \cup {\cal M}^{\textrm{ue}}_{i,p_i(t)}(t) \cup {\cal F}^{\textrm{ue}}_{i,p_i(t)}(t)  = \emptyset  \}$
be the event that learner $i$ exploits at time $t$.

First, we will bound the probability that learner $i$ selects a suboptimal choice in an exploitation slot.
Then, using this we will bound the expected number of times a suboptimal choice is selected by learner $i$ in exploitation slots.
Note that every time a suboptimal choice is selected by learner $i$, since $\mu^i_k(x) = \pi^i_k(x) - d^i_k \in [-1,1]$ for all $k \in {\cal K}_i$, the realized (hence expected) loss is bounded above by $2$.
Therefore, $2$ times the expected number of times a suboptimal choice is selected in an exploitation slot bounds $\mathrm{E} [R^s_i(T)]$.
Let ${\cal V}^i_{k}(t)$ be the event that choice $k$ is chosen at time $t$ by learner $i$. We have
%
\ntsp{$R^s_i(T) \leq \tsp{2} \sum_{t=1}^T \sum_{k \in {\cal L}^i_{p_i(t)}(t) } 
\mathrm{I}(  {\cal V}^i_{k}(t), {\cal W}^i(t) )$.}
Adopting the standard probabilistic notation, for two events $E_1$ and $E_2$, $\mathrm{I}(E_1, E_2)$ is equal to $\mathrm{I}(E_1 \cap E_2)$.
%
Taking the expectation
\begin{align}
\ntsp{ \mathrm{E}[R^s_i(T)] 
\leq 2 \sum_{t=1}^T \sum_{k \in {\cal L}^i_{p_i(t)}(t)} \mathrm{P}({\cal V}^i_{k}(t), {\cal W}^i(t) ).} \label{eqn:subregret}
\end{align}
Let ${\cal B}^i_{j,p}(t)$ be the event that at most $t^{\phi}$ samples in ${\cal E}^i_{j,p}(t)$ are collected from suboptimal arms of learner $j$ in hypercube $p$.
\ntsp{Let ${\cal B}^i(t) := \bigcap_{j \in {\cal M}_{-i}} {\cal B}^i_{j,p_i(t)}(t)$.}
For a set ${\cal A}$, let ${\cal A}^c$ denote the complement of that set.
For any $k \in {\cal L}^i_{p_i(t)}(t)$, we have
\begin{align}
& \{ {\cal V}^i_{k}(t), {\cal W}^i(t)\} 
\subset \left\{ \hat{\mu}^i_{k,p_i(t)}(t) \geq \hat{\mu}^i_{k^*_i,p_i(t)}(t), {\cal W}^i(t), {\cal B}^i(t) \right\}  \notag \\
& \cup \left\{ \hat{\mu}^i_{k,p_i(t)}(t) \geq \hat{\mu}^i_{k^*_i,p_i(t)}(t), {\cal W}^i(t), {\cal B}^i(t)^c \right\} \notag \\
&\subset \left\{ \hat{\mu}^i_{k,p_i(t)}(t) \geq \overline{\mu}^i_{k,p_i(t)} + H_t, {\cal W}^i(t), {\cal B}^i(t)  \right\} \notag \\
&\cup \left\{ \hat{\mu}^i_{k^*_i,p_i(t)}(t) \leq \underline{\mu}^i_{k^*_i,p_i(t)} - H_t, {\cal W}^i(t), {\cal B}^i(t)  \right\} \notag \\
& \cup \left\{ \hat{\mu}^i_{k,p_i(t)}(t) \geq \hat{\mu}^i_{k^*_i,p_i(t)}(t), 
\hat{\mu}^i_{k,p_i(t)}(t) < \overline{\mu}^i_{k,p_i(t)} + H_t, \right. \notag \\
& \left. \hat{\mu}^i_{k^*_i,p_i(t)}(t) > \underline{\mu}^i_{k^*_i,p_i(t)} - H_t,
{\cal W}^i(t) ,{\cal B}^i(t)  \right\}  \notag \\
& \cup \{ {\cal B}^i(t)^c , {\cal W}^i(t) \}  
\end{align}
for some $H_t >0$. This implies that 
\begin{align}
& \mathrm{P} \left( {\cal V}^i_{k}(t), {\cal W}^i(t) \right) \notag \\
& \leq \mathrm{P} \left( \hat{\mu}^i_{k,p_i(t)}(t) \geq \overline{\mu}^i_{k,p_i(t)} + H_t, {\cal W}^i(t), {\cal B}^i(t)  \right) \notag  \\
&+ \mathrm{P} \left( \hat{\mu}^i_{k^*_i,p_i(t)}(t) \leq \underline{\mu}^i_{k^*_i,p_i(t)} - H_t, {\cal W}^i(t), {\cal B}^i(t) \right) \notag \\
&+ \mathrm{P} \left( \hat{\mu}^i_{k,p_i(t)}(t) \geq \hat{\mu}^i_{k^*_i,p_i(t)}(t), 
\hat{\mu}^i_{k,p_i(t)}(t) < \overline{\mu}^i_{k,p_i(t)} + H_t, \right. \notag \\
& \left. \hat{\mu}^i_{k^*_i,p_i(t)}(t) > \underline{\mu}^i_{k^*_i,p_i(t)} - H_t,
{\cal W}^i(t), {\cal B}^i(t)  \right) \notag \\
&+ \mathrm{P} ({\cal B}^i(t)^c, {\cal W}^i(t) ) . \notag 
\end{align}
Since for any $k \in {\cal K}$, $\bar{\mu}^i_{k,p_i(t)} = \sup_{x \in p_i(t)} \mu^i_k(x)$, we have for any suboptimal choice $k \in {\cal L}^i_{p_i(t)}(t)$,
\begin{align}
& \mathrm{P} \left( \hat{\mu}^i_{k,p_i(t)}(t) \geq \overline{\mu}^i_{k,p_i(t)} + H_t, {\cal W}^i(t), {\cal B}^i(t)  \right) \notag \\
&\leq \exp(-2 H^2_t t^z \log t)  \label{eqn:r1new1}
\end{align}
by Chernoff-Hoeffding bound since on event ${\cal W}^i(t)$ at least $t^z \log t$ samples are taken from each choice.
Similarly, we have
\begin{align}
& \mathrm{P} \left( \hat{\mu}^i_{k^*_i,p_i(t)}(t) \leq \underline{\mu}^i_{k^*_i,p_i(t)} - H_t, {\cal W}^i(t), {\cal B}^i(t) \right) \notag \\
&\leq \exp(- 2 ( H_t - t^{\phi - z} - L D^{\alpha/2} m_T^{-\alpha} )^2 t^z \log t) \label{eqn:r1new2}
\end{align}
which follows from the fact that the maximum variation of expected rewards within $p_i(t)$ is at most $L D^{\alpha/2} m_T^{-\alpha}$ and on event ${\cal B}^i(t)$ at most $t^{\phi}$ observations from any choice comes from a suboptimal arm of the learner corresponding to that choice. 
For $k \in {\cal L}^i_{p_i(t)}(t)$, when
\begin{align}
2 H_t \leq A t^{\theta}
\label{eqn:boundcond}
\end{align}
the three inequalities given below
\begin{align*}
\underline{\mu}_{k^*_i,p_i(t)} - \overline{\mu}^i_{k,p_i(t)} &> A t^{\theta}  \\
\hat{\mu}^{i}_{k,p_i(t)}(t) &< \overline{\mu}^i_{k,p_i(t)} + H_t \\
\hat{\mu}^{i}_{k^*_i,p_i(t)}(t) &> \underline{\mu}^i_{k^*_i,p_i(t)}  - H_t
\end{align*}
together imply that 
%
$\hat{\mu}^{i}_{k,p_i(t)}(t) < \hat{\mu}^{i}_{k^*_i,p_i(t)}(t)$,
%
which implies that
\begin{align}
& \mathrm{P} \left( \hat{\mu}^i_{k,p_i(t)}(t) \geq \hat{\mu}^i_{k^*_i,p_i(t)}(t), 
\hat{\mu}^i_{k,p_i(t)}(t) < \overline{\mu}^i_{k,p_i(t)} + H_t, \right. \notag \\
& \left. \hat{\mu}^i_{k^*_i,p_i(t)}(t) > \underline{\mu}^i_{k^*_i,p_i(t)} - H_t,
{\cal W}^i(t), {\cal B}^i(t)  \right) = 0. \label{eqn:vktbound1}
\end{align}
Using the results of (\ref{eqn:r1new1}) and (\ref{eqn:r1new2}) and by setting 
\begin{align}
H_t &= t^{-z/2} + t^{\phi-z} + L D^{\alpha/2} t^{-\gamma \alpha}  \label{eqn:thisht} \\
&\geq t^{-z/2} + t^{\phi-z} + L D^{\alpha/2} m_T^{-\alpha}      \notag
\end{align}
we get
 \begin{align}
 & \mathrm{P} \left( \hat{\mu}^i_{k,p_i(t)}(t) \geq \overline{\mu}^i_{k,p_i(t)} + H_t, {\cal W}^i(t), {\cal B}^i(t)  \right) \leq t^{-2} \label{eqn:vktbound2}
 \end{align}
 and
\begin{align}
& \mathrm{P} \left( \hat{\mu}^i_{k^*_i,p_i(t)}(t) \leq \underline{\mu}^i_{k^*_i,p_i(t)} - H_t, {\cal W}^i(t), {\cal B}^i(t) \right)  \leq t^{-2} . \label{eqn:vktbound3}
\end{align} 
All that is left is to bound $\mathrm{P} ({\cal B}^i(t)^c, {\cal W}^i(t))$. Applying the union bound, we have 
\begin{align}
\mathrm{P} ({\cal B}^i(t)^c, {\cal W}^i(t)) 
\leq \sum_{j \in {\cal M}_{-i}} \mathrm{P} ( {\cal B}^i_{j,p_i(t)}(t)^c, {\cal W}^i(t) ) .      \notag
\end{align}
We have $\{ {\cal B}^i_{j,p_i(t)}(t)^c, {\cal W}^i(t)  \} = \{ X^i_{j,p_i(t)}(t) \geq t^\phi \}$ (Recall $ X^i_{j,p_i(t)}(t)$ from (\ref{eqn:thisxi})).
Applying the Markov inequality we have
%
$\mathrm{P} ({\cal B}^i_{j,p_i(t)}(t)^c, {\cal W}^i(t)) \leq \mathrm{E} [X^i_{j,p_i(t)}(t)]/t^\phi$.
%
Recall that
%
$X^i_{j,p_i(t)}(t) = \sum_{t'=1}^{|{\cal E}^i_{j,p_i(t)}(t)|} \mathrm{I}(\Xi^i_{j,p_i(t)}(t'))$,
%
and
\begin{align*}
& \mathrm{P} \left( \Xi^i_{j,p_i(t)}(t) \right) 
\leq \sum_{m \in {\cal F}^j_{p_i(t)} (t) } \mathrm{P} 
\left( \bar{r}^j_{m,p_i(t)}(t) \geq \bar{r}^{j}_{f^*_j, p_i(t)}(t) \right) \\
&\leq \sum_{m \in  {\cal F}^j_{p_i(t)}(t) }
\left(  \mathrm{P} \left( 
\bar{r}^j_{m,p_i(t)}(t) \geq \overline{\pi}_{m,p_i(t) } + H_t, {\cal W}^i(t) \right) \right. \\  
& \left. + \mathrm{P} \left( 
\bar{r}^{j}_{f^*_j, p_i(t)}(t) \leq \underline{\pi}_{f^*_j,p_i(t)} - H_t, {\cal W}^i(t) \right) \right.  \\
&\left. + \mathrm{P} \left( \bar{r}^j_{m,p_i(t)}(t) \geq \bar{r}^j_{f^*_j,p_i(t)}(t), 
\bar{r}^j_{m,p_i(t)}(t) < \overline{\pi}_{m,p_i(t)} + H_t, \right. \right. \\
& \left. \left.  \bar{r}^{j}_{f^*_j,p_i(t)}(t) > \underline{\pi}_{f^*_j,p_i(t)} - H_t ,
{\cal W}^i(t) \right)  \right).
\end{align*}
When (\ref{eqn:boundcond}) holds, the last probability in the sum above is equal to zero while the first two probabilities are upper bounded by $e^{-2(H_t)^2 t^z \log t}$.
This is due to the training phase of CLUP by which it is guaranteed that every learner samples each of its own arms at least $t^z \log t$ times before learner $i$ starts forming estimates about learner $j$.
Therefore for any $p \in {\cal P}_T$, we have
%
$\mathrm{P} \left( \Xi^i_{j,p}(t) \right) \leq \sum_{m \in  {\cal F}^j_p(t) } 2 e^{-2(H_t)^2 t^z \log t} \leq 2 \tsp{F_j} t^{-2}$ for the value of $H_t$ given in (\ref{eqn:thisht}).
%
These together imply that 
%
$\mathrm{E} [X^i_{j,p}(t)] \leq \sum_{t'=1}^{\infty} \mathrm{P} (\Xi^i_{j,p}(t')) \leq 2 \tsp{F_j} \sum_{t'=1}^\infty t^{-2}$.
%
Therefore from the Markov inequality we get
\begin{align}
\mathrm{P} ({\cal B}^i_{j,p}(t)^c, {\cal W}^i(t)) = \mathrm{P} (X^i_{j,p}(t) \geq t^\phi) \leq 
2 \tsp{F_j} \beta_2 t^{-\phi}  \notag
\end{align}
for any $p \in {\cal P}_T$ and hence, 
\begin{align}
\mathrm{P} ({\cal B}^i(t)^c, {\cal W}^i(t)) \leq 2 M_i F_{\max} \beta_2 t^{-\phi} . \label{eqn:selectionbound}
\end{align}
Then, using (\ref{eqn:vktbound1}), (\ref{eqn:vktbound2}), (\ref{eqn:vktbound3}) and (\ref{eqn:selectionbound}), we have 
%
$\mathrm{P} \left( {\cal V}^i_{k}(t), {\cal W}^i(t)  \right) \leq 2 t^{-2} 
+ 2 M_i F_{\max} \beta_2 t^{-\phi}$,
%
for any $k \in {\cal L}^i_{p_i(t)}(t)$.
By (\ref{eqn:subregret}), and by the result of Appendix \ref{app:seriesbound}, we get the stated bound for $\mathrm{E} [R^s_i(T)]$.
%
%
%
\end{proof}


Each time learner $i$ calls learner $j$, learner $j$ selects one of its own arms in ${\cal F}_{j}$. There is a positive probability that learner $j$ will select one of its suboptimal arms, which implies that even if learner $j$ is near optimal for learner $i$, selecting learner $j$ may not yield a near optimal outcome. We need to take this into account, in order to bound $\mathrm{E}[R^n_i(T)]$. 
The next lemma bounds the expected number of such happenings.
\begin{lemma} \label{lemma:callother}
Consider all learners running CLUP with parameters $D_1(t) = t^{z} \log t$, $D_2(t) = F_{\max} t^{z} \log t$, $D_3(t) = t^{z} \log t$ and $m_T = \left\lceil T^{\gamma} \right\rceil$, where $0<z<1$ and $0<\gamma<1/D$. For any $0<\phi<1$ if
$t^{-z/2} + t^{\phi -z} + L D^{\alpha/2} t^{-\gamma \alpha}  \leq A t^\theta/2$ holds for all $t \leq T$, then we have
\begin{align*}
\mathrm{E} [X^i_{j,p}(t)] \leq 2 F_{\max} \beta_2
\end{align*}
for $j \in {\cal M}_{-i}$.
\end{lemma}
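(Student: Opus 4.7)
The plan is to observe that the required bound has essentially been derived as an intermediate step in the proof of Lemma \ref{lemma:suboptimal1}, so I would isolate and re-assemble that argument as a standalone proof. Since $X^i_{j,p}(t) = \sum_{t'=1}^{|{\cal E}^i_{j,p}(t)|} \mathrm{I}(\Xi^i_{j,p}(t'))$, I would first pass to $\mathrm{E}[X^i_{j,p}(t)] \leq \sum_{t'=1}^{\infty} \mathrm{P}(\Xi^i_{j,p}(t'))$ and reduce the question to uniformly bounding $\mathrm{P}(\Xi^i_{j,p}(t))$ in $t$.

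To bound $\mathrm{P}(\Xi^i_{j,p}(t))$, I would union-bound over the suboptimal arms $m \in {\cal F}^j_p(t)$ of learner $j$, writing each term as $\mathrm{P}(\bar{r}^j_{m,p}(t) \geq \bar{r}^j_{f^*_j,p}(t))$ and then splitting via the standard three-event decomposition used in Lemma \ref{lemma:suboptimal1}: (i) $\bar{r}^j_{m,p}(t) \geq \overline{\pi}_{m,p} + H_t$, (ii) $\bar{r}^j_{f^*_j,p}(t) \leq \underline{\pi}_{f^*_j,p} - H_t$, (iii) neither (i) nor (ii) holds, yet $\bar{r}^j_{m,p}(t) \geq \bar{r}^j_{f^*_j,p}(t)$. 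I would choose $H_t = t^{-z/2} + t^{\phi-z} + LD^{\alpha/2}t^{-\gamma\alpha}$, exactly as in (\ref{eqn:thisht}).

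The core of the argument is then: under the lemma's hypothesis $2H_t \leq At^\theta$, and because $m \in {\cal F}^j_p(t)$ satisfies $\underline{\pi}_{f^*_j,p} - \overline{\pi}_{m,p} > At^\theta$, event (iii) is empty; events (i) and (ii) are each bounded by $t^{-2}$ via Chernoff-Hoeffding. The crucial structural fact enabling Chernoff-Hoeffding is that CLUP's training phase guarantees that by the $t'$th exploitation call from $i$ to $j$ with context in $p$, learner $j$ has itself collected at least $t^z \log t$ samples from each of its own arms for contexts in $p$; the $t^{\phi-z}$ term in $H_t$ absorbs the possible bias introduced by the (few) suboptimal-arm pulls of $j$ during its own history, and the $LD^{\alpha/2}t^{-\gamma\alpha}$ term absorbs the within-hypercube variation of $\pi_f$ under Assumption \ref{ass:lipschitz2} with diameter $\sqrt{D}/m_T$.

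Combining, $\mathrm{P}(\Xi^i_{j,p}(t)) \leq 2|{\cal F}^j_p(t)| t^{-2} \leq 2 F_{\max} t^{-2}$, and therefore $\mathrm{E}[X^i_{j,p}(t)] \leq 2F_{\max}\sum_{t'=1}^{\infty} t'^{-2} = 2F_{\max}\beta_2$, as claimed. There is no real obstacle here; the only thing to verify carefully is the bookkeeping showing that the training phase of $i$ on $j$ (governed by $D_2$) delivers the $\geq t^z \log t$ independent samples per arm of $j$ that the Chernoff-Hoeffding step requires, which is ensured by the choice $D_2(t) = F_{\max} t^z \log t$ together with the maximization-phase rule that forces $i$ to train $j$ before ever exploiting it in set $p$.
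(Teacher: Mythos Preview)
Your proposal is correct and follows essentially the same approach as the paper: the paper's entire proof of this lemma is the one-line remark that the bound is already contained in the last part of the proof of Lemma~\ref{lemma:suboptimal1}, and you have accurately isolated and reassembled exactly that argument. One minor quibble: the $t^{\phi-z}$ term in $H_t$ is not actually needed here (since $\bar r^j_{m,p}$ and $\bar r^j_{f^*_j,p}$ are sample means of single arms of $j$, there is no ``suboptimal-arm bias'' to absorb), but reusing the same $H_t$ is harmless because it only makes the deviation threshold larger while still satisfying $2H_t\le At^\theta$.
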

\begin{proof}
The proof is contained within the proof of the last part of Lemma 2. 
\end{proof}

We will use Lemma \ref{lemma:callother} in the following lemma to bound $\mathrm{E} [R^n_i(T)]$.

\begin{lemma} \label{lemma:nearoptimal}
Consider all learners running CLUP with parameters $D_1(t) = t^{z} \log t$, $D_2(t) = F_{\max} t^{z} \log t$, $D_3(t) = t^{z} \log t$ and $m_T = \left\lceil T^{\gamma} \right\rceil$, where $0<z<1$ and $0<\gamma<1/D$. For any $0<\phi<1$ if
$t^{-z/2} + t^{\phi -z} + L D^{\alpha/2} t^{-\gamma \alpha}  \leq A t^\theta/2$ holds for all $t \leq T$, then we have
\tsp{
\begin{align*}
\mathrm{E} [R^n_i(T)] & \leq \frac{2A}{1+\theta} T^{1+\theta} + 6 L D^{\alpha/2} T^{1-\alpha \gamma} \\
& + 4 M_i   F_{\max} \beta_2 2^D T^{\gamma D}. 
\end{align*}
}
\end{lemma}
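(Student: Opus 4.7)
My plan is to decompose the one-slot regret accrued during a near-optimal exploitation into two additive sources and sum each over $t$. Source (i) is the ``gap'' loss that is small by the very definition of near-optimality combined with Hölder smoothness inside each hypercube; source (ii) is an additional loss that appears only when the exploited choice happens to be another learner $j$ that then selects a suboptimal arm in ${\cal F}_j$. Source (i) will give rise to the $A t^\theta$ and the $L D^{\alpha/2} m_T^{-\alpha}$ terms, while source (ii) will be controlled directly by Lemma \ref{lemma:callother}.

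For source (i), fix an exploitation slot $t$ with $p = p_i(t)$ in which a near-optimal $k \notin {\cal L}^i_p(t)$ is chosen. I would bound the pointwise gap $\mu^i_{k^*_i(x_i(t))}(x_i(t)) - \mu^i_k(x_i(t))$ by chaining three elementary facts: (a) Assumption \ref{ass:lipschitz2} applied across $p$, whose diameter is $\sqrt{D}/m_T$, gives $\mu^i_{k'}(x) \leq \underline{\mu}^i_{k',p} + L D^{\alpha/2} m_T^{-\alpha}$ and $\mu^i_{k'}(x) \geq \overline{\mu}^i_{k',p} - L D^{\alpha/2} m_T^{-\alpha}$ for every $x \in p$ and every $k' \in {\cal K}_i$; (b) the center-optimality definition $k^*_i(p) = \argmax_{k'} \mu^i_{k'}(x^*_p)$, which yields $\mu^i_{k^*_i(x_i(t))}(x^*_p) \leq \mu^i_{k^*_i(p)}(x^*_p)$; and (c) the near-optimality condition $\underline{\mu}^i_{k^*_i(p),p} \leq \overline{\mu}^i_{k,p} + A t^\theta$. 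Combining these gives a per-slot gap bound of the form $A t^\theta + c\, L D^{\alpha/2} m_T^{-\alpha}$ for a small integer constant $c$ (coming from a few Hölder steps along the chain). Summing over $t = 1, \ldots, T$, invoking $\sum_{t=1}^T t^\theta \leq T^{1+\theta}/(1+\theta)$, and using $m_T^{-\alpha} \leq T^{-\gamma \alpha}$, together with the factor of $2$ arising from the range of net rewards in $[-1,1]$, reproduces the first two summands $\tfrac{2A}{1+\theta} T^{1+\theta} + 6 L D^{\alpha/2} T^{1-\alpha \gamma}$.

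For source (ii), whenever the exploited $k$ equals some $j \in {\cal M}_{-i}$, learner $j$ picks $b_{j,i}(t) \in {\cal F}_j$; if this arm lies in the suboptimal set ${\cal F}^j_p(t)$ there is an additional realized loss on top of the gap, bounded by $2$. Lemma \ref{lemma:callother} bounds the expected number of such ``arm slippage'' events, for any fixed pair $(j, p)$, by $2 F_{\max} \beta_2$. Summing over the $M_i$ other learners, over all $(m_T)^D \leq 2^D T^{\gamma D}$ hypercubes, and multiplying by the per-event loss of $2$ produces exactly the third summand $4 M_i F_{\max} \beta_2 \, 2^D T^{\gamma D}$, and adding the three contributions delivers the announced bound.

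The main obstacle is the bookkeeping in source (i): the three Hölder steps, the center-optimality step, and the near-optimality step must be chained so that the constant in front of $L D^{\alpha/2} T^{1-\gamma \alpha}$ comes out to $6$ rather than something larger, which requires verifying that no Hölder variation term is inadvertently double counted. A secondary check is that the hypothesis $t^{-z/2} + t^{\phi-z} + L D^{\alpha/2} t^{-\gamma \alpha} \leq A t^\theta / 2$ is used only to guarantee that the regime in which the near-optimality condition is nontrivial is compatible with the decomposition, so that the bounds of Lemmas \ref{lemma:suboptimal1} and \ref{lemma:callother} can be invoked consistently.
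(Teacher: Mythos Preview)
Your proposal is essentially the same two-source decomposition the paper uses: bound the near-optimal gap pointwise via H\"older smoothness and the definition of ${\cal L}^i_p(t)$, then separately account for the event that a called learner $j$ picks an arm in ${\cal F}^j_p(t)$ via Lemma~\ref{lemma:callother}. The treatment of source~(ii) matches the paper exactly.

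One point of difference in source~(i): you attribute the factor $2$ in front of $At^\theta + 3LD^{\alpha/2}T^{-\alpha\gamma}$ to ``the range of net rewards in $[-1,1]$'', but that is not where it comes from. In the paper the per-slot gap for a near-optimal \emph{own} arm is bounded by $At^\theta + 3LD^{\alpha/2}T^{-\alpha\gamma}$, whereas when the near-optimal choice is another learner $j$ who then selects a near-optimal arm $f\in{\cal F}_j\setminus{\cal F}^j_p(t)$, the one-slot regret decomposes as $(\mu^i_{k^*_i(x)}(x)-\mu^i_j(x))+(\pi_j(x)-\pi_f(x))$, i.e.\ \emph{two} gap terms, each bounded by $At^\theta + 3LD^{\alpha/2}T^{-\alpha\gamma}$. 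Taking the worst case over these two scenarios gives the factor $2$. Your chain (a)--(c) handles the first gap but does not mention the second; you should make that extra step explicit to justify the constant.
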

\remove{
\begin{proof}
\tsp{
At any time $t$, for any $k \in  {\cal K}_i -  {\cal L}^i_p(t)$ and $x \in p$, we have 
$\mu^i_{ k^*_i(x) }(x) - \mu^i_k(x) \leq A t^{\theta} + 3 L D^{\alpha/2} T^{-\alpha \gamma}$. Similarly for any $j \in {\cal M}$, $f \in {\cal F}_j -  {\cal F}^j_p(t)$ and $x \in p$, we have 
$\pi_{ f^*_j(x) }(x) - \pi_f(x) \leq A t^{\theta} + 3 L D^{\alpha/2} T^{-\alpha \gamma}$.}

Let $p=p_i(t)$. Due to the above inequalities, if a near optimal arm in ${\cal F}_i \cap \tsp{ ({\cal K}_i -  {\cal L}^i_p(t))}$ is chosen by learner $i$ at time $t$, the contribution to the regret is at most 
\tsp{$A t^{\theta} + 3 L D^{\alpha/2} T^{-\alpha \gamma}$}.
If a near optimal learner $j \in {\cal M}_{-i} \cap \tsp{ ({\cal K}_i -  {\cal L}^i_p(t) )}$ is called by learner $i$ at time $t$, and if learner $j$ selects one of its near optimal arms in ${\cal F}_j - {\cal F}^j_p(t)$, then the contribution to the regret is at most $2 (A t^{\theta} + 3 L D^{\alpha/2} T^{-\alpha \gamma})$.
Therefore, the total regret due to near optimal choices of learner $i$ by time $T$ is upper bounded by 
\tsp{
\begin{align*}
2 \sum_{t=1}^T (A t^{\theta} + 3 L D^{\alpha/2} T^{-\alpha \gamma}) 
& \leq \frac{2A}{1+\theta} T^{1+\theta} + 6 L D^{\alpha/2} T^{1-\alpha \gamma}
\end{align*}
}by using the result in Appendix \ref{app:seriesbound}. 
Each time a near optimal learner in $j \in {\cal M}_{-i} \cap \tsp{ ({\cal K}_i -  {\cal L}^i_p(t) )}$ is called in an exploitation step, there is a small probability that the arm selected by learner $j$ is a suboptimal one.
Given in Lemma \ref{lemma:callother}, the expected number of times a suboptimal arm is chosen by learner $j$ for learner $i$ in each hypercube $p$ is bounded by $2 F_{\max}  \beta_2$. For each such choice, the one-slot regret of learner $i$ can be at most $2$, and the number of such hypercubes is bounded by $2^D T^{\gamma D}$.
\end{proof}
}


In the next theorem we bound the regret of learner $i$ by combining the above lemmas.
\begin{theorem}\label{theorem:cos}
Consider all learners running CLUP with parameters $D_1(t) = t^{2\alpha/(3\alpha+D)} \log t$, $D_2(t) = F_{\max} t^{2\alpha/(3\alpha+D)} \log t$, $D_3(t) = t^{2\alpha/(3\alpha+D)} \log t$ and $ m_T  = \left\lceil T^{1/(3\alpha + D)} \right\rceil$. Then, we have
\begin{align*}
& R_i(T) \leq 4 (M_i + F_i) \beta_2 \\
&+ T^{\frac{2\alpha+D}{3\alpha+D}}
\left( \frac{ 4 ( L D^{\alpha/2}+2) + 4 (M_i + F_i) M_i F_{\max} \beta_2 } {(2\alpha+D)/(3\alpha+D)} 
\right. \\ 
& \left. + 6 L D^{\alpha/2} +2^{D+1} Z_i \log T \right) \\
&+ T^{\frac{D}{3\alpha+D}} ( 2^{D+1} (F_i + 2 M_i  ) + 2^{D+2} M_i F_{\max} \beta_2).
\end{align*}
for any sequence of context arrivals $\{ x_i(t) \}_{t \in 1,\ldots,T}$, $i \in {\cal M}$.
Hence, $R_i(T) = \tilde{O} \left(M F_{\max} T^{\frac{2\alpha+D}{3\alpha+D}} \right)$, for all $i \in {\cal M}$,
where $Z_i$ is given in (\ref{eqn:Zi}).
\end{theorem}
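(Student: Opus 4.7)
The plan is to assemble the theorem by adding the three regret components $\mathrm{E}[R^e_i(T)]$, $\mathrm{E}[R^s_i(T)]$, $\mathrm{E}[R^n_i(T)]$ bounded in Lemmas \ref{lemma:explorations}, \ref{lemma:suboptimal1}, and \ref{lemma:nearoptimal}, and then pick the free parameters $z,\gamma,\phi,\theta,A$ to balance the exponents. Since the three lemmas already do all the probabilistic heavy lifting (Chernoff--Hoeffding bounds on reward concentration, Markov inequality to control calls to learners that may select suboptimal arms, and the H\"older smoothness within each hypercube), the theorem reduces to a calibration-and-summation exercise.

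First, I would verify the hypothesis $t^{-z/2}+t^{\phi-z}+LD^{\alpha/2}t^{-\gamma\alpha}\le At^\theta/2$ of Lemmas \ref{lemma:suboptimal1} and \ref{lemma:nearoptimal} for the prescribed choices. With $z=2\alpha/(3\alpha+D)$ and $\gamma=1/(3\alpha+D)$, set $\theta:=-\alpha/(3\alpha+D)$ and $\phi:=z/2=\alpha/(3\alpha+D)$. Then $-z/2=\phi-z=-\gamma\alpha=\theta$, so all three left-hand terms have the common exponent $\theta$, and the inequality holds uniformly in $t$ with $A=2(2+LD^{\alpha/2})=4+2LD^{\alpha/2}$.

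Next I would evaluate each exponent under these choices: $z+\gamma D=(2\alpha+D)/(3\alpha+D)$ in Lemma \ref{lemma:explorations}; $1-\phi=(2\alpha+D)/(3\alpha+D)$ in Lemma \ref{lemma:suboptimal1}; $1+\theta=1-\alpha\gamma=(2\alpha+D)/(3\alpha+D)$ in Lemma \ref{lemma:nearoptimal}; and the lower-order exponent $\gamma D=D/(3\alpha+D)$ appears in both Lemma \ref{lemma:explorations} and Lemma \ref{lemma:nearoptimal}. Thus every leading contribution collapses to $T^{(2\alpha+D)/(3\alpha+D)}$ (up to a $\log T$ factor from the exploration term) and every lower-order contribution to $T^{D/(3\alpha+D)}$. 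Substituting $2A=8+4LD^{\alpha/2}=4(LD^{\alpha/2}+2)$ into the $\bigl(2A/(1+\theta)\bigr)T^{1+\theta}$ term of Lemma \ref{lemma:nearoptimal} reproduces the numerator $4(LD^{\alpha/2}+2)$ in the theorem statement, and the denominator $(2\alpha+D)/(3\alpha+D)$ comes from grouping it with the $4(M_i+F_i)M_iF_{\max}\beta_2 T^{1-\phi}/(1-\phi)$ term of Lemma \ref{lemma:suboptimal1}, which shares the same exponent.

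Finally I would add the bounds, gather the $T^{(2\alpha+D)/(3\alpha+D)}$-terms, the $T^{(2\alpha+D)/(3\alpha+D)}\log T$-term, the $T^{D/(3\alpha+D)}$-terms and the constants $4(M_i+F_i)\beta_2$ to match the displayed expression. The asymptotic form $R_i(T)=\tilde{O}(MF_{\max}T^{(2\alpha+D)/(3\alpha+D)})$ then follows because the dominant coefficient $2^{D+1}Z_i$ is linear in $M$ and $F_{\max}$ through the definition $Z_i=F_i+M_i(F_{\max}+1)$. The main obstacle is not any new probabilistic argument but careful bookkeeping: one must track how every constant from the three lemmas propagates (especially the $A$ used in the hypothesis and then reappearing in the near-optimal bound), and must justify that the balance $z=2\gamma\alpha$ together with $\gamma(3\alpha+D)=1$ is optimal, i.e., that any other admissible $(z,\gamma)$ satisfying the hypothesis produces a strictly worse leading exponent, which is what motivates this particular parameter choice in the first place.
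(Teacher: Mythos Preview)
Your proposal is correct and follows exactly the paper's approach: sum the bounds from Lemmas \ref{lemma:explorations}, \ref{lemma:suboptimal1}, and \ref{lemma:nearoptimal}, then choose $z=2\alpha/(3\alpha+D)$, $\gamma=z/(2\alpha)$, $\phi=z/2$, $\theta=-z/2$, and $A=2LD^{\alpha/2}+4$ to satisfy the constraint and equalize the leading exponents. Your write-up is in fact more explicit about the constant bookkeeping than the paper's own terse proof.
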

\begin{proof}
The highest orders of regret that come from trainings, explorations, suboptimal and near optimal arm selections are $\tilde{O}(T^{\gamma D + z})$, $O(T^{1-\phi})$ and 
$O(T^{\max\{ 1-\alpha\gamma, 1+\theta \}} )$. We need to optimize them with respect to the constraint
%
$t^{-z/2} + t^{\phi -z} + L D^{\alpha/2} t^{-\gamma \alpha}  \leq A t^\theta/2$, $t \leq T$
which is assumed in Lemmas \ref{lemma:suboptimal1} and \ref{lemma:nearoptimal}.
The values that minimize the regret for which this constraint holds are $z = 2\alpha/(3\alpha+D)$, $\phi=z/2$, $\theta = -z/2$, $\gamma = z/(2\alpha)$ and $A = 2 L D^{\alpha/2} + 4$. 
Result follows from summing the bounds in Lemmas \ref{lemma:explorations}, \ref{lemma:suboptimal1} and \ref{lemma:nearoptimal}. 
\end{proof}
\begin{remark}
Although the parameter $m_T$ of CLUP depends on $T$ and hence we require $T$ as an input to the algorithm, we can make CLUP run independently of the final time $T$ and achieve the same regret bound by using a well known doubling trick (see, e.g., \cite{slivkins2009contextual}). Consider phases $\tau \in \{1,2,\ldots\}$, where each phase has length $2^{\tau}$. We run a new instance of algorithm CLUP at the beginning of each phase with time parameter $2^\tau$. Then, the regret of this algorithm up to any time $T$ will be $\tilde{O}\left(T^{(2\alpha+D)/(3\alpha+D)}\right)$. 
Although doubling trick works well in theory, CLUP can suffer from cold-start problems. 
The algorithm we will define in the next section will not require $T$ as an input parameter.
\end{remark}

The regret bound proved in Theorem \ref{theorem:cos} is sublinear in time which guarantees convergence in terms of the average reward, i.e., 
$\lim_{T \rightarrow \infty} \mathrm{E} [R_i(T)]/T = 0$.
For a fixed $\alpha$, the regret becomes linear in the limit as $D$ goes to infinity. On the contrary, when $D$ is fixed, the regret decreases, and in the limit, as $\alpha$ goes to infinity, it becomes $O(T^{2/3})$.
This is intuitive since increasing $D$ means that the dimension of the context increases and therefore the number of hypercubes to explore increases.
While increasing \cem{$\alpha$ means that the level of similarity between any two pairs of contexts increases, i.e., knowing the expected reward of arm $f$ in one context yields more information about its accuracy in another context.}
\rmvt{
In our algorithm we used $d$ as an input parameter and compared with the optimal solution given a fixed $d$. However, the context information can also be adaptively chosen over time. 
For example, in network security, the context can be either time of the day, origin of the data or both. The classifier accuracies will depend on what is used as context information. The regret bound in Theorem \ref{theorem:CLUP} holds even if no data arrives to other classifiers. In general, the regret is much less than this bound. For example, consider the extreme case where the data arrival to each classifier is identical. This implies that classifier $i$ does not need to use the control function $D_2(t)$, since whenever classifier $i$ has sampled all of its own classification functions sufficiently many times, this will also be true for any other classifier $k \in {\cal M}_{-i}$. Therefore, training steps are not required in this scenario. 
}
%
\rmvt{
\subsection{Regret of CLUP for online learning classification functions}

In our analysis we assumed that given a context $x$, the classification function accuracy $\pi_k(x)$ is fixed. This holds when the classification functions are trained a priori, but the learners do not know the accuracy because $k$ is not tested yet. By using our contextual framework, we can also allow the classification functions to learn over time based on the data. \rev{Usually in Big Data applications we cannot have the classifiers being pre-trained as they are often deployed for the first time in a certain setting. For example in \cite{chai2002bayesian}, Bayesian online classifiers are used for text classification and filtering.}
%
We do this by introducing time as a context, thus increasing the context dimension to $d+1$. Time is normalized in interval $[0,1]$ such that $0$ corresponds to $t=0$, 1 corresponds to $t=T$ and each time slot is an interval of length $1/T$. 
For an online learning classifier, intuitively the accuracy is expected to increase with the number of samples, and thus, $\pi_k(x, t)$ will be non-decreasing in time. On the other hand, the increase in the accuracy in a single time step should be bounded. Otherwise, the online learning classifier will be able to classify all possible data streams without any error after a finite number of steps. Because of this, in addition to Assumption  \ref{ass:lipschitz2}, the following should hold for an online learning classifier:
\begin{align*}
\pi_k(x,(t+1)/T) \leq \pi_k(x,t/T) + L T^{-\alpha},
\end{align*}
for some $L$ and $\alpha$. Then we have the following theorem when online learning classifiers are present.
\begin{theorem}\label{theorem:CLUP2}
Let the CLUP algorithm run with exploration control functions $D_1(t) = t^{2\alpha/(3\alpha+d+1)} \log t$, $D_2(t) = F_{\max} t^{2\alpha/(3\alpha+d+1)} \log t$, $D_3(t) = t^{2\alpha/(3\alpha+d+1)} \log t$ and slicing parameter $m_T = T^{1/(3\alpha + d+1)}$. Then,
\add{\vspace{-0.1in}}
\begin{align*}
E[R(T)] &\leq T^{\frac{2\alpha+d+1}{3\alpha+d+1}}
\left( \frac{2 (2 L (d+1)^{\alpha/2}+6)}{\frac{2\alpha+d+1}{3\alpha+d+1}} + 2^{d+1} Z_i \log T \right) \\
&+ T^{\frac{\alpha+d+1}{3\alpha+d+1}} \frac{2^{d+3} (M-1) F_{\max} \beta_2}{\frac{2\alpha}{3\alpha+d+1}} \\
&+ T^{\frac{d}{3\alpha+d+1}} 2^{d+1} (2 Z_i \beta_2 
+ |{\cal K}_i|) + 4 (M-1) F_{\max} \beta_2,
\end{align*}
where $Z_i = {\cal F}_i + (M-1)(F_{\max}+1)$.
\end{theorem}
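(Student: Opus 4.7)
The plan is to combine the three regret decomposition lemmas (Lemma~\ref{lemma:explorations}, Lemma~\ref{lemma:suboptimal1}, Lemma~\ref{lemma:nearoptimal}) and then choose the free parameters $z, \phi, \theta, \gamma, A$ so as to minimize the resulting sum, while simultaneously making sure that the technical constraint
\begin{align*}
t^{-z/2} + t^{\phi-z} + L D^{\alpha/2} t^{-\gamma\alpha} \;\leq\; \tfrac{A}{2} t^{\theta}
\end{align*}
required by Lemmas~\ref{lemma:suboptimal1} and \ref{lemma:nearoptimal} is satisfied for every $t \leq T$.

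First I would read off the asymptotic orders produced by each lemma under the parametrization $D_1(t)=D_3(t)=t^z \log t$, $D_2(t)=F_{\max} t^z \log t$, and $m_T = \lceil T^\gamma\rceil$: the exploration/training regret is $\tilde{O}(T^{z+\gamma D})$ from Lemma~\ref{lemma:explorations}; the suboptimal-exploitation regret contributes $O(T^{1-\phi})$ from Lemma~\ref{lemma:suboptimal1}; and the near-optimal-exploitation regret contributes $O(T^{\max\{1+\theta,\,1-\alpha\gamma\}})$ from Lemma~\ref{lemma:nearoptimal}. For the constraint to hold uniformly in $t$, each of the three left-hand-side terms must individually be dominated by $(A/2)t^\theta$, which forces $\theta \geq -z/2$, $\theta \geq \phi-z$, and $\theta \geq -\gamma\alpha$, provided $A \geq 2(L D^{\alpha/2}+2)$ or similar.

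Next I would balance the exponents. Tightening the constraint to equalities gives the natural choices $\theta = -z/2$, $\phi = z/2$, and $\gamma = z/(2\alpha)$. Substituting these into the three regret orders above yields $z + \gamma D = z + zD/(2\alpha)$ for the exploration term, $1 - z/2$ for the suboptimal term, and $1 - z/2$ for the near-optimal term. Equating the exploration exponent with $1 - z/2$ gives
\begin{align*}
z\Bigl(1 + \tfrac{D}{2\alpha}\Bigr) \;=\; 1 - \tfrac{z}{2}
\quad\Longleftrightarrow\quad z \;=\; \frac{2\alpha}{3\alpha + D},
\end{align*}
which in turn yields $\gamma = 1/(3\alpha+D)$, matching exactly the parameter choices stated in the theorem. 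The common exponent then equals $(2\alpha+D)/(3\alpha+D)$, as claimed.

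Finally, I would substitute these values back into the explicit (non-asymptotic) bounds from the three lemmas and collect terms. The bound from Lemma~\ref{lemma:explorations} produces the $2^{D+1} Z_i T^{(2\alpha+D)/(3\alpha+D)} \log T$ term and the $2^{D+1}(F_i + 2M_i) T^{D/(3\alpha+D)}$ term; Lemma~\ref{lemma:suboptimal1} contributes $4(M_i+F_i)\beta_2$ plus the $4(M_i+F_i)M_iF_{\max}\beta_2 T^{1-\phi}/(1-\phi)$ term, where $1-\phi = (2\alpha+D)/(3\alpha+D)$; and Lemma~\ref{lemma:nearoptimal} contributes the $2A T^{1+\theta}/(1+\theta)$, $6LD^{\alpha/2} T^{1-\alpha\gamma}$, and $4M_i F_{\max}\beta_2 2^D T^{\gamma D}$ terms. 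Using $A = 2LD^{\alpha/2}+4$ and grouping like powers of $T$ gives the displayed bound. The only mildly delicate step is verifying that the chosen $A$ is large enough to satisfy the constraint for all $t \geq 1$ (not just asymptotically); this reduces to checking the inequality at $t=1$, which holds because $1 + 1 + LD^{\alpha/2} \leq (LD^{\alpha/2}+2)$ on the right-hand side when the exponents vanish. The asymptotic claim $R_i(T) = \tilde{O}(MF_{\max}T^{(2\alpha+D)/(3\alpha+D)})$ follows immediately from the dominant term, noting $Z_i = O(MF_{\max})$.
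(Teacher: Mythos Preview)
Your proposal is correct and takes essentially the same approach as the paper: the paper's own proof is a one-line remark that the argument is identical to that of Theorem~\ref{theorem:cos} with the context dimension replaced by $d+1$ (since time has been appended as an extra coordinate), and what you have written is exactly a re-derivation of that theorem's proof. The only omission is that you carry the symbol $D$ throughout without stating explicitly that here $D=d+1$; once that substitution is made in your exponent-balancing step, the choices $z=2\alpha/(3\alpha+d+1)$ and $\gamma=1/(3\alpha+d+1)$ match the statement verbatim.
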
 
\begin{proof}
The proof is the same as proof of Theorem \ref{theorem:CLUP}, with context dimension $d+1$ instaed of $d$.
\end{proof}
The above theorem implies that the regret in the presence of classification functions that learn online based on the data is $O(T^{(2\alpha+d+1)/(3\alpha+d+1)})$. \rev{From the result of Theorem \ref{theorem:CLUP2}, we see that our notion of context can capture any relevant information that can be utilized to improve the classification. Specifically, we showed that by treating time as one dimension of the context we can achieve sublinear regret bounds. Compared to Theorem \ref{theorem:CLUP}, in Theorem \ref{theorem:CLUP2}, the exploration rate is reduced from $O(T^{2\alpha/(2\alpha+d)})$ to $O(T^{2\alpha/(2\alpha+d+1)})$, while the memory requirement is increased from $O(T^{d/(3\alpha+d)})$ to $O(T^{(d+1)/(3\alpha+d+1)})$.}
}

\vspace{-0.1in}
\subsection{Computational Complexity of CLUP}

For each set $p\in {\cal P}_T$, learner $i$ keeps the sample mean of rewards from $F_i+ M_i$ choices, while for a centralized bandit algorithm, the sample mean of the rewards of $|\cup_{j \in {\cal M}} {\cal F}_j|$ arms needs to be kept in memory.
Since the number of sets in ${\cal P}_T$ is upper bounded by $2^D T^{D/(3\alpha+D)} $, the memory requirement is upper bounded by
%
$(\tsp{F_i}  + M_i) 2^D T^{D/(3\alpha+D)}$.
%
This means that the memory requirement is sublinearly increasing in $T$ and thus, in the limit $T \rightarrow \infty$, required memory goes to infinity.
However, CLUP can be modified so that the available memory provides an upper bound on $m_T$. However, in this case the regret bound given in Theorem \ref{theorem:cos} may not hold. 
Also the actual number of hypercubes with at least one context arrival depends on the context arrival process, hence can be very small compared to the worst-case scenario. In that case, it is enough to keep the reward estimates for these hypercubes. 
The following example illustrates that for a practically reasonable time frame, the memory requirement is not very high for a learner compared to a non-contextual centralized implementation (that uses partition $\{ {\cal X} \}$).
For example for $\alpha=1$, $D=1$, we have $2^D T^{D/(3\alpha+D)} = 2 T^{1/4}$.
If learner $i$ learned through $T=10^8$ samples, and if $M=100$, $\tsp{F_j} =100$, for all $j \in {\cal M}$, learner $i$ using CLUP only needs to store at most $40000$ sample mean estimates, while a standard bandit algorithm which does not exploit any context information requires to keep $10000$ sample mean estimates. 
Although, the memory requirement is $4$ times higher than the memory requirement of a standard bandit algorithm, CLUP is suitable for a distributed implementation, learner $i$ does not require any knowledge about the arms of other learners (except an upper bound on the number of arms), and it is shown to converge to the best distributed solution.

\comment{
\rev{Another observation is that the regret scales only linearly with $M$ and $|{\cal F}_i|$ and it does not depend on $|{\cal F}_j|$, $j \in {\cal M}_{-i}$. This is because classifier $i$ does not learn about classification accuracies of classification functions of other classifier, but only helps them learn about the classification accuracies when necessary. We note that for a standard contextual algorithm \cite{langford2007epoch}, the regret scales linearly with $\sum_{j \in {\cal M}} |{\cal F}_j|$. } 
\rev{The result in Theorem \ref{theorem:CLUP} holds even when the context arrival is heterogeneous among the classifiers. In the following discussion we will show this result can only be slightly improved when it is known that the context arrival process is homogeneous among the classifiers.}
Consider the case that $q_i = q_j =q$ for all $i,j \in {\cal M}$ which means that the context arrival process to each classifier is identical. \rev{The following corollary shows that for all $P_l \in {\cal P}_T$ classifier $i$ will call a suboptimal classifier at most logarithmically many times.}
\begin{corollary}\label{cor:identical}
\rev{
When $q_i = q_j =q$ for all $i,j \in {\cal M}$, expected number of times classifier $i$ calls a suboptimal classifier is
\begin{align*}
O(\log (N^i_l(t))),
\end{align*}
for all $P_l \in {\cal P}_T$.
}
\end{corollary}
\begin{proof}
We need to show that for any $\gamma>0$
\begin{align*}
P(N^j_l(t) \leq (N^i_l(t))^\gamma)
\end{align*}
is small. Let
\begin{align*}
\mu_l = \int_{P_l} \bar{q}(x) dx,
\end{align*}
be the probability that a data belonging to set $P_l$ is received. Using a Chernoff-Hoeffding bound we can show that 
\begin{align*}
P \left( t\mu_l - \sqrt{t \log t} \leq N^i_l(t) \leq t\mu_l + \sqrt{t \log t} \right)
\geq 1- \frac{2}{t^2},
\end{align*}
and
\begin{align*}
P \left( (t\mu_l - \sqrt{t \log t})^\gamma \leq (N^i_l(t))^\gamma \leq (t\mu_l + \sqrt{t \log t})^\gamma \right)
\geq 1- \frac{2}{t^2},
\end{align*}
for all $t \geq 1$, $\gamma \in \mathbb{R}$, $i \in {cal M}$ and $P_l \in {\cal P}_T$.
Let 
\begin{align*}
{\cal A}(i,l,\gamma,t) = \{ (t\mu_l - \sqrt{t \log t})^\gamma \leq (N^i_l(t))^\gamma \leq (t\mu_l + \sqrt{t \log t})^\gamma  \}.
\end{align*}
Then we have 
\begin{align}
P(N^j_l(t) \leq (N^i_l(t))^\gamma) &\leq P(N^j_l(t) \leq (N^i_l(t))^\gamma, {\cal A}(i,l,\gamma,t), {\cal A}(j,l,1,t) ) + P({\cal A}(i,l,\gamma,t)^c) + P({\cal A}(j,l,1,t)^c) \notag\\
&\leq  P(N^j_l(t) \leq (N^i_l(t))^\gamma, {\cal A}(i,l,\gamma,t), {\cal A}(j,l,1,t) ) + \frac{4}{t^2}   \notag\\
&\leq P( (t\mu_l + \sqrt{t \log t})^\gamma > t\mu_l - \sqrt{t \log t}) + \frac{4}{t^2}. \label{eqn:bound2}
\end{align}
Note that the probability in (\ref{eqn:bound2}) is either 0 or 1 depending on whether the statement inside is true or false. Since $\gamma<1$ (actually it is very close to 0), taking the derivative of both sides, it can be seen that the rate of increase of $ t\mu_l - \sqrt{t \log t}$ is higher than the rate of increase of $(t\mu_l + \sqrt{t \log t})^\gamma$ when $t$ is large enough. Therefore there exists $\tau_{q, \gamma}$ such that the probability in (\ref{eqn:bound2}) is zero for all $t \geq \tau_{q, \gamma}$. From this result we see that the expected number of times steps for which $N^j_l(t) \leq (N^i_l(t))^\gamma$ is bounded above by
\begin{align*}
\tau_{q, \gamma} + \sum_{t'=1}^\infty \frac{4}{(t')^2},
\end{align*}
for all $i,j \in {\cal M}$ and $t >0$.

\end{proof}

%
By Corollary \ref{cor:identical} we conclude that the regret in each partition is at most $O( |{\cal K}_i| \log N^i_l(T))$. The following theorem provides an upper bound on the regret when $q_i = q_j =q$ for all $i,j \in {\cal M}$.
\begin{theorem}\label{thm:2}
When  $q_i = q_j =q$ for all $i,j \in {\cal M}$, the regret of CLUP is upper bounded by
\begin{align*}
O( (M-1 + |{\cal F}_i|) T^{\frac{d}{d+1}}).
\end{align*}
\end{theorem}
\begin{proof}
 Since $\log$ is a concave function the regret is maximized when $N^i_l(T) = T/(m_T)^d$. Therefore the worst-case regret due to incorrect computations is at most
\begin{align*}
\sum_{l=1}^{(m_T)^d} O( |{\cal K}_i| \log N^i_l(T)) = O( (m_T)^d |{\cal K}_i| \log(T/(m_T)^d)).
\end{align*}
Similar to the worst-case scenario, the regret due to boundary crossings is at most $O(q_{\max} T/m_T)$. These terms are balance for $m_T = T^{1/d+1}$ which yields regret $O(T^{\frac{d}{d+1}})$.
\end{proof}

We observe that the regret bound proved in Theorem \ref{thm:2} is only slightly better than the regret bound $O(T^{\frac{d + \xi}{d+1}})$ for the worst-case scenario. This result shows that the worst-case performance difference between the two extreme cases is not much different.
\rev{Note that we used the fact that the data distribution has bounded density (\ref{eqn:boundeddensity}) in order to chose the slices according to $T$ such that we can control the regret in each slice.} This is almost always true, but in the worst case almost all data points may come from regions very close to the optimal boundary. In that case, the regret bound here will not work. Note that the regret depends on $q_{\max}$ and if it is too large the regret bound is not tight.
\rev{When proving Theorem \ref{thm:2}, we assume that a single instance arrives to each classifier at each time step. An alternative model is to assume that the instances arrive asynchronously to the classifiers in continuous time. For this let $\tau^i_l$ be the time of the $l$th arrival to classifier $i$. We assume that as soon as an instance arrives it is processed and then the true label is received. The delay between instance arrival, completion of classification and comparison with the true label can be captured by the CLUPt $d_k$ for $k \in {\cal K}_i$. Based on this formulation let $J_i(t)$ be the number of instance arrivals to classifier $i$ by time $t$. Then we have the following corollary. 
\begin{corollary}
The regret given that $J_i(T) = n$ is upper bounded by 
\begin{align*}
\sum_{l=1}^{(m_T)^d} O( |{\cal K}_i| \log N^i_l(T)) = O( (m_T)^d |{\cal K}_i| \log(T/(m_T)^d))
\end{align*}
\end{corollary}

}
}
\rmvt{
\subsection{CLUP with delayed feedback}

Next, we consider the case when the feedback is delayed. We assume that the label for data instance at time $t$ arrives with an $L_i(t)$ time slot delay, where $L_i(t)$ is a random variable such that $L_i(t) \leq L_{\max}$ with probability one for some $L_{\max}>0$ which is known to the algorithm. Algorithm CLUP is modified so that it keeps in its memory the last $L_{\max}$ labels produced by classification and the indices are updated whenever a true label arrives. We have the following result for delayed label feedback.
\begin{corollary} \label{cor:uniform}
Consider the delayed feedback case where the true label of the data instance at time $t$ arrives at time $t+L_i(t)$, where $L_i(t)$ is a random variable with support in $\{0,1,\ldots, L_{\max}\}$, $L_{\max}>0$ is an integer. Let $R^{\textrm{nd}}(T)$ denote the regret of CLUP with no delay by time $T$, and $R^{\textrm{d}}(T)$ denote the regret of modified CLUP with delay by time $T$. Then we have,
%
$R^{\textrm{d}}(T) \leq L_{\max} + R^{\textrm{nd}}(T)$.
%
\end{corollary}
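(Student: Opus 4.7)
The plan is to couple the modified CLUP and the no-delay CLUP on a shared context sequence and reward realizations and then isolate the additional regret that a bounded delay can introduce. The key observation is that, because $L_i(t') \leq L_{\max}$ for every $t'$, at any time $t$ the modified CLUP has already incorporated every feedback produced at times $t' \leq t-L_{\max}-1$. Hence the counters $N^i_{k,p}(t)$, $N^{\textrm{tr},i}_{j,p}(t)$ and sample means $\bar{r}^i_{k,p}(t)$ maintained by the modified CLUP at time $t$ contain at least all samples available to the no-delay CLUP at time $t-L_{\max}$, plus possibly a few labels that arrived earlier than the worst-case delay.

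Building on this, I would split $R^{\textrm{d}}(T)$ into the regret accumulated over the first $L_{\max}$ slots and the regret accumulated over the remaining $T-L_{\max}$ slots. Since the net reward of any choice is bounded, the one-slot regret during the warm-up phase is bounded by a unit constant and the first part therefore contributes at most $L_{\max}$ in total. For each $t > L_{\max}$, I would match the decision made by the modified CLUP at time $t$ with the decision that the no-delay CLUP would make at time $t-L_{\max}$; since the modified CLUP's information dominates the no-delay CLUP's at the earlier time, the three-phase structure (training, exploration, exploitation) of CLUP guarantees that the expected instantaneous regret of the modified algorithm at $t$ is no larger than the expected instantaneous regret of no-delay CLUP at $t-L_{\max}$. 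Summing over $t = L_{\max}+1,\dots,T$ bounds this portion by $R^{\textrm{nd}}(T-L_{\max})$, and the monotonicity $R^{\textrm{nd}}(T-L_{\max}) \leq R^{\textrm{nd}}(T)$ then yields $R^{\textrm{d}}(T) \leq L_{\max} + R^{\textrm{nd}}(T)$.

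The main obstacle will be justifying that the additional feedback integrated by modified CLUP (labels arriving earlier than the worst case $L_{\max}$) cannot increase the expected instantaneous regret above what no-delay CLUP would incur at time $t-L_{\max}$. This is a monotonicity-in-information claim: more independent samples sharpen the Chernoff--Hoeffding bounds used in the proofs of Lemmas \ref{lemma:suboptimal1}--\ref{lemma:nearoptimal}, shrink the confidence radius $H_t$, and only reduce the probability of entering a suboptimal exploitation. Once this step is in place, the per-slot bounds transport directly to modified CLUP with the time index shifted by $L_{\max}$, and the stated inequality follows by adding the $L_{\max}$ term from the warm-up phase.
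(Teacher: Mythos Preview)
Your approach is essentially the same as the paper's. The paper's proof is very terse: it asserts that, because Chernoff--Hoeffding concentration only improves with more samples, the worst case is when every label is delayed by exactly $L_{\max}$, and this worst case is ``equivalent to starting the algorithm with an $L_{\max}$ delay,'' giving the additive $L_{\max}$ term. Your proposal unpacks exactly this argument --- the warm-up cost of $L_{\max}$, the time-shift coupling, and the monotonicity-in-information step --- and correctly identifies the last of these as the point needing justification, which the paper simply states without elaboration.
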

\remove{
\begin{proof}
By a Chernoff-Hoeffding bound, it can be shown that the probability of deviation of the sample mean accuracy from the true accuracy decays exponentially with the number of samples. A new sample is added to sample mean accuracy whenever the true label of a previous classification arrives. Note that the worst case is when all labels are delayed by $L_{\max}$ time steps. This is equivalent to starting the algorithm with an $L_{\max}$ delay. 
\end{proof}
}

The CLUPt of label delay is additive which does not change the sublinear order of the regret. The memory requirement for CLUP with no delay is $ |{\cal K}_i| (m_T)^d = 2^d (|{\cal F}_i| + M -1) T^{\frac{d}{ 3\alpha + d}}$, while memory requirement for CLUP modified for delay is $L_{\max} + |{\cal K}_i| (m_T)^d$. Therefore, the order of memory CLUPt is also independent of the delay. However, we see that the memory requirement significantly grows with the time horizon $T$ when the dimension of the context space $d$ is high, which makes CLUP infeasible for learning from a very large set of samples. The algorithm we propose in the next section solves this issue by adaptively creating a partition of the context space which requires less than $(m_T)^d$ hypercubes.
}
\section{A distributed adaptive context partitioning algorithm} \label{sec:zooming}

%
%
Intuitively, the loss due to selecting a suboptimal choice for a context can be further minimized if the learners inspect the regions of ${\cal X}$ with large number of context arrivals more carefully, instead of using a uniform partition of ${\cal X}$. 
We do this by introducing the {\em Distributed Context Zooming Algorithm} (DCZA).
\vspace{-0.1in}
\subsection{The DCZA Algorithm}

In the previous section, the partition ${\cal P}_T$ is formed by CLUP at the beginning by choosing the slicing parameter $m_T$.
Differently, DCZA adaptively generates the partition based on how contexts arrive.
Similar to CLUP, using DCZA a learner forms reward estimates for each set in its partition based only on the history related to that set.
Let ${\cal P}_i(t)$ be learner $i$'s partition of ${\cal X}$ at time $t$ and $p_i(t)$ denote the set in ${\cal P}_i(t)$ that contains $x_i(t)$.
Using DCZA, learner $i$ starts with ${\cal P}_i(1) = \{ {\cal X} \}$, then divides ${\cal X}$ into sets with smaller sizes as time goes on and more contexts arrive.
Hence the cardinality of ${\cal P}_i(t)$ increases with $t$.
This division is done in a systematic way to ensure that the tradeoff between the variation
of expected choice rewards inside each set and the number of past observations that are used in reward estimation for each set is balanced.
As a result, the regions of the context space with a lot of context arrivals are covered with sets of smaller sizes than regions of contexts space with few context arrivals. 
In other words, DCZA {\em zooms} into the regions of context space with large number of arrivals.
An illustration that shows partition of CLUP and DCZA is given in Fig. \ref{fig:partition} for $D=1$.
As we discussed in the Section \ref{sec:related} the zooming idea have been used in a variety of multi-armed bandit problems \cite{kleinberg2008multi, bubeck2011x, slivkins2009contextual, dudik2011efficient, langford2007epoch, chu2011contextual}, but there are differences in the problem structure and how zooming is done.

%

The sets in the adaptive partition of each learner are chosen from hypercubes with edge lengths coming from the set $\{ 1, 2^{-1}, 2^{-2}, \ldots \}$.\footnote{Hypercubes have advantages in cooperative contextual bandits because they are disjoint and a learner can pass information to another learner about its partition by only passing the center and edge length of its hypercubes.} 
We call a $D$-dimensional hypercube which has edges of length $2^{-l}$ a level $l$ hypercube (or level $l$ set). For a hypercube $p$, let $l(p)$ denote its level.
%
%
Different from CLUP, the partition of each learner in DCZA can be different since context arrivals to learners can be different. In order to help each other, learners should know about each other's partition. 
For this, whenever a new set of hypercubes is activated by learner $i$, learner $i$ communicates this by sending the center and edge length of one of the hypercubes in the new set of hypercubes to other learners. 
Based on this information, other learners update their partition of learner $i$. 
Thus, at any time slot $t$ all learners know ${\cal P}_i(t)$.
This does not require a learner to keep $M$ different partitions. 
It is enough for each learner to keep ${\cal P}(t) := \bigcup_{i \in {\cal M}} {\cal P}_i(t)$, which is the set of hypercubes that are active for at least one learner at time $t$.
For $p \in {\cal P}(t)$ let $\tau(p)$ be the first time $p$ is activated by one of the learners and for $p \in {\cal P}_i(t)$, let $\tau_i(p)$ be the first time $p$ is activated for learner $i$'s partition.
We will describe the activation process later, after defining the counters of DCZA which are 
initialized and updated differently than CLUP.

$N^i_p(t)$, $p \in {\cal P}_i(t)$ counts the number of context arrivals to set $p$ of learner $i$ (from its own contexts) from times $\{ \tau_i(p), \ldots,  t-1\}$.
For $f \in {\cal F}_i$, $N^i_{f,p}(t)$ counts the number of times arm $f$ is selected in response to contexts arriving to set $p \in {\cal P}(t)$ (from learner $i$'s own contexts or contexts of calling learners) from times $\{ \tau(p), \ldots,  t-1\}$.
Similarly $N^{\textrm{tr},i}_{j,p}(t)$, $p \in {\cal P}_i(t)$ is an estimate on the context arrivals to learner $j$ in set $p$ from all learners except the training phases of learner $j$ and exploration, exploitation phases of learner $i$ from times $\{ \tau(p), \ldots,  t-1\}$.
Finally, $N^i_{j,p}(t)$ counts the number of context arrivals to learner $j$ from exploration and exploitation phases of learner $i$ from times $\{ \tau_i(p), \ldots,  t-1\}$.
Let ${\cal E}^i_{f,p}(t)$, $f \in {\cal F}_i$ be the set of rewards (received or observed) by learner $i$ at times that contribute to the increase of counter $N^i_{f,p}(t)$ and ${\cal E}^i_{j,p}(t)$, $j \in {\cal M}_{-i}$ be the set of rewards received by learner $i$ at times that contribute to the increase of counter $N^i_{j,p}(t)$. We have $\bar{r}^i_{k,p}(t) = (\sum_{r \in {\cal E}^i_{k,p}(t)} r)/|{\cal E}^i_{k,p}(t)|$ for $k \in {\cal K}_i$.
Training, exploration and exploitation within a hypercube $p$ is controlled by control functions
$D_1(p,t) = D_3(p,t) = 2^{2 \alpha l(p)} \log t $ and $D_2(p,t) = F_{\max} 2^{2 \alpha l(p)} \log t$, which depend on the level of hypercube $p$ unlike the control functions $D_1(t)$, $D_2(t)$ and $D_3(t)$ of CLUP, which only depend on the current time. DCZA separates training, exploration and exploitation the same way as CLUP but using control functions $D_1(p,t)$, $D_2(p,t)$, $D_3(p,t)$ instead of $D_1(t)$, $D_2(t)$, $D_3(t)$.

Learner $i$ updates its partition ${\cal P}_i(t)$ as follows. 
At the end of each time slot $t$, learner $i$ checks if $N^i_{p_i(t)}(t+1)$ exceeds a threshold $2^{\rho l(p_i(t))}$, where $\rho$ is the parameter of DCZA that is common to all learners. 
If $N^i_{p_i(t)}(t+1) \geq 2^{\rho l(p_i(t))}$, learner $i$ will divide $p_i(t)$ into $2^D$ level $l(p_i(t))+1$ hypercubes and will note the other learners about its new partition ${\cal P}_i(t+1)$.
With this division $p_i(t)$ is de-activated for learner $i$'s partition.
For a set $p$, let $\tau^{\textrm{fin}}_i(p)$ be the time it is de-activated for learner $i$'s partition.

Similar to CLUP, DCZA also have maximization and cooperation parts.
The maximization part of DCZA is the same as CLUP with training, exploration and exploitation phases.
The only differences are that which phase to enter is determined by comparing the counters defined above with the control functions and in exploitation phase the best choice is selected based on the sample mean estimates defined above.
In the cooperation part at time $t$, learner $i$ explores one of its under-explored arms or chooses its best arm for $p_j(t)$ for learner $j \in {\cal C}_i(t)$ using the counters and sample mean estimates defined above.
Since the operation of DCZA is the same as CLUP except the differences mentioned in this section, we omitted its pseudocode to avoid repetition.

\comment{
Different from CLUP, in DCZA reward estimates for $p$ at time $t$ are formed only using past observations in time window $\{ \tau_p, \ldots, t-1\}$.
The counters used by learner $i$ in DCZA are exactly the same as counters of CLUP except the fact that they start counting from time $\tau_p$ for set $p$.
Also, the control functions used by DCZA have the same form as the control functions of CLUP.
Once activated, a level $l$ hypercube $p$ will stay active until the first time $t$ such that $N^i_p(t) \geq A 2^{\rho l}$, where $\rho >0$ and $A>0$ are parameters of DCZA.
After that, DCZA will divide $p$ into $2^D$ level $l+1$ hypercubes. 

Since context arrivals to learners are heterogeneous, ${\cal P}_i(t)$ and ${\cal P}_j(t)$ for $j \neq i$ can be different.
For example consider context $x$.
Assume that learner $i$ has a few of arrivals in a fixed neighborhood of $x$, while learner $j$ has a lot of arrivals in the same neighborhood.
In this case, learner $i$ will use a low level set that contains $x$, while learner $j$ will benefit more from a set with a higher level if it wants to call learner $i$.
Due to the heterogeneity of the partitions, the called learner should choose its arm according to the calling learners partition. 
Because of this, each learner keeps partitions of all learners. 
At first, this may seem to consume a lot of memory.
However most of the time partitions of learners will have a lot of overlapping sets. 
Although learner $i$ keeps partitions of other learners, it estimates the reward of its arms for these partitions based only on its own observations and the counters it uses for these sets are also based only on $i$s observations. 
For example, for sets in ${\cal P}_j(t)$, learner $i$ and learner $j$ can have different counters.
Because of this, the overlapping sets in these partitions will have the same estimated rewards assigned to them by learner $i$.
Thus overlapping sets consume the same memory as a single set.
}

\begin{figure}
\begin{center}
\includegraphics[width=0.7\columnwidth]{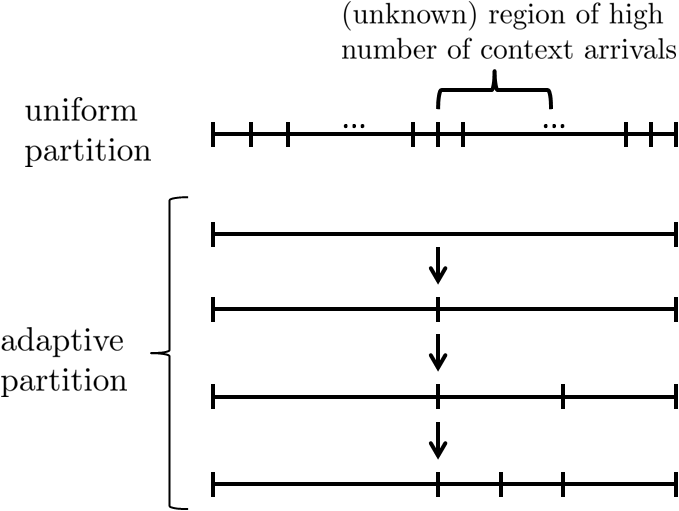}
\vspace{-0.1in}
\caption{An illustration showing how the partition of DCZA differs from the partition of CLUP for $D=1$. As contexts arrive, DCZA {\em zooms} into regions of high number of context arrivals.} 
\vspace{-0.2in}
\label{fig:partition}
\end{center}
\end{figure}

\comment{
\add{\vspace{-0.1in}}
\begin{figure}[htb]
\fbox {
\begin{minipage}{0.95\columnwidth}
{\fontsize{9}{8}\selectfont
\flushleft{Distributed Context Zooming Algorithm (for learner $i$):}
\begin{algorithmic}[1]
\STATE{Input: $D_1(t)$, $D_2(t)$, $D_3(t)$, $p$, $A$}
\STATE{Initialization: ${\cal A} = \{[0,1]^d\}$, Run {\bf Initialize}(${\cal A}$)}
\STATE{Notation: $\boldsymbol{M}^i_C := (N^i_{k,c})_{k \in {\cal K}_i}$, 
$\bar{\boldsymbol{r}}_C := (\bar{r}_{k,C})_{k \in {\cal K}_i}$, $l_C$: level of hypercube $C$.}
\WHILE{$t \geq 1$}
\FOR{$C \in {\cal A}$}
\IF{$x_i(t) \in C$}
\IF{$\exists k \in {\cal F}_i \textrm{ such that } N^i_{k,C} \leq D_1(t)$}
\STATE{Run {\bf Explore}($k$, $N^i_{k,C}$, $\bar{r}_{k,C}$)}
\ELSIF{$\exists k \in {\cal M}_{-i} \textrm{ such that } N^i_{1,k,C} \leq D_2(t)$}
\STATE{Obtain $N^k_C(t)$ from $k$}
\IF{$N^k_C(t)=0$}
\STATE{ask $k$ to create hypercube $C$, set $N^i_{1,k,C}=0$}
\ELSE
\STATE{set $N^i_{1,k,C} = N^k_C(t) - N^i_{k,C}$}
\ENDIF
\IF{$N^i_{1,k,C} \leq D_2(t)$}
\STATE{Run {\bf Train}($k$, $N^i_{1,k,C}$)}
\ELSE
\STATE{Go to line 21}
\ENDIF
\ELSIF{$\exists k \in {\cal M}_{-i} \textrm{ such that } N^i_{k,C} \leq D_3(t)$}
\STATE{Run {\bf Explore}($k$, $N^i_{k,C}$, $\bar{r}_{k,C}$)}
\ELSE
\STATE{Run {\bf Exploit}($\boldsymbol{M}^i_C$, $\bar{\boldsymbol{r}}_C$, ${\cal K}_i$)}
\ENDIF
\ENDIF
\STATE{$N^i_C = N^i_C +1$}
\IF{$N^i_C \geq A 2^{pl_C}$}
\STATE{Create $2^d$ level $l+1$ child hypercubes denoted by ${\cal A}^{l+1}_C$}
\STATE{Run {\bf Initialize}(${\cal A}^{l+1}_C$)}
\STATE{${\cal A} = {\cal A} \cup {\cal A}^{l+1}_C - C$}
\ENDIF
\ENDFOR
\STATE{$t=t+1$}
\ENDWHILE
\end{algorithmic}
}
\end{minipage}
} \caption{Pseudocode of the DCZA algorithm.} \label{fig:DDZA}
\end{figure}

\begin{figure}[htb]
\fbox {
\begin{minipage}{0.95\columnwidth}
{\fontsize{9}{7}\selectfont
{\bf Initialize}(${\cal B}$):
\begin{algorithmic}[1]
\FOR{$C \in {\cal B}$}
\STATE{Set $N^i_C = 0$, $N^i_{k,C}=0$, $\bar{r}_{k,C}=0$ for $C \in {\cal A}, k \in {\cal K}_i$, $N^i_{1,k,C}=0$ for $k \in {\cal M}_{-i}$}
\ENDFOR
\end{algorithmic}
}
\end{minipage}
} \caption{Pseudocode of the initialization module.} \label{fig:minitialize}
\add{\vspace{-0.23in}}
\end{figure}
}

\comment{
DCZA allows learners to coordinate their adaptive partitions to help each other in the most efficient way.
Basically, learner $i$ should persuade other learners to zoom to the regions of the space where learner $i$ has a large number of context arrivals.
Whenever a new set $p$ is created by learner $i$, its counters are created and are initialized to zero. 
At time slot $t$, if it is the first time that learner $i$ calls leaner $j$ for set $p_i(t)$, it sends $l(p_i(t))$ to learner $j$. If $l(p_{j,i}(t)) < l(p_i(t))$, learner $j$ creates $2^D$ level $l(p_i(t))$ hypercubes for learner $i$ and destroys the hypercube $p_{j,i}(t)$ in ${\cal P}_{j,i}(t)$. 

For each set in ${\cal P}_{i,j}(t)$, learner $i$ only keeps sample mean reward estimates for arms in ${\cal F}_i$.
At the end of time slot $t$, learner $i$ updates the sample mean of its own arms which are selected either directly by learner $i$ or in response to the call of a learner $j$ in $C_i(t)$ for all sets $p_i(t)$ and $p_{i,j}(t)$, $j \in {\cal M}_{-i}$. 
}

\add{\vspace{-0.15in}}
\subsection{Analysis of the Regret of DCZA}

Our analysis for CLUP in Section \ref{sec:iid} was for worst-case context arrivals. 
This means that the bound in Theorem \ref{theorem:cos} holds even when other learners never call learner $i$ to train it, or other learners never learn by themselves.
In this section we analyze the regret of DCZA under different types of context arrivals. 
Let $K_{i,l}(T)$ be the number of level $l$ hypercubes of learner $i$ that are activated by time $T$. In the following we define two extreme cases of correlation between the contexts arriving to different learners. 
\begin{definition}\label{defn:context}
We call the context arrival process, {\em solo arrivals} if contexts only arrive to learner $i$, {\em identical arrivals} if $x_i(t) = x_j(t)$ for all $i,j \in {\cal M}$, $t =1,\ldots,T$.
%
%
\end{definition}

We start with a simple lemma which gives an upper bound on the highest level hypercube that is active at any time $t$.
\begin{lemma}\label{lemma:levelbound}
All the active hypercubes $p \in {\cal P}(t)$ at time $t$ have at most a level of 
%
$\rho^{-1} \log_2 t + 1$.
%
\end{lemma}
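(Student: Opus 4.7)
The plan is a short inductive/chain argument using the activation rule. Fix a time $t$ and suppose $p \in {\cal P}(t)$ is an active hypercube at level $l$. I want to show $l \leq \rho^{-1}\log_2 t + 1$, i.e., $l - 1 \leq \rho^{-1} \log_2 t$. If $l = 0$ the bound is trivial, so assume $l \geq 1$.

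The key observation is the activation/de-activation mechanism of DCZA: a level $l$ hypercube is activated only as one of the $2^D$ children produced when its (unique) parent hypercube $p'$, which lives at level $l-1$, is de-activated by some learner. By the splitting rule, a level $l'$ hypercube gets split (for learner $i$) at the end of time slot $s$ only when $N^i_{p'}(s+1) \geq 2^{\rho l'}$. Thus, for the parent $p'$ at level $l-1$ to have ever been split, some learner must have recorded at least $2^{\rho(l-1)}$ own-context arrivals in $p'$ before time $t$.

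Since the counter $N^i_{p'}$ is incremented at most once per time slot and $p'$ was (eventually) de-activated by time $t$, we must have $t \geq 2^{\rho(l-1)}$, hence $\rho(l-1) \leq \log_2 t$, and therefore
\begin{equation*}
l \leq \rho^{-1} \log_2 t + 1.
\end{equation*}

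There is no real obstacle here — the result is essentially a bookkeeping consequence of the splitting threshold $2^{\rho l}$. The only mildly delicate point is to notice that we need only one learner to have accumulated enough own-context arrivals in the parent in order for the parent to be split and its children to be created globally; one can then simply invoke the monotonicity $t \geq N^i_{p'}(t) \geq 2^{\rho(l-1)}$ for that learner.
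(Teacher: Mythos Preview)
Your argument is correct. It differs from the paper's proof in one respect: the paper tracks the \emph{entire} ancestor chain of the highest-level hypercube, arguing that to activate a level $l'+1$ cube some learner must have accumulated $2^{\rho l}$ own-context arrivals in each ancestor at levels $l=0,1,\ldots,l'$, and these arrival windows are disjoint, so $\sum_{l=0}^{l'} 2^{\rho l} < t$, whence $2^{\rho l'} < t$. You instead look only at the immediate parent and use $t \geq 2^{\rho(l-1)}$ directly. Both routes give the same bound, since the last term dominates the geometric sum anyway; your version is simply the more economical one. The paper's summation would in principle yield a marginally sharper intermediate inequality, but it is not exploited.
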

\remove{
\begin{proof}
Let $l'+1$ be the level of the highest level active hypercube. We must have
%
$\sum_{l=0}^{l'} 2^{\rho l} < t$,
otherwise the highest level active hypercube's level will be less than $l'+1$. We have,
%
$(2^{\rho(l'+1)}-1)/(2^\rho-1) < t
\Rightarrow 2^{\rho l'} < \tsp{t} 
\Rightarrow l' < \rho^{-1} \log_2 t$.
%
\end{proof}
}

In order to analyze the regret of DCZA, we first bound the regret due to trainings and explorations in a level $l$ hypercube. We do this for the solo and identical context arrival cases separately.

\begin{lemma} \label{lemma:adapexplore}
Consider all learners that run DCZA with parameters 
$D_1(p,t) = D_3(p,t) =  2^{2 \alpha l(p)} \log t $ and $D_2(p,t) = F_{\max} 2^{2 \alpha l(p)}  \log t $. Then, for any level $l$ hypercube the regret of learner $i$ due to trainings and explorations by time $T$ is bounded above by
(i) $ 2 \tsp{Z_i} (2^{2 \alpha l } \log T +1)$ for solo context arrivals,
(ii) $2 \tsp{K_i} (2^{2 \alpha l } \log T +1)$ for identical context arrivals (given $F_i \geq F_j$, $j \in {\cal M}_{-i}$).\footnote{In order for the bound for identical context arrivals to hold for learner $i$ we require that $F_i \geq F_j$, $j \in {\cal M}_{-i}$. Hence, in order for the bound for identical context arrivals to hold for all learners, we require $F_i = F_j$ for all $i,j \in {\cal M}$.}
\end{lemma}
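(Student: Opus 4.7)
The plan is to adapt the proof of Lemma \ref{lemma:explorations} by localizing the count to a single level-$l$ hypercube $p$ and using the $l$-dependent control functions $D_1(p,t)=D_3(p,t)=2^{2\alpha l}\log t$ and $D_2(p,t)=F_{\max}2^{2\alpha l}\log t$. The common first step is to observe that for every $k\in{\cal K}_i$ the net reward $\mu^i_k(x)=\pi_k(x)-d^i_k$ lies in $[-1,1]$, so each training or exploration slot contributes at most $2$ to the expected regret. It therefore suffices to count, in each case, the training/exploration slots of learner $i$ whose context falls in $p$, and to multiply the count by $2$.

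For part (i), solo arrivals, the count is immediate since only learner $i$'s actions drive the counters. Monotonicity of the control functions in $t$ implies that the under-exploration test fails for each of the $F_i$ own arms once $N^i_{f,p}$ exceeds $\lceil 2^{2\alpha l}\log T\rceil$, the training test fails for each of the $M_i$ other learners once $N^{\textrm{tr},i}_{j,p}$ exceeds $\lceil F_{\max}2^{2\alpha l}\log T\rceil$, and the under-exploration test for other learners fails once $N^i_{j,p}$ exceeds $\lceil 2^{2\alpha l}\log T\rceil$. Summing yields at most $(F_i+M_iF_{\max}+M_i)(2^{2\alpha l}\log T+1)=Z_i(2^{2\alpha l}\log T+1)$ such slots, and multiplying by $2$ gives the claimed bound.

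For part (ii), identical arrivals under the hypothesis $F_i\geq F_j$ for all $j\in{\cal M}_{-i}$ (so in particular $F_i\geq F_{\max}$), I would show that the set of under-trained learners ${\cal M}^{\textrm{ut}}_{i,p}(t)$ is empty at every time, which removes the $M_iF_{\max}$ contribution and leaves $(F_i+M_i)(2^{2\alpha l}\log T+1)=K_i(2^{2\alpha l}\log T+1)$ slots. The mechanism is that identical arrivals make every context reaching $p$ simultaneously available to every learner, so each own-arm exploration slot of learner $i$ in $p$ is non-training for every other learner $j$ and therefore also counted in $N^j_p$. By the time learner $i$ first reaches its training-candidate check in $p$ it has completed its own-arm exploration, hence $N^j_p(t)\geq F_i\cdot 2^{2\alpha l}\log t$, while $N^i_{j,p}(t)=0$ because learner $i$ has not yet selected learner $j$ in $p$. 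Consequently the recomputation $N^{\textrm{tr},i}_{j,p}(t)=N^j_p(t)-N^i_{j,p}(t)\geq F_i\cdot 2^{2\alpha l}\log t\geq F_{\max}\cdot 2^{2\alpha l}\log t=D_2(p,t)$ holds from the first check onward, so no training slot is ever executed by learner $i$ inside $p$; the remaining count mirrors part (i) with the training term deleted.

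The delicate step, and the main obstacle, is justifying the ``$N^j_p$ keeps pace with $N^i_p$'' claim rigorously in part (ii), because another learner $k$ with $F_k<F_i$ could in principle finish own exploration sooner and begin training other learners, whose training slots would not count in the counters of those learners. The clean way to handle this is to exploit the priority ordering of phases together with $F_i\geq F_j$: learner $i$ is the slowest to finish own-arm exploration, so at the first moment $i$ evaluates its training check in $p$, every other learner has already executed at least $F_i\cdot 2^{2\alpha l}\log t$ non-training slots in $p$ (contributed by its own-arm exploration and by being called), making the threshold inequality automatic. Once this accounting is in place, the rest is routine and reduces to Step 2 with the training contribution removed.
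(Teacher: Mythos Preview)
Your proposal is correct and follows essentially the same approach as the paper. The paper's proof is only a two-line sketch: it refers back to Lemma~\ref{lemma:explorations} for part~(i), and for part~(ii) it records exactly your key observation, namely that under identical arrivals with $F_i\geq F_j$ one has $N^{\textrm{tr},i}_{j,p}(t)>D_2(p,t)$ for all $j\in{\cal M}_{-i}$ whenever $N^i_{f,p}(t)>D_1(p,t)$ for all $f\in{\cal F}_i$, so no training is ever triggered. Your write-up is more explicit than the paper's about why this inequality holds and about the potential interference from faster learners, but the argument is the same.
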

\remove{
\begin{proof}
The proof is similar to Lemma \ref{lemma:explorations}.
Note that when the context arriving to each learner is the same and $|{\cal F}_i| \geq |{\cal F}_{j}|$, $j \in {\cal M}_{-i}$, we have $N^{i,tr}_{j,p}(t) > D_2(p,t)$ for all $j \in {\cal M}_{-i}$ whenever $N^i_{f,p}(t) > D_1(p,t)$ for all $f \in {\cal F}_i$.
\end{proof}
}

We define the set of suboptimal choices and arms for learner $i$ in DCZA a little differently than CLUP (suboptimality depends on the level of the hypercube but not on time), using the same notation as in the analysis of CLUP.
Let
\begin{align}
{\cal L}^i_{p} &:= \left\{ k \in {\cal K}_i :  \underline{\mu}^i_{k^*_i(p),p} - \overline{\mu}^i_{k,p} > A^* L D^{\alpha/2} 2^{- l(p) \alpha} \right\} \label{eqn:DCZAsubchoice}
\end{align}
be the set of suboptimal choices of learner $i$ for a hypercube $p$, and
\begin{align}
\hspace{-0.1in} {\cal F}^j_{p} &:= \left\{ f \in {\cal F}_j :  \underline{\pi}_{f^*_j(p),p} - \overline{\pi}_{f,p} > A^* L D^{\alpha/2} 2^{- l(p) \alpha} \right\} \label{eqn:DCZAsubarm}
\end{align}
be the set of suboptimal arms of learner $j$ for hypercube $p$, where 
$A^* = 2 + 4/(LD^{\alpha/2})$.

In the next lemma we bound the regret due to choosing suboptimal choices in the exploitation steps of learner $i$. 

\begin{lemma} \label{lemma:suboptimal}
Consider all learners running DCZA with parameters $\rho>0$, $D_1(p,t) = D_3(p,t) = 2^{2 \alpha l(p)} \log t$ and $D_2(p,t) = F_{\max} 2^{2 \alpha l(p)} \log t$.
Then, we have 
\begin{align}
\mathrm{E} [R^s_i(T)] 
&\leq 4 (M_i + F_i) \beta_2  \notag \\
&+ 4 (M_i + F_i) M_i F_{\max} \beta_2 \sum_{t=1}^T 2^{- \alpha l(p_i(t))} .      \notag
\end{align}
\end{lemma}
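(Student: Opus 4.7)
The plan is to mirror the proof of Lemma \ref{lemma:suboptimal1}, replacing the global quantities $m_T^{-\alpha}$ and $t^\theta$ by the level-dependent quantity $2^{-\alpha l(p_i(t))}$ that governs both the variation of expected rewards within $p_i(t)$ and the suboptimality gap $A^* L D^{\alpha/2} 2^{-\alpha l(p_i(t))}$ defined in (\ref{eqn:DCZAsubchoice})--(\ref{eqn:DCZAsubarm}). As before, I would let $\mathcal{W}^i(t) := \{\mathcal{M}^{\textrm{ut}}_{i,p_i(t)}(t) \cup \mathcal{M}^{\textrm{ue}}_{i,p_i(t)}(t) \cup \mathcal{F}^{\textrm{ue}}_{i,p_i(t)}(t) = \emptyset\}$ denote the exploitation event, and for each suboptimal $k \in \mathcal{L}^i_{p_i(t)}$ decompose $\{\mathcal{V}^i_k(t), \mathcal{W}^i(t)\}$ into (a) an upward deviation of $\hat{\mu}^i_{k,p_i(t)}$ by $H_t$, (b) a downward deviation of $\hat{\mu}^i_{k^*_i,p_i(t)}$ by $H_t$, (c) a contamination event $\mathcal{B}^i(t)^c$ saying that some called learner has supplied too many suboptimal-arm samples, and (d) the residual event which will be shown empty.

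The key technical step is to choose the confidence radius at level $l = l(p_i(t))$ as $H_t = 2^{-\alpha l} + L D^{\alpha/2} 2^{-\alpha l} + (\text{contamination correction})$. Because $\mathcal{W}^i(t)$ forces at least $2^{2\alpha l}\log t$ samples to have been collected from each choice in $p_i(t)$, the Chernoff--Hoeffding bound gives $\exp(-2 H_t^2 \cdot 2^{2\alpha l}\log t) \leq t^{-2}$ for both deviation events (a) and (b). The constant $A^* = 2 + 4/(L D^{\alpha/2})$ in the definition of $\mathcal{L}^i_p$ is exactly what is needed so that, after absorbing the within-hypercube variation $L D^{\alpha/2} 2^{-\alpha l}$ into $H_t$, the inequality $2 H_t \leq A^* L D^{\alpha/2} 2^{-\alpha l}$ continues to hold; this is what rules out event (d) and is the analog of (\ref{eqn:boundcond}).

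For the contamination event I would reuse the argument of Lemmas \ref{lemma:suboptimal1} and \ref{lemma:callother}, but adapted to the per-hypercube training guarantee of DCZA: once learner $i$ exits training in $p$, learner $j$ has had at least $F_{\max} 2^{2\alpha l(p)}\log t$ context arrivals there. Applying Chernoff--Hoeffding to each of learner $j$'s own arms yields $\mathrm{P}(\Xi^i_{j,p}(t)) \leq 2 F_j t^{-2}$, whence $\mathrm{E}[X^i_{j,p}(t)] \leq 2 F_{\max} \beta_2$ and (via Markov as in the proof of Lemma \ref{lemma:suboptimal1}) $\mathrm{P}(\mathcal{B}^i(t)^c, \mathcal{W}^i(t)) \leq 2 M_i F_{\max} \beta_2 \cdot (\text{suitable factor})$. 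Summing $\mathrm{P}(\mathcal{V}^i_k(t), \mathcal{W}^i(t))$ over $t$ and over the at most $F_i + M_i$ suboptimal choices, and multiplying by the one-slot loss bound $2$, produces a contribution $4(F_i + M_i)\beta_2$ from the two deviation events plus a contribution of $4(F_i+M_i) M_i F_{\max} \beta_2 \sum_{t=1}^T 2^{-\alpha l(p_i(t))}$ from the contamination piece, giving the claimed bound.

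The main obstacle is the bookkeeping induced by the adaptive partition. Unlike in CLUP, hypercubes are activated at different times $\tau_i(p)$ and may be split later at $\tau_i^{\textrm{fin}}(p)$, so the training-phase guarantee must be invoked per hypercube starting from its activation time, and the Chernoff--Hoeffding bound on the empirical mean $\hat{\mu}^i_{k,p}$ must use the level-dependent count $N^i_{k,p}(t) > 2^{2\alpha l(p)}\log t$ rather than a global count. Once this per-level accounting is set up, the arithmetic of bounding $H_t$ and deriving the $t^{-2}$ tails is routine, and the final $\sum_t 2^{-\alpha l(p_i(t))}$ factor arises naturally because the contamination bound at time $t$ inherits the level of $p_i(t)$.
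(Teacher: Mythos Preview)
Your proposal is correct and matches the paper's proof essentially line for line: the paper sets $H_t = (LD^{\alpha/2}+2)\,2^{-\alpha l(p_i(t))}$, defines the contamination event ${\cal B}^i_{j,p_i(t)}(t)$ via the threshold $2^{\alpha l(p_i(t))}$ (so Markov yields the factor $2^{-\alpha l(p_i(t))}$ you anticipate), and combines the resulting bound $\mathrm{P}({\cal V}^i_k(t),{\cal W}^i(t))\le 2t^{-2}+2M_iF_{\max}\beta_2\,2^{-\alpha l(p_i(t))}$ exactly as you describe.
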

\begin{proof}
The proof of this lemma is similar to the proof of Lemma \ref{lemma:suboptimal}, thus some steps are omitted.
${\cal W}^i(t)$ and ${\cal V}^i_{k}(t)$ are defined the same way as in Lemma \ref{lemma:suboptimal}.
${\cal B}^i_{j,p_i(t)}(t)$ denotes the event that at most $2^{\alpha l(p_i(t))}$ samples in ${\cal E}^i_{j,p_i(t)}(t)$ are collected from the suboptimal arms of learner $j$ in ${\cal F}^j_{p_i(t)}$, and ${\cal B}^i(t) := \bigcap_{j \in {\cal M}_{-i}} {\cal B}^i_{j,p_i(t)}(t)$.
We have
$\mathrm{E} [R^s_{i}(T)] \leq \tsp{2} \sum_{t=1}^T \sum_{k \in {\cal L}^i_{p_i(t)}} 
\mathrm{P} ({\cal V}^i_{k}(t), {\cal W}^i(t))$.

Similar to Lemma \ref{lemma:suboptimal}, we have
\begin{align}
& \mathrm{P} \left( {\cal V}^i_{k}(t), {\cal W}^i(t) \right) \notag \\
& \leq \mathrm{P} \left( \hat{\mu}^i_{k,p_i(t)}(t) \geq \overline{\mu}^i_{k,p_i(t)} + H_{t}, {\cal W}^i(t), {\cal B}^i(t)  \right) \notag  \\
&+ \mathrm{P} \left( \hat{\mu}^i_{k^*_i,p_i(t)}(t) \leq \underline{\mu}^i_{k^*_i,p_i(t)} - H_t, {\cal W}^i(t), {\cal B}^i(t) \right) \notag \\
&+ \mathrm{P} \left( \hat{\mu}^i_{k,p_i(t)}(t) \geq \hat{\mu}^i_{k^*_i,p_i(t)}(t), 
\hat{\mu}^i_{k,p_i(t)}(t) < \overline{\mu}^i_{k,p_i(t)} + H_t, \right. \notag \\
& \left. \hat{\mu}^i_{k^*_i,p_i(t)}(t) > \underline{\mu}^i_{k^*_i,p_i(t)} - H_t,
{\cal W}^i(t), {\cal B}^i(t)  \right) \notag \\
&+ \mathrm{P} ({\cal B}^i(t)^c, {\cal W}^i(t) ). \notag 
\end{align}
Letting
\begin{align}
H_t =  (LD^{\alpha/2} +2) 2^{-\alpha l(p_i(t))}      \notag
\end{align}
we have
\begin{align}
\mathrm{P} \left( \hat{\mu}^i_{k,p_i(t)}(t) \geq \overline{\mu}^i_{k,p_i(t)} + H_t, {\cal W}^i(t), {\cal B}^i(t)  \right) 
& \leq t^{-2}  \notag \\
\mathrm{P} \left( \hat{\mu}^i_{k^*_i,p_i(t)}(t) \leq \underline{\mu}^i_{k^*_i,p_i(t)} - H_t, {\cal W}^i(t), {\cal B}^i(t) \right) 
& \leq t^{-2} . \notag 
\end{align}
Since $2 H_t \leq A^* L D^{\alpha/2} 2^{- l(p_i(t)) \alpha}$,
\begin{align}
&\mathrm{P} \left( \hat{\mu}^i_{k,p_i(t)}(t) \geq \hat{\mu}^i_{k^*_i,p_i(t)}(t), 
\hat{\mu}^i_{k,p_i(t)}(t) < \overline{\mu}^i_{k,p_i(t)} + H_t, \right. \notag \\
& \left. \hat{\mu}^i_{k^*_i,p_i(t)}(t) > \underline{\mu}^i_{k^*_i,p_i(t)} - H_t,
{\cal W}^i(t), {\cal B}^i(t)  \right)  = 0 .      \notag
\end{align}

Similar to the proof of Lemma \ref{lemma:suboptimal}, we have
\begin{align}
\mathrm{P} (\Xi^i_{j,p_i(t)}) &\leq 2 F_j t^{-2}      \notag \\
\mathrm{E} [X^i_{j,p_i(t)}]  & \leq 2 F_j \beta_2 \notag \\
\mathrm{P} ( {\cal B}^i_{j,p_i(t)}(t)^c, {\cal W}^i(t) ) &\leq 2 F_j \beta_2 2^{-\alpha l(p_i(t))} \notag \\
\mathrm{P} ( {\cal B}^i(t)^c, {\cal W}^i(t) )  & \leq 2 M_i F_{\max} \beta_2 2^{-\alpha l(p_i(t))} . \notag
\end{align}
Hence,
\begin{align}
\mathrm{P} \left( {\cal V}^i_{k}(t), {\cal W}^i(t) \right) \leq 2 t^{-2} + 2 M_i F_{\max} \beta_2 2^{-\alpha l(p_i(t))} .      \notag
\end{align}
\end{proof}

In the next lemma we bound the regret of learner $i$ due to selecting near optimal choices.
\begin{lemma}\label{lemma:adapnearopt}
Consider all learners running DCZA with parameters $\rho>0$, $D_1(p,t) = D_3(p,t) = 2^{2 \alpha l(p)} \log t$ and $D_2(p,t) = F_{\max} 2^{2 \alpha l(p)}\log t$. Then, we have
\begin{align*}
\mathrm{E}[R^n_i(T)] \leq  4 M_i F_{\max} \beta_2
+ 2 (3+A^*) L D^{\alpha/2} \sum_{t=1}^T 2^{-\alpha l(p_i(t))} . 
\end{align*}
\end{lemma}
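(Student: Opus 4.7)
The plan is to split $R^n_i(T)$ into three contributions according to what happens in an exploitation slot $t$: (i) learner $i$ selects a near-optimal own arm $f \in {\cal F}_i \setminus {\cal L}^i_{p_i(t)}$; (ii) learner $i$ calls a near-optimal learner $j \in {\cal M}_{-i} \setminus {\cal L}^i_{p_i(t)}$ and $j$ responds with a near-optimal arm in ${\cal F}_j \setminus {\cal F}^j_{p_i(t)}$; (iii) learner $i$ calls a near-optimal $j$ but $j$ picks a suboptimal arm in ${\cal F}^j_{p_i(t)}$. Cases (i) and (ii) produce deterministic per-slot bounds driven by the H\"{o}lder constant and the hypercube level, while case (iii) contributes only an additive constant coming directly from the analysis already performed in Lemma \ref{lemma:suboptimal}.

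For the per-slot regret in cases (i) and (ii), I first derive the key inequality: for any $x \in p$ and $k \in {\cal K}_i \setminus {\cal L}^i_p$,
\begin{align*}
\mu^i_{k^*_i(x)}(x) - \mu^i_k(x) \leq (A^* + 3)\, L D^{\alpha/2}\, 2^{-\alpha l(p)}.
\end{align*}
The approach is to pivot through $\underline{\mu}^i_{k^*_i(p),p}$: on the one side, Assumption \ref{ass:lipschitz2} applied twice, once to the H\"{o}lder-continuous upper envelope $x \mapsto \max_{k'} \mu^i_{k'}(x)$ and once to $\mu^i_{k^*_i(p)}$ within $p$, yields $\mu^i_{k^*_i(x)}(x) \leq \underline{\mu}^i_{k^*_i(p),p} + 2 L D^{\alpha/2} 2^{-\alpha l(p)}$; on the other side, the near-optimality definition (\ref{eqn:DCZAsubchoice}) combined with one more H\"{o}lder traversal gives $\mu^i_k(x) \geq \underline{\mu}^i_{k^*_i(p),p} - (A^* + 1) L D^{\alpha/2} 2^{-\alpha l(p)}$. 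Subtracting gives the bound. The same chain applied to $\pi$ and (\ref{eqn:DCZAsubarm}) gives $\pi_{f^*_j(x)}(x) - \pi_f(x) \leq (A^* + 3) L D^{\alpha/2} 2^{-\alpha l(p)}$ whenever $f \in {\cal F}_j \setminus {\cal F}^j_p$. Decomposing the case (ii) loss as $[\mu^i_{k^*_i(x)}(x) - \mu^i_j(x)] + [\pi_{f^*_j(x)}(x) - \pi_f(x)]$ and adding the two bounds yields a per-slot loss of at most $2(A^* + 3) L D^{\alpha/2} 2^{-\alpha l(p_i(t))}$, which dominates case (i). Summing over $t = 1, \ldots, T$ produces the second term of the statement.

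For case (iii), the proof of Lemma \ref{lemma:suboptimal} already established $\mathrm{P}(\Xi^i_{j,p_i(t)}) \leq 2 F_j t^{-2}$ with $t$ the global time; summing gives $\mathrm{E}[X^i_{j,p}(T)] \leq 2 F_{\max} \beta_2$ for each single $j \in {\cal M}_{-i}$. Since the one-slot regret of learner $i$ in such an event is at most $2$, summing over $j \in {\cal M}_{-i}$ yields the $4 M_i F_{\max} \beta_2$ term. Crucially, no hypercube-count factor appears here because the probability bound is in the global time variable rather than per-hypercube counts, unlike the analogous step for CLUP.

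The main obstacle is the constant bookkeeping in the first step: three distinct H\"{o}lder traversals (two on the envelope/optimal-choice side, one on the $k$-side) must be combined with the near-optimality gap to produce exactly the $(A^* + 3)$ constant, and the same bookkeeping must be carried through for the arm-level comparison on the $\pi$-side without double counting when the two bounds are summed in case (ii). Once this is in hand, the remaining work is just aggregating the per-slot bound over $t$ and adding the constant from case (iii).
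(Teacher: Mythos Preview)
Your proposal is correct and follows essentially the same route as the paper: the same three-case split, the same $(A^*+3)LD^{\alpha/2}2^{-\alpha l(p)}$ per-slot bound for near-optimal choices and arms, and the same appeal to the $\mathrm{P}(\Xi^i_{j,p_i(t)}(t)) \leq 2F_{\max}t^{-2}$ estimate from Lemma~\ref{lemma:suboptimal} for case~(iii). Your derivation of the $(A^*+3)$ constant via three H\"{o}lder traversals is more explicit than the paper, which merely asserts the bound, but the argument and the result coincide.
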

\begin{proof}
For any $k \in  {\cal K}_i -  {\cal L}^i_{p_i(t)}$ and $x \in p_i(t)$, we have 
$\mu^i_{ k^*_i(x) }(x) - \mu^i_k(x) \leq (3+A^*) L D^{\alpha/2} 2^{-l(p_i(t)) \alpha}$. Similarly for any $j \in {\cal M}$, $f \in {\cal F}_j -  {\cal F}^j_{p_i(t)}(t)$ and $x \in p_i(t)$, we have 
$\pi_{ f^*_j(x) }(x) - \pi_f(x) \leq (3+A^*) L D^{\alpha/2} 2^{-l(p_i(t)) \alpha}$.

As in the proof of Lemma \ref{lemma:suboptimal}, we have
%
$\mathrm{P} (\Xi^i_{j,p_i(t)}(t)) \leq 2 F_{\max} t^{-2}$.
Thus, when a near optimal learner $j \in {\cal M}_{-i} \cap ({\cal K}_i -  {\cal L}^i_p)$ is called by learner $i$ at time $t$, the contribution to the regret from suboptimal arms of $j$ is bounded by $4 F_{\max} t^{-2}$.
The one-slot regret of any near optimal arm of any near optimal learner \tsp{$j \in  {\cal M}_{-i} \cap ({\cal K}_i -  {\cal L}^i_p)$} is bounded by $2 (3+A^*) L D^{\alpha/2} 2^{-l(p) \alpha}$.
The one-step regret of any near optimal arm \tsp{$f \in {\cal F}_{i} \cap ({\cal K}_i -  {\cal L}^i_p)$} is bounded by $\tsp{(3+A^*)} L D^{\alpha/2} 2^{-l(p) \alpha}$. The result is obtained by taking the sum up to time $T$. 
\end{proof}

Next, we combine the results from Lemmas \ref{lemma:adapexplore}, \ref{lemma:suboptimal} and \ref{lemma:adapnearopt} to obtain regret bounds as a function of the number of hypercubes of each level that are activated up to time $T$. 

\begin{theorem}\label{thm:adaptivemain}
\ntsp{Consider all learners running DCZA with parameters $\rho>0$, $D_1(p,t) = D_3(p,t) = 2^{2 \alpha l(p)} \log t$ and $D_2(p,t) = F_{\max} 2^{2 \alpha l(p)} \log t$. Then, for {\em solo arrivals}, we have 
\begin{align}
R_i(T)  & \leq  2 C_1 \sum_{l=0}^{(\log_2 T/\rho) + 1} K_{i,l}(T) 2^{2 \alpha l} \log T \notag \\
&+ C_2  \sum_{l=0}^{(\log_2 T/\rho) + 1} K_{i,l}(T) 2^{ (\rho-\alpha) l}  \notag \\
&+ 2 C_1 \sum_{l=0}^{(\log_2 T/\rho) + 1} K_{i,l}(T) + C_0 \notag 
\end{align}
where 
$C_0 = 4 \beta_2 (M_i + F_i + M_i F_{\max})$,  
$C_1 = Z_i$ for {\em solo arrivals} and $C_1 = K_i$ for {\em identical arrivals} and
$C_2 = 4 (M_i + F_i) M_i F_{\max} \beta_2 + 2 (3 + A^*) L D^{\alpha/2}$.}
\end{theorem}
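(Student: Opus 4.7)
The plan is to add up the regret decomposition $R_i(T) = \mathrm{E}[R^e_i(T)] + \mathrm{E}[R^s_i(T)] + \mathrm{E}[R^n_i(T)]$ and reorganize the resulting bounds from a time-indexed sum to a level-indexed sum over activated hypercubes. First, for the training/exploration regret, I would apply Lemma \ref{lemma:adapexplore} to each activated hypercube of learner $i$. Since the bound per level $l$ hypercube is $2 Z_i (2^{2\alpha l} \log T + 1)$ in the solo-arrival case (and the $K_i$ analogue under identical arrivals), summing over all activated hypercubes of each level yields
\begin{align*}
\mathrm{E}[R^e_i(T)] \leq 2 C_1 \sum_{l} K_{i,l}(T) 2^{2\alpha l}\log T + 2 C_1 \sum_{l} K_{i,l}(T),
\end{align*}
where the sum runs up to the maximum active level, which by Lemma \ref{lemma:levelbound} is at most $\rho^{-1}\log_2 T + 1$.

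Second, for $\mathrm{E}[R^s_i(T)]$ and $\mathrm{E}[R^n_i(T)]$, Lemmas \ref{lemma:suboptimal} and \ref{lemma:adapnearopt} both leave the bounds in the form of a time-indexed sum $\sum_{t=1}^T 2^{-\alpha l(p_i(t))}$, plus the constant terms $4(M_i+F_i)\beta_2$ and $4 M_i F_{\max} \beta_2$ whose sum matches $C_0$. The key step, and the main technical content of the theorem, is converting the time-indexed sum into a level-indexed sum. The idea is to group the time slots by which hypercube $p_i(t)$ equals, and use the activation rule of DCZA: a level $l$ hypercube of learner $i$'s partition is split (and hence deactivated) as soon as $N^i_p(t+1) \geq 2^{\rho l}$, so it receives at most $2^{\rho l}$ own-context arrivals during its active lifetime. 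Therefore the total number of own-context arrivals of learner $i$ falling into level $l$ sets by time $T$ is at most $K_{i,l}(T)\, 2^{\rho l}$, and hence
\begin{align*}
\sum_{t=1}^T 2^{-\alpha l(p_i(t))} \leq \sum_{l=0}^{(\log_2 T)/\rho + 1} K_{i,l}(T)\, 2^{\rho l}\cdot 2^{-\alpha l} = \sum_{l} K_{i,l}(T)\, 2^{(\rho - \alpha) l}.
\end{align*}
Multiplying this by the prefactor $4(M_i+F_i)M_i F_{\max}\beta_2 + 2(3+A^*) L D^{\alpha/2} = C_2$ from Lemmas \ref{lemma:suboptimal} and \ref{lemma:adapnearopt} produces the middle term of the theorem.

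The final step is to collect the three bounds, add the constants into $C_0 = 4\beta_2(M_i + F_i + M_i F_{\max})$, and truncate the outer sums at $l = \lfloor (\log_2 T)/\rho\rfloor + 1$ via Lemma \ref{lemma:levelbound}. The anticipated obstacle is not an inequality chain but rather the bookkeeping: one must carefully justify that the activation threshold $2^{\rho l}$ on $N^i_p(t)$ applies uniformly to all own-context arrivals counted by $p_i(t)$ (as opposed to the distinct counter $N^i_{f,p}$ used for arms), and that a single hypercube being split contributes only one summand to $K_{i,l}(T)$ so no hypercube is double-counted. Once this grouping is clean, the theorem follows by direct addition of the three bounded terms.
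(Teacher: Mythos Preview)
Your proposal is correct and follows exactly the paper's approach: the paper's proof is the single sentence ``The result follows from summing the results of Lemmas 6, 7 and 8 and using Lemma 5,'' and you have simply spelled out that summation, including the key bookkeeping step of converting $\sum_{t=1}^T 2^{-\alpha l(p_i(t))}$ into $\sum_l K_{i,l}(T)\,2^{(\rho-\alpha)l}$ via the $2^{\rho l}$ activation threshold. Your identification of the constants $C_0$, $C_1$, $C_2$ with the corresponding pieces of Lemmas~\ref{lemma:adapexplore}, \ref{lemma:suboptimal}, and \ref{lemma:adapnearopt} is accurate.
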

\begin{proof}
The result follows from summing the results of Lemmas 6, 7 and 8 and using Lemma 5. 
\end{proof}

Although the result in Theorem \ref{thm:adaptivemain} bounds the regret of DCZA for an arbitrary context arrival process in terms of $K_{i,l}(T)$'s, it is possible to obtain context arrival process independent regret bounds by considering the worst-case context arrivals. The next corollary shows that the worst-case regret bound of DCZA matches with the worst-case regret bound of CLUP derived in Theorem 1.

\begin{corollary}\label{corr:DCZAworstcase}
\ntsp{Consider all learners running DCZA with parameters $\rho= 3\alpha$, $D_1(p,t) = D_3(p,t) = 2^{2 \alpha l(p)} \log t$ and $D_2(p,t) = F_{\max} 2^{2 \alpha l(p)} \log t$. Then, the worst-case regret of learner $i$ is bounded by 
\begin{align}
R_i(T) & \leq 2^{2 (D+2\alpha) } (2 C_1 \log T + C_2 ) T^{\frac{2\alpha+D}{3\alpha+D}}     \notag \\
&+ 2 C_1 2^{2D} T^{\frac{D}{3\alpha+D}} 
+ C_0 \notag
\end{align}
where $C_0$, $C_1$ and $C_2$ are given in Theorem 2.}
\end{corollary}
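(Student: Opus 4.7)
The plan is to combine the context-arrival-independent bound from Theorem 2 with a worst-case bound on $K_{i,l}(T)$, the number of activated level-$l$ hypercubes. Fix $\rho = 3\alpha$. From Theorem 2, the regret is controlled by three weighted sums over levels, with per-level weights of order $2^{2\alpha l}\log T$, $2^{(\rho-\alpha)l} = 2^{2\alpha l}$, and $1$. So the only quantities still to bound are the $K_{i,l}(T)$'s and the range of $l$; the latter is already bounded by Lemma 5, giving $l \leq \rho^{-1}\log_2 T + 1$.

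Next I would establish two elementary upper bounds on $K_{i,l}(T)$. First, since level-$l$ hypercubes tile subsets of $[0,1]^D$, we have the purely geometric bound $K_{i,l}(T) \leq 2^{Dl}$. Second, a level-$l$ hypercube of learner $i$ can only become active once its parent level-$(l-1)$ hypercube has accumulated $2^{\rho(l-1)}$ own-arrivals, so at most $T/2^{\rho(l-1)}$ level-$l$ hypercubes can be activated by time $T$, giving $K_{i,l}(T) \leq 2^{\rho} T \, 2^{-\rho l}$. The idea is then to use the minimum of these two bounds, splitting the sum at the level $l^\star$ where they cross, namely $2^{Dl^\star} = 2^{\rho} T\, 2^{-\rho l^\star}$, i.e.\ $l^\star \approx (\log_2 T)/(D+\rho)$. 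With $\rho = 3\alpha$, this gives $2^{l^\star} \asymp T^{1/(3\alpha+D)}$, matching the scale $m_T$ used by CLUP in Theorem 1.

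I would then bound the two geometric-type sums separately. For levels $l \leq l^\star$, using $K_{i,l}(T)\leq 2^{Dl}$,
\begin{align*}
\sum_{l \leq l^\star} K_{i,l}(T)\, 2^{2\alpha l}
\;\leq\; \sum_{l \leq l^\star} 2^{(D+2\alpha) l}
\;\leq\; 2 \cdot 2^{(D+2\alpha) l^\star}
\;\asymp\; T^{\frac{D+2\alpha}{D+3\alpha}},
\end{align*}
since the sum is a geometric series dominated by its last term. For levels $l > l^\star$, using $K_{i,l}(T) \leq 2^\rho T\, 2^{-\rho l} = 2^{3\alpha} T\, 2^{-3\alpha l}$,
\begin{align*}
\sum_{l > l^\star} K_{i,l}(T)\, 2^{2\alpha l}
\;\leq\; 2^{3\alpha} T \sum_{l > l^\star} 2^{-\alpha l}
\;\asymp\; T\, 2^{-\alpha l^\star}
\;\asymp\; T^{\frac{D+2\alpha}{D+3\alpha}},
\end{align*}
again geometric (now decaying). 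The same split handles the unweighted sum $\sum_l K_{i,l}(T)$, which contributes only a $T^{D/(3\alpha+D)}$ term. Substituting these bounds (with the explicit factors of $2^{2(D+2\alpha)}$ that arise from summing the geometric series) into the three terms of Theorem 2 yields the stated inequality.

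The routine part is the geometric series bookkeeping; the only subtle point is verifying that the choice $\rho = 3\alpha$ actually balances the exploration cost (which grows like $2^{2\alpha l}$ per activated hypercube) against the zooming cost (activating a new hypercube requires $2^{\rho l}$ arrivals, so a larger $\rho$ means fewer hypercubes but more wasted arrivals before activation). I expect the main obstacle to be getting the constants in front of $T^{(2\alpha+D)/(3\alpha+D)}$ and $T^{D/(3\alpha+D)}$ to match $2^{2(D+2\alpha)}$ and $2^{2D}$ exactly; this just requires carefully tracking the geometric sum factor $2^{D+2\alpha}/(2^{D+2\alpha}-1)$ and bounding it by $2$ (hence the extra factor $2^{D+2\alpha}$ on top of $2^{(D+2\alpha)l^\star}$), together with the ceiling in Lemma 5 contributing one extra level.
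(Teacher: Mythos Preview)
Your approach is correct but takes a somewhat different route from the paper. The paper does not use two competing bounds on $K_{i,l}(T)$ and a splitting argument. Instead, it argues directly that the worst-case context arrival pattern is the one in which every level-$l$ hypercube is completely filled (and hence split) before any level-$(l+1)$ hypercube begins receiving contexts. Under that pattern, the total arrival budget $T$ must satisfy $\sum_{l=0}^{l_{\max}-1} 2^{Dl}\,2^{3\alpha l} < T$, which yields the sharper level cap $l_{\max} < 1 + (\log_2 T)/(D+3\alpha)$ --- essentially your $l^\star$ --- instead of Lemma~5's $(\log_2 T)/(3\alpha)+1$. With this $l_{\max}$ in hand, only the geometric bound $K_{i,l}(T)\le 2^{Dl}$ is needed, and all three sums from Theorem~2 become finite geometric series up to $l_{\max}$. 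The stated constants $2^{2(D+2\alpha)}$ and $2^{2D}$ then drop out from $2^{(D+2\alpha)(l_{\max}+1)}$ and $2^{D(l_{\max}+1)}$ after bounding the geometric-series denominator by $1$.

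Your two-bound splitting argument reaches the same $T^{(2\alpha+D)/(3\alpha+D)}$ order and is a perfectly valid alternative; it has the advantage of not requiring one to explicitly identify the worst-case arrival pattern, at the cost of slightly more bookkeeping. One small slip: your arrival-based bound $K_{i,l}(T)\le 2^{\rho}T\,2^{-\rho l}$ is missing a factor of $2^D$, since a single parent that accumulates $2^{\rho(l-1)}$ arrivals spawns $2^D$ level-$l$ children; this only affects constants. The paper's approach is shorter and delivers the exact constants in the statement more transparently.
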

\begin{proof}
Since hypercube $p$ remains active for at most $2^{\rho l(p)}$ context arrivals within that hypercube, combining the results of Lemmas 7 and 8, the expected loss in hypercube $p$ in exploitation slots is  at most $C_2 2^{(\rho-\alpha) l(p)}$, where $C_2$ is defined in Theorem 2. However, the expected loss in hypercube $p$ due to trainings and explorations is at least $C 2^{2 \alpha l(p)}$ for some constant $C>0$, and is at most $2 Z_i (2^{2\alpha l(p)} \log T +1)$ as given in Lemma 6. In order to balance the regret due to trainings and explorations with the regret incurred in exploitation within $p$ we set $\rho=3\alpha$. 

In the worst-case context arrivals, contexts arrive in a way that all level $l$ hypercubes are divided into level $l+1$ hypercubes before contexts start arriving to any of the level $l+1$ hypercubes. In this way, the number of hypercubes to train and explore is maximized. Let $l_{\max}$ be the hypercube with the maximum level that had at least one context arrival on or before $T$ in the worst-case context arrivals. We must have
\begin{align}
\sum_{l=0}^{l_{\max}-1} 2^{D l} 2^{3 \alpha l} < T .    \notag
\end{align}
Otherwise, no hypercube with level $l_{\max}$ will have a context arrival by time $T$. From the above equation we get $l_{\max} < 1 + (\log_2 T) / (D+ 3 \alpha)$. 
Thus, 
\begin{align}
R_i(T)  & \leq 2 C_1 \sum_{l=0}^{l_{\max}} 2^{D l} 2^{2 \alpha l} \log T   
+ C_2 \sum_{l=0}^{l_{\max}} 2^{D l} 2^{2 \alpha l} \notag \\
&+ 2 C_1 \sum_{l=0}^{l_{\max}}  2^{D l} + C_0 . \notag 
\end{align}
\end{proof}

\vspace{-0.1in}
\section{Discussion} \label{sec:discuss}

\subsection{Necessity of the Training Phase} \label{sec:necessity}

In this subsection, we prove that the training phase is necessary to achieve sublinear regret for the cooperative contextual bandit problem for algorithms of the type CLUP and DCZA (without the training phase) which use (i) exploration control functions of the form $C t^z \log t$, for constants $C>0$, $z>0$; (ii) form a finite partition of the context space; and (iii) use the sample mean estimator within each hypercube in the partition. We call this class of algorithms {\em Simple Separation of Exploration and Exploitation} (SSEE) algorithms.
In order to show this, we consider a special case of expected arm rewards and context arrivals and show that independent of the rate of explorations, the regret of an SSEE algorithm is linear in time for any exploration control function $D_i(t)$\footnote{Here $D_i(t)$ is the control function that controls when to explore or exploit the choices in ${\cal K}_i$ for learner $i$.} of the form $C t^z \log t$  for learner $i$ (exploration functions of learners can be different).
Although, our proof does not consider index-based learning algorithms, we think that similar to our construction in Theorem \ref{thm:necessitytrain}, problem instances which will give linear regret can be constructed for any type of index policy without the training phase. 

\begin{theorem}\label{thm:necessitytrain}
Without the training phase, the regret of any SSEE algorithm is linear in time.
\end{theorem}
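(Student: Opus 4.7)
Plan: I will construct a specific two-learner instance and show that every SSEE algorithm incurs linear regret on it. The mechanism is that without a training phase the caller's sample mean $\bar r^i_{j,p}(t)$ is pulled down by the rewards the callee returns during its own forced exploration, and so remains persistently below the true reward of the callee's optimal arm.

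Construction: take $M=2$ learners $i,j$, a single trivial hypercube $p=[0,1]^D$ (so the finite partition plays no role), zero costs, and the following arm structure. Learner $i$ has one own arm $g$ with deterministic reward $\pi_g\equiv\mu$ for some $\mu\in(1/2,1)$ to be fixed, and learner $j$ has two own arms with deterministic rewards $\pi_{f_1}\equiv 1$ and $\pi_{f_2}\equiv 0$. A context arrives at $i$ at every time slot while $j$ never receives an own context; the complete-knowledge optimum is to always call $j$ (who then plays $f_1$) and gain total reward~$T$.

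Main technical claim: on this instance, $\bar r^i_{j,p}(t)\le 1-\min(\tfrac12,\tfrac{D_j(t)}{D_i(t)})+o(1)$ for every SSEE algorithm, where $D_i(t)=C_i t^{z_i}\log t$ and $D_j(t)=C_j t^{z_j}\log t$. To prove the claim I would observe that $N^j_{f,p}(t)$ can grow only through $i$'s calls, and by definition of SSEE the counter $N^j_{f,p}$ is incremented only when it is at or below $D_j(t)$; since $D_j(t)$ is non-decreasing, an easy induction gives $N^j_{f,p}(t)\le D_j(t)+1$ for each $f$ at every time $t$ during which $j$ is still in its exploration phase in $p$. Hence, out of the first $n$ calls from $i$ to $j$ that occur while $j$ is still exploring, at most $D_j(t)+1$ can have returned reward $1$. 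Combined with the SSEE cap $N^i_{j,p}(t)\le D_i(t)+1$ on $i$'s exploration-phase calls to $j$, and with the fact that every post-exploration call to $j$ returns $1$, the inequality follows by summing and dividing.

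Case split and conclusion: fix $\mu=3/4$. Since $D_i(t)$ and $D_j(t)$ are both of the form $Ct^z\log t$, their ratio is eventually monotone, so for all sufficiently large $t$ exactly one of $D_i(t)\le 2D_j(t)$ or $D_j(t)\le 2D_i(t)$ holds. In the first case the claim yields $\bar r^i_{j,p}(t)\le 1/2+o(1)<\mu$, so in every exploitation slot learner $i$ selects $g$ and loses $1-\mu=1/4$; since the number of exploration slots is $o(T)$, this is $R_i(T)=\Theta(T)$. In the complementary case I apply the same construction with the roles of $i$ and $j$ swapped (interchange the one-arm and two-arm roles and the context-arrival pattern). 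Because Theorem~\ref{thm:necessitytrain} allows the instance to depend on the algorithm's parameters $(C_i,z_i,C_j,z_j)$, one of the two constructions always yields linear regret. The main obstacle I anticipate is making the deterministic bound on the reward sum during $j$'s exploration fully rigorous under any tie-breaking rule permitted by SSEE; the induction above only uses monotonicity of $D_j(t)$ and the defining ``counter exceeds $D_j(t)\Rightarrow$ disqualified from exploration'' rule, so it is robust to the tie-breaker, and the remaining probabilistic slack (that $N^i_{j,p}(t)$ indeed tracks $D_i(t)+1$ at the times being analysed) is routine Hoeffding bookkeeping.
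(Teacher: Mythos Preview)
Your overall strategy mirrors the paper's (two learners, one calls the other, the callee's forced exploration of a bad arm corrupts the caller's estimate), but the central inequality $\bar r^i_{j,p}(t)\le 1-\min(\tfrac12,D_j(t)/D_i(t))+o(1)$ does not follow from the lemma you state. You correctly prove that during $j$'s exploration the number of reward-$1$ returns is at most $D_j(t)+1$; combining with total calls $>D_i(t)$ at an exploitation slot, this yields only $\bar r^i_{j,p}(t)\lesssim D_j(t)/D_i(t)$, which is vacuous exactly in your Case~1. Concretely, take $D_i=D_j=:D$ and let $j$'s tie-break be ``among under-explored arms, pick the one with the larger running sample mean'' (a legitimate SSEE rule). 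With your deterministic rewards this means $j$ always prefers $f_1$. Under your own inductive hypothesis the $k$-th call to $j$ occurs at a time $\tau_k$ with $k\approx D(\tau_k)$, hence $N^j_{f_1,p}(\tau_k)=k-1\le D_j(\tau_k)$ and $j$ plays $f_1$ on \emph{every} call. Then $\bar r^i_{j,p}=1$, not $\le 1/2$; your upper bound on reward-$1$ returns is satisfied but useless because the total number of calls is also $\approx D_j$. Relabelling $f_1,f_2$ does not help against a tie-break that reads the sample means, and no choice of $\mu<1$ rescues the construction in this regime.

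The paper sidesteps this in two ways you should adopt. First, it uses \emph{random} (Bernoulli) rewards and runs a chain of conditional probabilities $\mathrm P(\xi_\tau\mid\xi_{\tau-1},\dots,\xi_1)\ge 1-1/\tau(t)^2$ (where $\xi_\tau$ is ``learner~1 exploits its own arm at its $\tau$-th exploitation''), whose product stays $\ge 1/2$; this is precisely the device that dissolves the circularity you acknowledge in your last sentence. Second, it tunes the reward gaps to the control-function ratio $K=D_1/D_2$ via $\pi_b+C_K\delta<\pi_m<\pi_g-\delta$ with $C_K>3K+3$, so that even a $1/K$ fraction of bad-arm samples is enough to drag $\hat\pi_2$ below $\hat\pi_m$ with high probability. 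Your deterministic-reward approach cannot replicate this tuning, because the sample mean is then an exact fraction and a single adversarial tie-break can drive the bad-arm fraction to zero.
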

\begin{proof}
We will construct a problem instance for which the statement of the theorem is valid. Assume that all costs $d^i_k$, $k \in {\cal K}_i$, $i \in {\cal M}$ are zero.
Let $M=2$. 
Consider a hypercube $p$.
We assume that at all time slots context $x^* \in p$ arrives to learner $1$, and all the contexts that are arriving to learner 2 are outside $p$.
Learner 1 has only a single arm $m$, learner 2 has two arms $b$ and $g$.
With an abuse of notation, we denote the expected reward of an arm $f \in \{m,b,g\}$ at context $x^*$ as $\pi_f$.
Assume that the arm rewards are drawn from $\{0,1\}$ and the following is true for expected arm rewards:
\begin{align}
\pi_b + C_K \delta <\pi_m < \pi_g - \delta < \pi_m + \delta \label{eqn:specform}
\end{align}
for some $\delta>0$, $C_K>0$,  where the value of $C_K$ will be specified later.
Assume that learner 1's exploration control function is $D_1(t) = t^z \log t$, and
learner 2's exploration control function is $D_2(t) = t^z \log t/K$ for some $K \geq 1$, $0<z<1$.\footnote{Given two control functions of the form $C_i t^z \log t$, $i \in \{1,2\}$, we can always normalize them such that one of them is $t^z \log t$ and the other one is $t^z \log t/K$, and then construct the problem instance that gives linear regret based on the normalized control functions.}

When we have $K=1$, when called by learner 1 in its explorations, learner 2 may always choose its suboptimal arm $b$ since it is under-explored for learner 2.
If this happens, then in exploitations learner 1 will almost always choose its own arm instead of learner 2, because it had estimated the accuracy of learner 2 for $x^*$ incorrectly because the random rewards in explorations of learner 2 came from $b$.
By letting $K \geq 1$, we also consider cases where only a fraction of reward samples of learner 2 for learner 1 comes from the suboptimal arm $b$. We will show that for any value of $K \geq 1$, there exists a problem instance of the form given in (\ref{eqn:specform})
such that learner 1's regret is linear in time.
Let $E_t$ be the event that time $t$ is an exploitation slot for learner $1$.
Let $\hat{\pi}_m(t), \hat{\pi}_2(t) $ be the sample mean reward of arm $m$ and learner $2$ for learner $1$ at time $t$ respectively.
Let $\xi_{\tau}$ be the event that learner 1 exploits for the $\tau$th time by choosing its own arm. Denote the time of the $\tau$th exploitation of learner 1 by $\tau(t)$.
We will show that for any finite $\tau$, $\mathrm{P} (\xi_\tau, \ldots, \xi_1) \geq 1/2$. 
We have by the chain rule
\begin{align}
\mathrm{P} (\xi_\tau, \ldots, \xi_1) &= \mathrm{P} (\xi_\tau | \xi_{\tau-1}, \ldots, \xi_1) \textrm{Pr}(\xi_{\tau-1}| \xi_{\tau-2}, \ldots, \xi_1) \notag \\
 &\ldots \mathrm{P} (\xi_1). \label{eqn:boundedtau}
\end{align}

We will continue by bounding $\mathrm{P} (\xi_\tau | \xi_{\tau-1}, \ldots, \xi_1)$.
When the event $E_{\tau(t)} \cap \xi_{\tau-1} \cap \ldots \cap \xi_1$ happens, we know that at least $\lceil \tau(t)^z \log \tau(t)/K \rceil$ of $\lceil \tau(t)^z \log \tau(t) \rceil$ reward samples of learner 2 for learner 1 comes from $b$.
Let
$A_t := \{ \hat{\pi}_m(t) > \pi_m - \epsilon_1 \}$,
$B_t := \{ \hat{\pi}_2(t) < \pi_g - \epsilon_2 \}$
and $C_t := \{ \hat{\pi}_2(t) <  \hat{\pi}_m(t)   \}$, for $\epsilon_1>0, \epsilon_2>0$.
Given $\epsilon_2 \geq \epsilon_1 + 2 \delta$, we have $(A_t \cap B_t) \subset C_t$.
Consider the event $\{A_t^C, E_t \}$. Since on $E_t$, learner 1 selected $m$ at least $t^z \log t$ times (given that $z$ is large enough such that the reward estimate of learner 1's own arm is accurate), we have $\mathrm{P}(A_t^C, E_t) \leq 1/(2 t^2)$, using a Chernoff bound. Let $N_g(t)$ ($N_b(t)$) be the number of times learner $2$ has chosen arm $g$ ($b$) when called by learner 1 by time $t$. Let $r_g(t)$ ($r_b(t)$) be the random reward of arm $g$ ($b$) when it is chosen for the $t$th time by learner 2.
For $\eta_1>0$, $\eta_2>0$, let $Z_1(t) := \{ (\sum_{t'=1}^{N_g(t)} r_g(t')) /N_g(t) < \pi_g + \eta_1 \}$ and $Z_2(t) := \{ (\sum_{t'=1}^{N_b(t) } r_b(t')) /N_b(t) < \pi_b + \eta_2 \}$.
On the event $E_{\tau(t)} \cap \xi_{\tau-1} \cap \ldots \cap \xi_1$, we have $N_g(\tau(t)) / N_b(\tau(t)) \leq K$.
Since $\hat{\pi}_2(t) = \left( \sum_{t'=1}^{N_b(t) } r_b(t') + \sum_{t'=1}^{N_g(t)} r_g(t')  \right)/(N_b(t)  + N_g(t) )$,
We have
\begin{align}
& Z_1(t) \cap Z_2(t) \notag \\
&\hspace{-0.2in}  \Rightarrow \hat{\pi}_2(t) < \frac{N_g(t) \pi_g + N_b(t) \pi_b + \eta_1 N_g(t) + \eta_2 N_b(t)}{N_b(t) + N_g(t)}. \label{eqn:bound}
\end{align}
If
\begin{align}
& \pi_g - \pi_b > \frac{N_g(t) }{N_b(t)} (\eta_1 + \epsilon_2) + (\eta_2 + \epsilon_2) \label{eqn:ngt}
\end{align}
then, it can be shown that the right hand side of (\ref{eqn:bound}) is less than $\pi_g - \epsilon_2$.
Thus given that (\ref{eqn:ngt}) holds, we have $Z_1(t) \cap Z_2(t) \subset B_t$.
But on the event $E_{\tau(t)} \cap \xi_{\tau-1} \cap \ldots \cap \xi_1$, (\ref{eqn:ngt}) holds at $\tau(t)$ when
$\pi_g - \pi_b > K (\eta_1 + \epsilon_2) + (\eta_2 + \epsilon_2)$.    
Note that if we take $\epsilon_1 = \eta_1 = \eta_2 = \delta/2$, and $\epsilon_2 = \epsilon_1 + 2 \delta = 5\delta/2$ the statement above holds for a problem instance with $C_K > 3K +3$. 
Since at any exploitation slot $t$, at least $\lceil t^z \log t /K \rceil$ samples are taken by learner 2 from both arms $b$ and $g$, we have $\mathrm{P} (Z_1(\tau(t))^C) \leq 1/(4\tau(t)^2)$ and $\mathrm{P} (Z_2(\tau(t))^C) \leq 1/(4\tau(t)^2)$ by a Chernoff bound (again for $z$ large enough as in the proofs of Theorems 1 and 2).
Thus $\mathrm{P} (B_{\tau(t)})^C \leq \mathrm{P} (Z_1(\tau(t))^C) + \mathrm{P} (Z_2(\tau(t))^C) \leq 1/(2\tau(t)^2)$.
Hence $\mathrm{P} (C_{\tau(t)}^C) \leq \mathrm{P} (A_{\tau(t)}^C)  + \mathrm{P} (B_{\tau(t)}^C) \leq  1/(\tau(t)^2)$, and $\mathrm{P} (C_{\tau(t)}) > 1-1/(\tau(t)^2)$.
Continuing from (\ref{eqn:boundedtau}), we have 
\begin{align}
 \mathrm{P} (\xi_\tau, \ldots, \xi_1) &= \left( 1 - 1/(\tau(t)^2)\right)   \left( 1 - 1/((\tau-1)(t)^2)\right) \notag \\
 &  \ldots 
 \left( 1 - 1/((1)(t)^2)\right)     \notag \\
 & \geq \Pi_{t'=2}^{\tau(t)} \left( 1 - 1/(t')^2\right) > 1/2 
\end{align}
for all $\tau$.
This result implies that with probability greater than one half, learner 1 chooses its own arm at all of its exploitation slots, resulting in an expected per-slot regret of $\pi_g - \pi_m >\delta$. Hence the regret is linear in time.
\end{proof}

\vspace{-0.1in}
\subsection{Comparison of CLUP and DCZA}

In this subsection we assess the computation and memory requirements of DCZA and compare it with CLUP. DCZA needs to keep the sample mean reward estimates of \tsp{$K_i$} choices for each active hypercube. A level $l$ active hypercube becomes inactive if the context arrivals to that hypercube exceeds $2^{\rho l}$.
Because of this, the number of active hypercubes at any time $T$ may be much smaller than the number of activated hypercubes by time $T$.
In the best-case, only one level $l$ hypercube experiences context arrivals, then when that hypercube is divided into level $l+1$ hypercubes, only one of these hypercubes experiences context arrivals and so on. In this case, DCZA run with $\rho=3\alpha$ creates at most $1 +  (\log_2 T)/(3\alpha)$ hypercubes (using Lemma 5). In the worst-case (given in Corollary 1), DCZA creates at most 
$2^{2D} T^{D/(3\alpha + D)}$ hypercubes. 
Recall that for any $D$ and $\alpha$, the number of hypercubes of CLUP creates is $O( T^{D/(3\alpha+D)})$. Hence, in practice the memory requirement of DCZA can be much smaller than CLUP which requires to keep the estimates for every hypercube at all times. 
Finally DCZA does not require final time $T$ as in input while CLUP requires it.
Although CLUP can be combined with the doubling trick to make it independent of $T$, this makes the constants that multiply the time order of the regret large.

\comment{
\begin{table}[t]
\centering
{\fontsize{8}{8}\selectfont
\begin{tabular}{|l|c|c|c|}
\hline
Case & CLUP & DCZA \\
\hline
\textbf{C1} & $\tilde{O} \left(M F_{\max} T^{\frac{2\alpha+D}{3\alpha+D}} \right)$ & $\tilde{O} \left(M F_{\max} T^{f_1(\alpha,D)}\right)$ \\
\hline
\textbf{C2} & $\tilde{O} \left( K_i T^{\frac{2\alpha+D}{3\alpha+D}} \right)$ & $\tilde{O}\left( K_i T^{f_1(\alpha,D)}\right)$ \\
\hline
\textbf{C3} & $\tilde{O} \left(M F_{\max}  T^{\frac{2\alpha}{3\alpha+D}} \right)$ & $\tilde{O}\left(M F_{\max} T^{2/3} \right)$ \\
\hline
\textbf{C4} & $\tilde{O} \left( K_i T^{\frac{2\alpha}{3\alpha+D}} \right)$ &  $\tilde{O}\left( K_i T^{2/3} \right)$ \\
\hline
\end{tabular}
}
\caption{Comparison of Regrets of CLUP and DCZA}
\vspace{-0.3in}
\label{tab:compregret}
\end{table}
}

\add{
\begin{table}[t]
\centering
{\fontsize{7}{6}\selectfont
\setlength{\tabcolsep}{1em}
\begin{tabular}{|l|c|c|}
\hline
& CLUP & DDZA \\
\hline
worst arrival & $O \left(M F_{\max} T^{\frac{2\alpha+d}{3\alpha+d}} \right)$ & $O \left(M F_{\max} T^{f_1(\alpha,d)}\right)$  \\
and correlation& & \\
\hline
worst arrival, & $O \left(|{\cal K}_i| T^{\frac{2\alpha+d}{3\alpha+d}} \right)$ & $O\left( K_i T^{f_1(\alpha,d)}\right)$  \\
best correlation & & \\
\hline
best arrival,& $O \left(M F_{\max}  T^{\frac{2\alpha}{3\alpha+d}} \right)$ & $O\left(M F_{\max} T^{2/3} \right)$ \\
worst correlation&& \\
\hline
best arrival & $O \left(|{\cal K}_i| T^{\frac{2\alpha}{3\alpha+d}} \right)$ & $O\left(|{\cal K}_i| T^{2/3} \right)$ \\
and correlation & & \\
\hline
\end{tabular}
}
\caption{Comparison CLUP and DDZA}
\label{tab:compregret}
\vspace{-0.4in}
\end{table}
}
\comment{
\subsection{Exploration reduction by increasing memory} \label{sec:reduction}

Whenever a new level $l$ hypercube is activated at time $t$, DCZA spends at least $O(t^z \log t)$ time slots to explore the arms in that hypercube. The actual number of explorations can be reduced by increasing the memory of DCZA. Each active level $l$ hypercube splits into $2^d$ level $l+1$ hypercubes when the number of arrivals to that hypercube exceeds $A 2^{pl}$. Let the level $l+1$ hypercubes formed by splitting of a level $l$ hypercube called {\em child} hypercubes. The idea is to keep $2^d$ sample mean estimates for each arm in each active level $l$ hypercube corresponding to its child level $l+1$ hypercubes, and to use the average of these sample means to exploit an arm when the level $l$ hypercube is active. Based on the arrival process to level $l$ hypercube, all level $l+1$ child hypercubes may have been explored more than $O(t^z \log t)$ times when they are activated. In the worst case, this guarantees that at least one level $l+1$ child hypercube is explored at least $A 2^{pl-d}$ times before being activated. The memory requirement of this modification is $2^d$ times the memory requirement of original DCZA, so in practice this modification is useful for $d$ small.  
}
\comment{
\subsection{Context to capture concept drift}

An important application in data classification systems is to capture the changes in the trends. Formally, concept is the whole distribution of the problem in a certain point of time \cite{narasimhamurthy2007framework}. Concept drift is a change in the distribution the problem \cite{gama2004learning, gao2007appropriate}. Examples of concept drift include recommender systems where the interests of users change over time and network security applications where the incoming and outgoing traffic patterns vary depending on the time of the day.

Researchers have categorized concept drift according to the properties of the drift. Two important metrics are the {\em severity} and the {\em speed} of the drift given in \cite{minku2010impact}. The severity is the amount of changes that the new concept causes, while the {\em speed} of a drift is how fast the new concept takes place of the old concept. 
Both of these categories can be captured by our contextual data mining framework. Given a final time $T$, let $x_t = t/T$ be the context. Thus $x_t \in [0,1]$ always. Then the Lipschitz condition given in Assumption \ref{ass:lipschitz2} can be rewritten as
\begin{align*}
|\pi_k(t) - \pi_k(t')| \leq \frac{L|t-t'|^\alpha}{T^\alpha}.
\end{align*} 
Here $L$ captures the severity while $\alpha$ captures the speed of the drift. Previous work on concept drift categorized the drift according to its severity, speed and several other metrics. Our distributed learning algorithms CLUP and DCZA can both be used to address concept drift, and provide sublinear convergence rate to the optimal classification scheme, given by the results of Theorems \ref{theorem:CLUP} and \ref{thm:adaptivemain}, for $d=1$. 

Most of these work focused on incremental and online ensemble learning techniques with the goal of characterizing the advantage of ensemble diversity under concept drift. Some considered algorithms with drift detection mechanism \cite{baena2006early, minku2012ddd} while some others designed algorithms without a drift detection mechanism \cite{stanley2003learning, kolter2007dynamic}. A drift detection mechanism usually resets the learned parameters when a drift is detected and starts with a new set of parameters, while algorithms without a drift detection mechanism changes weights associated with the learners based on the change in the reward when a drift occurs.

However, to the best of our knowledge all the previous methods are develop in an ad-hoc basis with no provable performance guarantees. In this subsection, we showed how our distributed contextual learning framework can be used to obtain regret bounds for classification under concept drift. Our learning framework can be extended to ensemble learning by jointly updating the sample mean accuracies of classification functions and the weights of the ensemble learner. Although, the results from combinatorial bandit problems \cite{gai2012combinatorial} can be utilized to solve the online distributed contextual ensemble learning problem, design of an algorithm that jointly optimizes the accuracies and weights with sublinear regret with respect to the best ensemble learner is a complicated problem which is out of the scope of this paper. We aim to address this in our future work.

Two other important criteria to categorize concept drifts is {\em recurrence} \cite{narasimhamurthy2007framework} and {\em frequency} \cite{minku2010impact}. If the drift is recurrent, an old concept can replace the current concept at a future time. If the drift has a frequency $1/\tau$, then concept changes occur at every $\tau$ time slots. Moreover concept itself can also be recurrent, such as in rain forecast, where concepts have a high correlation with the month of the year. Both recurrence and frequency can be captured by our online learning framework. 
For example, assume that the concept is recurrent with period $\tau>0$. Knowing this, learner $i$ can run DCZA with context $x_t = t \mod \tau$. Therefore, the regret of bound of Theorem \ref{thm:adaptivemain} will hold with $d=1$. 
}

\comment{
\subsection{Comparison with a centralized contextual bandit algorithm}

A zooming algorithm for the centralized contextual bandits is proposed in \cite{slivkins2009contextual}, which is called {\em contextual zooming algorithm} (CZA). This algorithm creates balls over the similarity space, which is the product of arm and context spaces, and at each time slot chooses an active ball in which the context lies. Similar to DCZA when the number of arrivals to a ball exceeds some number, a new child ball with half the radius of the original ball is created, centered at the most recent context. The authors provide $O(T^{(1+d_c)/(2+d_c)} \log T)$ upper bound on regret, where $d_c$ is a special number that is called the {\em zooming} dimension of the problem. Usually $d_c$ is smaller than the Euclidian dimension $d$. The dimension $d_c$ is computed by taking into account {\em benign} context arrivals and {\em benign} expected payoffs.
\rev{Here benign context arrivals means that the context arrivals which occur more frequently than others, and benign expected payoffs means that the payoffs of arms which are significantly higher than the payoffs of the other arms.}
For the worst-case, $d_c$ is equal to the dimension of the similarity space, which is $d+1$ for our case. Therefore the worst-case context arrival regret of CZA is $O(T^{(2+d)/(3+d)} \log T)$ which is equal to the regret of CLUP for $\alpha=1$. While in the best-case context arrival, since for our problem dimension of the arm space is $1$, we have $d_c>1$, hence the best-case regret of CZA is greater than $O(T^{2/3}) \log T$ which is worse than DCZA. CZA cannot be applied to our distributed learning setting, because the created balls are centered around the last context arrival, whereas the position of the hypercubes in DCZA does not depend on the context arrival process. learner $i$, who have a level $l$ active hypercube that contains $x_t$ can safely call any learner $k$ which has a level $l' \geq l$ hypercube that contains $x_t$, for any future context which lies in the same level $l$ hypercube. Moreover checking which hypercube the context belongs is much simpler in DCZA than in CZA since we only need to compare the level of the active hypercubes that contains the coordinate $x_t$, while in CZA, the regions covered by each active ball should be scanned to find to which ball $x_t$ belongs. 
}

\comment{
\subsection{Comparison of separation of exploration and exploitation with index-based algorithms}

Our algorithms separate training, exploration and exploitation into different time slots.
We have proved in Theorem \ref{thm:necessitytrain} that the separation of training, hence not using the rewards obtained in training to update accuracy estimates, is necessary to achieve sublinear regret. 
However separation of exploration and exploitation into different time slots is not necessary for our results to hold. 
Indeed, in most of the previous work on multi-armed bandit problems \cite{auer} and contextual bandits \cite{kleinberg2008multi, bubeck2011x, slivkins2009contextual, dudik2011efficient, langford2007epoch, chu2011contextual}, index-based algorithms are used to balance exploration and exploitation. 
These algorithms assign an index to each arm, based only on the history of observations of that arm, and choose the arm with the highest index at each time slot. 
The index of an arm is usually given as a sum of two terms. The first term is the sample mean of the rewards obtained from the arm and the second term is an {\em inflation} term, which increases with the learner's relative uncertainty about the reward of the arm.
When all arms are observed sufficiently many times, the dominant term becomes the sample mean, hence at such time slots the arm with the highest estimated reward is selected. 

Lets call the algorithms which separates exploration and exploitation the {\em separation algorithms}.
Previously we have considered separation algorithms for learning in multi-user multi-armed bandits \cite{tekin2012sequencing}, and showed that they can achieve the same time order of regret as index-based algorithms. 
To the best of our knowledge separation algorithms have not been applied to learning in contextual bandits before. 

There are several (practical) advantages of separation algorithms compared to index-based algorithms.
Using separation algorithms, a learner can explicitly decide when to explore or exploit, which can be very beneficial in a variety of applications. 
For example, one application of cooperative contextual bandits is stream mining with active learning \cite{settles2010active}, in which a learner equipped with a set of classifiers (arms), with unknown accuracies (expected rewards) wants to learn which classifier to select to classify its data stream. 
The random reward is equal to $1$ if the prediction is correct and $0$ otherwise. 
However, checking whether the prediction is correct or not requires obtaining the label (truth) which may not be always possible or which may be very costly. 
For instance, if classifiers are labeling images, then the true label of the image can only be given with the help of a human expert, and calling such an expert is costly to the learner.
In this case, the learner must minimize the number of times it obtains the label while almost always choosing the optimal classifier in exploitations. 
Using separation algorithms, the learner can maximize its total reward without even observing the rewards in exploitation slots.
Index-based algorithms does not give such a flexibility to the learner. 

Another problem where separation algorithms can be useful is stream mining with varying error tolerance. 
The data instance that comes to learner $i$ at time $t$ can have an error tolerance parameter $\textrm{err}_i(t)$ in addition to its context $x_i(t)$.
For an instance with $\textrm{err}_i(t) =1$ the cost of mis-classification can be very high, while for an instance with $\textrm{err}_i(t) =0$ it can be low.
Then, ideally the learner will try to shift its explorations to instances with $\textrm{err}_i(t) =0$. 
Of course, our regret bounds may not directly hold for such a case, and any regret bound is expected to depend on the arrival distribution of instances with different types of error tolerance.
We leave this analysis as a future research topic.  
}

\comment{
\subsection{Non-deterministic control functions}

CLUP and DCZA uses deterministic control functions to select the phase of a learner. 
This learner $i$ ensures that all of its arms have been explored at least $D_1(t)$ times, and all of the other learners have been explored at least $D_3(t)$ times. 
Intuitively, the number of explorations can be reduced if a learner's exploration rate depends on the estimated suboptimality of a choice. 
Let $a^*_i(t) := \argmax_{k \in {\cal K}_i} ( \bar{r}^i_{k,p_i(t)}(t) - d^i_k)$ be the {\em estimated optimal} choice at time $t$. 
Estimated suboptimality of choice $k \in {\cal K}_i - a^*_i(t)$ is defined as 
\begin{align*}
\Delta^i_{k}(t) := ( \bar{r}^i_{k,p_i(t)}(t) - d^i_k) - ( \bar{r}^i_{a^*_i(t),p_i(t)}(t) - d^i_{a^*_i(t)}).
\end{align*} 
Then, one example of a set of non-deterministic control functions is $D^i_{k, p_i(t)} = t^z \log t/ (\Delta^i_{k}(t) )$ for $k \in {\cal K}_i - a^*_i(t)$ and $D^i_{a^*_i(t), p_i(t)} = t^z \log t$.
CLUP and DCZA can compute the set of under-explored choices based on comparing their counters with these control functions.
Whether the time order of the control function, i.e., $z$ could be made smaller than the value in Theorems \ref{theorem:cos} and \ref{thm:adaptivemain} are open questions that is out of the scope of this paper. 
Note that even though it is possible to reduce the number of explorations by using non-deterministic control functions, using non-deterministic control functions to control training appears to be more complicated because the rewards coming from a learner $j$ in the training phase of learner $i$ are not reliable to form estimates about $j$'s reward.
}


\rmv{
\subsection{Cooperation among the learners}
In our analysis we assumed that learner $i$ can call any other learner $k \in {\cal M}_{-i}$ with a CLUPt $d_k$, and $k \in {\cal M}_{-i}$ always sends back its prediction in the same time slot. However, learner $k$ also has to classify its own data stream thus it may not be possible for it to classify $i$s data without delay. We considered the effect of a bounded delay in Corollary \ref{cor:uniform}. 
We note that there is also a CLUPt for learner $k$ associated with communicating with learner $i$, but it is small since learner $k$ only needs to send $i$ its prediction but not the data. However, even tough learner $k$ does not have immediate benefit from classifying $i$'s data, similar to $i$ it can use other learners to increase its prediction accuracy minus classification CLUPt. In this paper we assumed that the learners are cooperative and there are no strategic interactions. Each learner should classify another learner's data when requested. This does not affect the optimal policy we derive for learner $i$ since the requests of other learner's is independent of actions of $i$.
}
%
\comment{
\vspace{-0.1in}
\subsection{Context dependent CLUPts}

In our analysis we assumed that classification CLUPt $d_k$ is constant and does not depend on the context. In general this CLUPt can be context or data dependent. 
For example, consider a network and two arms $l$ and $k$ for which $d_k >> d_l$ but $0<\pi_k(x) - \pi_l(x)<<1$. 
The network can go down when attacked at a specific context $x'$, thus, the expected loss $g_l(x')$ for arm $l$ can be much higher than the expected loss $g_k(x')$ for arm $k$. Then, in the optimal solution, arm $k$ will be chosen instead of arm $l$ even though $d_k >> d_l$.
Our results will hold for any general reward function $g_k(x)$ for which Assumption \ref{ass:lipschitz2} holds. 
}
\vspace{-0.1in}
\section{Conclusion} \label{sec:conc}
In this paper we proposed a novel framework for
decentralized, online learning by many learners.
We developed two novel online learning algorithms for this problem and proved sublinear regret results for our algorithms. We discussed some implementation issues such as complexity and the memory requirement under different \newton{instance and context arrivals}.
Our theoretical framework can be applied to many practical settings including distributed online learning in Big Data mining, recommendation systems and surveillance applications.
Cooperative contextual bandits opens a new research direction in online learning and raises many interesting questions:
What are the lower bounds on the regret? Is there a gap in the time order of the lower bound compared to centralized contextual bandits due to informational asymmetries? 
Can regret bounds be proved when cost of calling learner $j$ is controlled by learner $j$? In other words, what happens when a learner wants to maximize both the total reward from its own contexts and the total reward from the calls of other learners.

\rmv{An interesting research direction is to see the performance of CoS when combined with ensemble learning approaches. This will increase the communication and computation costs of the learners but the improvement in classification accuracy can be large compared to existing online ensemble learning methods such as SWA.}
\vspace{-0.1in}

\remove{
 \appendices
 \section{A bound on divergent series} \label{app:seriesbound}
 For $\rho>0$, $\rho \neq 1$,
%
$\sum_{t=1}^{T} 1/(t^\rho) \leq 1 + (T^{1-\rho} -1)/(1-\rho)$.
%
\begin{proof}
See \cite{chlebus2009approximate}.
\end{proof}
 \vspace{-0.1in}
 \section{Frequently used expressions} \label{app:notation}
 
\textbf{Mathematical operators}
\begin{itemize}
\item $O(\cdot)$: Big O notation.
\item $\tilde{O}(\cdot)$: Big O notation with logarithmic terms hidden.
\item $\mathrm{I}(A)$: indicator function of event $A$. 
\item $A^c$ or $A^C$: complement of set $A$.
\end{itemize}

\textbf{Notation related to underlying system}
\begin{itemize}
\item ${\cal M}$: Set of learners. $M=|{\cal M}|$.
\item ${\cal F}_i$: Set of arms of learner $i$. $F_i=|{\cal F}_i|$.
\item ${\cal M}_{-i}$: Set of learners except $i$. $M_i=|{\cal M}_{-i}|$.
\item ${\cal K}_i$: Set of choices of learner $i$. $K_i=|{\cal K}_i|$.
\item ${\cal F}$: Set of all arms.
\item ${\cal X} = [0,1]^D$: Context space.
\item $D$: Dimension of the context space.
\item $\pi_f(x)$: Expected reward of arm $f \in {\cal F}$ for context $x$.
\item $\pi_j(x)$: Expected reward of learner $j$'s best arm for context $x$.
\item $d^i_k$: Cost of selecting choice $k \in {\cal K}_i$ for learner $i$.
\item $\mu^i_k(x) = \pi_k(x) - d^i_k$: Expected net reward of learner $i$ from choice $k$ for context $x$.
\item $k^*_i(x)$: Best choice (highest expected net reward) for learner $i$ for context $x$.
\item $f^*_i(x)$: Best arm (highest expected reward) of learner $j$ for context $x$.
\item $L$: H\"{o}lder constant. $\alpha$: H\"{o}lder exponent.
\end{itemize}

\textbf{Notation related to algorithms}
\begin{itemize}
\item $D_1(t), D_2(t), D_3(t)$: Control functions.
\item $p$: Index for set of contexts (hypercube).
\item $m_T$: Number of slices for each dimension of the context for CLUP.
\item ${\cal P}_T$: Partition of ${\cal X}$ for CLUP.
\item ${\cal P}_i(t)$: Learner $i$'s adaptive partition of ${\cal X}$ at time $t$ for DCZA.
\item ${\cal P}(t)$: Union of partitions of ${\cal X}$ of all learners for DCZA.
\item $p_i(t)$: The set in ${\cal P}_i(t)$ that contains $x_i(t)$.
\item ${\cal M}^{\textrm{uc}}_{i,p}(t)$: Set of learners who are training candidates of learner $i$ at time $t$ for set $p$ of learner $i$'s partition.
\item ${\cal M}^{\textrm{ut}}_{i,p}(t)$: Set of learners who are under-trained by learner $i$ at time $t$ for set $p$ of learner $i$'s partition.
\item ${\cal M}^{\textrm{ue}}_{i,p}(t)$: Set of learners who are under-explored by learner $i$ at time $t$ for set $p$ of learner $i$'s partition.
\item ${\cal M}^{\textrm{uc}}_{i,p}(t)$: Set of learners who are training candidates of learner $i$ at time $t$ for set $p$ of learner $i$'s partition.
\end{itemize}
}
\vspace{-0.1in}
\bibliographystyle{IEEE}
\bibliography{OSA}


\end{document}